\newtheorem{proposition}{Proposition}[chapter]
\newcommand{\remove}[1]{}
\newenvironment{proof}{\noindent{\bf Proof:} \hspace*{1mm}}{\hfill $\Box$ }
\newcommand{\notes}[1]{}
\newcommand{\first}[1]{$1^{\mathrm{st}}$}
\newcommand{\second}[1]{$2^{\mathrm{nd}}$}
\newcommand{\squishlisttwo}{
\begin{list}{$\blacktriangleright$}
{ \setlength{\itemsep}{0.5pt}
\setlength{\parsep}{0pt}
\setlength{\topsep}{0pt}
\setlength{\partopsep}{0.5pt}
\setlength{\leftmargin}{1em}
\setlength{\labelwidth}{1em}
\setlength{\labelsep}{0.5em} } }
\newcommand{\squishend}{
\end{list} }
\newcommand{\blankpage}{
\newpage
\thispagestyle{empty}
\mbox{}
\newpage
}
\crefname{observation}{observation}{observations}
\crefname{algorithm}{algorithm}{algorithms}
\crefname{align}{equation}{equations}
\crefname{eqnarray}{equation}{equations}
\newcommand{\ostar}{\mathbin{\mathpalette\make@circled\star}}
\newcommand{\oast}{\mathbin{\mathpalette\make@circled\ast}}
\newcommand{\make@circled}[2]{%
	\ooalign{$\m@th#1\smallbigcirc{#1}$\cr\hidewidth$\m@th#1#2$\hidewidth\cr}
}
\newcommand{\smallbigcirc}[1]{%
	\vcenter{\hbox{\scalebox{0.77778}{$\m@th#1\bigcirc$}}}%
}
\begin{document}

\newcommand{\ttitle}{Neural Graph Embedding Methods for \\ Natural Language Processing}
\newcommand{\name}{Shikhar Vashishth}
\newcommand{\partOne}{Addressing Sparsity in Knowledge Graphs}
\newcommand{\partTwo}{Exploiting Graph Convolutional Networks in NLP}
\newcommand{\partThree}{GNN Enhancements}

\newcommand{\m}[1]{\CMcal{#1}}

\newcommand{\bmm}[1]{\bm{\CMcal{#1}}}
\newcommand{\real}[1]{\mathbb{R}^{#1}}

\newcommand{\reminder}[1]{\textcolor{red}{[[ #1 ]]}\typeout{#1}}
\newcommand{\reminderR}[1]{\textcolor{gray}{[[ #1 ]]}\typeout{#1}}

\newcommand{\wordgcnMethod}{SynGCN}
\newcommand{\wordgcnMethodSide}{SemGCN}

\newcommand{\datafb}{FB15k}
\newcommand{\datawn}{WN18}
\newcommand{\datafbn}{FB15k-237}
\newcommand{\datawnn}{WN18RR}
\newcommand{\datayago}{YAGO3-10}

\newcommand{\emb}[1]{\ensuremath{\bm{e}_{#1}}}
\newcommand{\remb}[1]{\ensuremath{\bm{#1}_{r}}}

\newcommand{\refalg}[1]{Algorithm \ref{#1}}
\newcommand{\refeqn}[1]{Equation \ref{#1}}
\newcommand{\reffig}[1]{Figure \ref{#1}}
\newcommand{\reftbl}[1]{Table \ref{#1}}
\newcommand{\refsec}[1]{Section \ref{#1}}

\newcommand*{\Scale}[2][4]{\scalebox{#1}{$#2$}}%
\newcommand*{\Resize}[2]{\resizebox{#1}{!}{$#2$}}%

\newcommand{\tensor}{\mathcal{X}}
\newcommand{\Real}{\ensuremath{\mathbb{R}}}
\newcommand{\Natural}{\ensuremath{\mathbb{N}}}
\newcommand{\Complex}{\ensuremath{\mathbb{C}}}
\newcommand{\tdot}[3]{\ensuremath{\langle #1, #2, #3 \rangle}}
\newcommand{\doubledot}[2]{\ensuremath{\langle #1, #2 \rangle}}
\newcommand{\bigO}[1]{\mathcal{O}(#1)}

\newcommand{\tuples}{\mathbb{T}}
\newcommand{\lstm}{GRU}

\newcommand{\stepOne}{Syntactic Sentence Encoding}
\newcommand{\stepTwo}{Side Information Acquisition}
\newcommand{\stepThree}{Instance Set Aggregation}

\newcommand{\gold}{L}
\newcommand{\elink}[2]{\mathcal{E}_{#1}(#2)}
\newcommand{\rlink}[2]{\mathcal{R}_{#1}(#2)}

\newcommand*{\Comb}[2]{{}^{#1}C_{#2}}
\newcommand{\NP}{NP}
\newcommand{\RP}{relation phrase}
\newcommand{\sideConst}[2]{\lambda_{\text{#1}, #2}}
\newcommand{\sidePairs}[2]{\mathcal{Z}_{\text{#1}, #2}}

\newcommand{\stepRep}{Source based NP segregation}
\newcommand{\stepSide}{Side Information Acquisition}
\newcommand{\stepEmbed}{Embedding NP and Relation Phrases}
\newcommand{\stepCluster}{Clustering Embeddings and Canonicalization}
\newcommand{\stepCanon}{Selecting Canonical Representative from Cluster}

\newcommand{\myData}{ReVerb45K}

\newcommand{\gidf}{Gal\'arraga-IDF}
\newcommand{\gstrsim}{Gal\'arraga-StrSim}
\newcommand{\gattr}{Gal\'arraga-Attr}

\newcommand\norm[1]{\left\lVert#1\right\rVert}

\newcommand\blfootnote[1]{%
	\begingroup
	\renewcommand\thefootnote{}\footnote{#1}%
	\addtocounter{footnote}{-1}%
	\endgroup
}



%
%

\title{\ttitle{}} 

\submitdate{\monthyeardate\today} 
\phd
\dept{Computer Science and Automation}
\faculty{Faculty of Engineering}
\author{\name{}}


\maketitle

\begin{center}
	\LARGE{\underline{\textbf{Declaration of Originality}}}
\end{center}
\noindent I, \textbf{\name{}}, with SR No. \textbf{04-04-00-15-12-16-1-13374} hereby declare that
the material presented in the thesis titled

\begin{center}
	\textbf{\ttitle{}}
\end{center}

\noindent represents original work carried out by me in the \textbf{Department of Computer Science and Automation} at \textbf{Indian Institute of Science} during the years \textbf{2016-2019}.

\noindent With my signature, I certify that:
\begin{itemize}
	\item I have not manipulated any of the data or results.
	\item I have not committed any plagiarism of intellectual
	property.
	I have clearly indicated and referenced the contributions of
	others.
	\item I have explicitly acknowledged all collaborative research
	and discussions.
	\item I have understood that any false claim will result in severe
	disciplinary action.
	\item I have understood that the work may be screened for any form
	of academic misconduct.
\end{itemize}

\vspace{20mm}

\noindent {\footnotesize{Date:	\hfill	Student Signature}} \qquad

\vspace{20mm}

\noindent In my capacity as supervisor of the above-mentioned work, I certify
that the above statements are true to the best of my knowledge, and 
I have carried out due diligence to ensure the originality of the
report.

\vspace{20mm}

\noindent  {\footnotesize{Advisor Name: \hfill Advisor Signature}} \qquad

\blankpage

\vspace*{\fill}
\begin{center}
	\large\bf \textcopyright \ \name{}\\
	\large\bf \monthyeardate\today\\
	\large\bf All rights reserved
\end{center}
\vspace*{\fill}
\thispagestyle{empty}

\blankpage

\vspace*{\fill}
\begin{center}
DEDICATED TO \\[2em]
\Large\it My Teachers\\[2em]
\Large\it who enlightened me with all knowledge.
\end{center}
\vspace*{\fill}
\thispagestyle{empty}




\setcounter{secnumdepth}{3}
\setcounter{tocdepth}{3}

\frontmatter 
\pagenumbering{roman}

\prefacesection{Acknowledgements}
I want to offer my sincere thanks to my advisors Dr. Partha Pratim Talukdar and Prof. Chiranjib Bhattacharyya, who gave me the freedom to work in my area of interest and have been very supportive throughout the course of my PhD. I also want to thank Dr. Manaal Faruqui, who accepted the role of being my mentor and guided me in my research. I am very grateful to all my teachers at Indian Institute of Science for giving me a clear understanding of the basics which were essential for completing this work. 

I feel fortunate to get the opportunity to collaborate with several researchers during my PhD. I want to thank Prince Jain, Shib Sankar Das, Swayambhu Nath, Rishabh Joshi, Sai Suman, Manik Bhandari, Prateek Yadav, Soumya Sanyal, Vikram Nitin, Shyam Upadhyay, Gaurav Singh Tomar and all the members of MALL Lab for their support and help. I am also thankful to my parents and friends for their support throughout my stay in Bangalore. Finally, I would like to thank Almighty God for all His blessings without which this would not have been possible.

\prefacesection{Abstract}
Graphs are all around us, ranging from citation and social networks to Knowledge Graphs (KGs). They are one of the most expressive data structures which have been used to model a variety of problems. Embedding graphs involve learning a representation of all nodes (and relations) in the graph which allows to effectively utilize graphs for various downstream problems. In this thesis, we explore such techniques for alleviating sparsity problem in knowledge graphs and for tasks such as document timestamping and word representation learning. We also list some of the limitations of existing graph embedding methods and propose solutions for addressing them. 

Knowledge graphs are structured representations of facts in a graph, where nodes represent entities and edges represent relationships between them. Recent research has resulted in the development of several large KGs; examples include DBpedia, YAGO, NELL, and Freebase. However, all of them tend to be sparse with very few relations associated with each entity. For instance, NELL KG consists of only 1.34 facts per entity. In the first part of the thesis, we explore two solutions to alleviate this problem through neural graph embedding based techniques: (1) KG Canonicalization, i.e., identifying and merging duplicate entities in a KG, (2) Relation Extraction which involves densifying the knowledge graph by automatically extracting more relationships from unstructured text.
For KG Canonicalization, we propose CESI (Canonicalization using Embeddings and Side Information), a novel approach that performs canonicalization over learned embeddings of KGs. The method extends recent advances in KG embedding by incorporating relevant NP and relation phrase side information in a principled manner. For relation extraction, we propose RESIDE, a distantly-supervised neural relation extraction method which utilizes additional side information from KGs for improved relation extraction. 
Both the approaches demonstrate the effectiveness of utilizing relevant side information along with graph embedding techniques for addressing knowledge graph sparsity. 
Through extensive experiments on multiple datasets, we demonstrate the effectiveness of our proposed methods. 

Traditional Neural Networks like Convolutional Networks and Recurrent Neural Networks are constrained to learn representation of Euclidean data. However, graphs in Natural Language Processing (NLP) are prominent. Recently, Graph Convolutional Networks (GCNs) have been proposed to allow Deep Learning models to exploit graph structures by embedding them in an end-to-end fashion. GCNs 
have been successfully applied for several problems in NLP and computer vision. In the second part of the thesis, we explore application of GCNs in two prominent tasks: (1) Document Timestamping problem, which forms an essential component of tasks like document retrieval, and summarization. For this, we propose NeuralDater which leverages GCNs for jointly exploiting syntactic and temporal graph structures of document for obtaining state-of-the-art performance on the problem. (2) Word representation learning, which has been widely adopted across several NLP tasks. We propose SynGCN, a flexible Graph Convolution based method for learning word embeddings which utilize the dependency context of a word instead of linear context for learning more meaningful word embeddings. 

Finally, in the last part of the thesis, we address two limitations of existing GCN models, viz, (1) The standard neighborhood aggregation scheme puts no constraints on the number of nodes that can influence the representation of a target node. This leads to a noisy representation of hub-nodes which covers almost the entire graph in a few hops. To address this shortcoming, we propose ConfGCN (Confidence-based GCN) which estimates confidences to determine the importance of a node on another during aggregation, thus restricting its influence neighborhood. (2) Most of the existing GCN models are limited to handle undirected graphs. However, a more general and pervasive class of graphs are relational graphs where each edge has a label and direction associated with it. Existing approaches to handle such graphs suffer from over-parameterization and are restricted to learning representation of nodes only. We propose CompGCN, a novel Graph Convolutional framework which jointly embeds entity and relations in a relational graph. CompGCN is parameter efficient and scales with the number of distinct relation types. It leverages a variety of entity-relation composition operations from KG Embedding techniques and achieves demonstrably superior results on node classification, link prediction, and graph classification tasks.

\prefacesection{Publications based on this Thesis}

The work in this dissertation is primarily related to the following peer-reviewed articles: \blfootnote{* Equal Contribution}
\begin{enumerate}
	\item \textbf{Shikhar Vashishth}, Prince Jain, and Partha Talukdar. ``CESI: Canonicalizing Open Knowledge Bases using Embeddings and Side Information". In Proceedings of the World Wide Web Conference (WWW), 2018.
	\item \textbf{Shikhar Vashishth}, Shib Shankar Dasgupta, Swayambhu Nath Ray, and Partha Talukdar. ``Dating Documents using Graph Convolution Networks". In Proceedings of the 56th Annual Meeting of the Association for Computational Linguistics (ACL), 2018.
	\item \textbf{Shikhar Vashishth}, Rishabh Joshi, Sai Suman Prayaga, Chiranjib Bhattacharyya, and Partha Talukdar. ``RESIDE: Improving Distantly-Supervised Neural Relation Extraction using Side Information". In Proceedings of the 2018 Conference on Empirical Methods in Natural Language Processing (EMNLP), 2018.
	\item \textbf{Shikhar Vashishth}$^*$, Prateek Yadav$^*$, Manik Bhandari$^*$, and Partha Talukdar. ``Confidence-based Graph Convolutional Networks for Semi-Supervised Learning". In Proceedings of the International Conference on Artificial Intelligence and Statistics (AISTATS), 2019.
	\item \textbf{Shikhar Vashishth}, Manik Bhandari, Prateek Yadav, Piyush Rai, Chiranjib Bhattacharyya, and Partha Talukdar. ``Incorporating Syntactic and Semantic Information in Word Embeddings using Graph Convolutional Networks". In Proceedings of the 57th Annual Meeting of the Association for Computational Linguistics (ACL), 2019.	
	\item \textbf{Shikhar Vashishth}$^*$, Soumya Sanyal$^*$, Vikram Nitin, and Partha Talukdar. ``Composition-based Multi-Relational Graph Convolutional Networks". In Proceedings of 8th International Conference on Learning Representations (ICLR), 2020.
\end{enumerate}

\newpage
\noindent The following articles have also been completed over the course of the PhD but are not discussed in the thesis:
\begin{enumerate}[resume]
	\item \textbf{Shikhar Vashishth}$^*$, Soumya Sanyal$^*$, Vikram Nitin, and Partha Talukdar. ``InteractE: Improving Convolution-based Knowledge Graph Embeddings by Increasing Feature Interactions". In Proceedings of 34th Conference on Artificial Intelligence (AAAI), 2020.
	\item Zhiqing Sun$^*$, \textbf{Shikhar Vashishth}$^*$, Soumya Sanyal$^*$, Partha Talukdar, and Yiming Yang. ``A Re-evaluation of Knowledge Graph Completion Methods". In Proceedings of the 58th Annual Meeting of the Association for Computational Linguistics (ACL), 2020.
	\item \textbf{Shikhar Vashishth}, Shyam Upadhyay, Gaurav Singh Tomar, and Manaal Faruqui. ``Attention Interpretability Across NLP Tasks". \textit{arXiv preprint arXiv:1909.11218}, 2019.
	\item Prateek Yadav, Madhav Nimishakavi, Naganand Yadati, \textbf{Shikhar Vashishth}, Arun Rajkumar and Partha Talukdar. ``Lovasz Convolutional Networks". In Proceedings of the International Conference on Artificial Intelligence and Statistics (AISTATS), 2019.

\end{enumerate}

\tableofcontents
\listoffigures
\listoftables

\mainmatter 
\setcounter{page}{1}
\chapter{Introduction}
\label{chap:introduction}

Graphs are pervasive data structures that have been used to model a variety of problems. Embedding graphs involves learning a vector representation for each node (and relation) in the graph. This enables to effectively make inference over the graph and solve various tasks such as link prediction and node classification. Such techniques have also been extensively explored for Knowledge Graphs (KGs), which are a large storehouse of world facts in a graph format. Instances of KGs include Freebase \cite{freebase}, WordNet \cite{wordnet}, YAGO \cite{yago}, and NELL \cite{nell}. KGs find application in a variety of tasks, such as relation extraction \cite{distant_supervision2009}, question answering \cite{qa_kg_1,qa_kg_2}, recommender systems \cite{kb-recommender}, and dialog systems \cite{kg_in_dialog}. Formally, Knowledge graphs are defined as a structured representation of facts in a graph, where nodes represent entities and edges denote relationships between them. This can be represented as a collection of triples \((s, r, o)\), each representing a relation \(r\) between a ``subject-entity" \(s\) and an ``object-entity" \(o\). A KG example is presented in Figure \ref{fig:kg_example}. Here, \textit{John Lennon}, \textit{Beatles}, and \textit{Liverpool} are nodes in the knowledge graph, and edges indicate different facts about them. For instance, the fact that \textit{``John Lennon was born in Liverpool"} is expressed through \textit{(John Lennon, bornIn, Liverpool)} triple in the KG.

\begin{figure}[!t]
	\centering
	\includegraphics[width=0.4\textwidth]{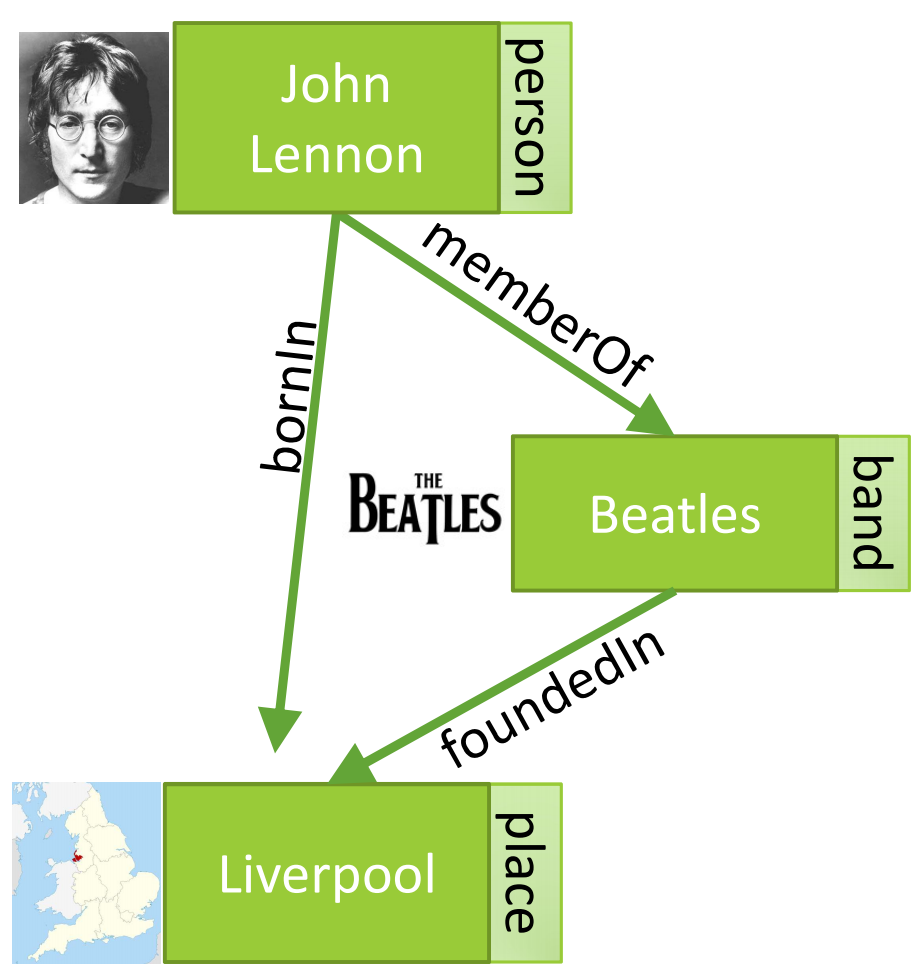}
	\caption{\small \label{fig:kg_example}An instance of Knowledge Graph$^1$. Here, nodes represent entities and edges indicate different relationships between them.}  
\end{figure}

Recently, neural graph embeddings are also being exploited for several Natural Language Processing applications. Graphs are prevalent in NLP, starting from dependency parse to temporal event ordering graphs, constituency parse graphs, etc. Neural graph embeddings methods allow utilizing such graph structures for a variety of downstream applications. An instance of such methods is Graph Convolutional Networks (GCNs) \cite{Defferrard2016,Kipf2016}, which allows embedding graphs based on the final objective in an end-to-end fashion. GCNs are a generalization of Convolutional Neural Networks (CNNs), which have been pivotal to achieve beyond human-like performance for tasks such as object recognition \cite{alexnet,obj_detect_survey} and speech recognition \cite{speech_survey}. GCNs have been shown to be effective for tasks such as neural machine translation \cite{gcn_nmt}, semantic role labeling \cite{gcn_srl}, event detection \cite{gcn_event}. However, the scope of their applicability for other NLP tasks is still an area of research.
\footnotetext[1]{Image credits: https://kgtutorial.github.io/}
Furthermore, improving existing graph embedding methods also serves as an important research problem. Recent works \cite{Xu2018,gin} have identified several limitations of Graph Convolutional Networks and have theoretically analyzed their potential. 

Knowledge graphs have been utilized for a variety of tasks \cite{kg_in_dialog,kg_incomp1}. However, most of the KGs are highly sparse with very few edges per entity, as also observed by \cite{transe}. For instance, NELL KG consists of only 1.34 facts per entity. This severely restricts their usage for several real-life applications. In the first part of the thesis, we present two graph embedding based techniques for addressing the sparsity problem in Knowledge Graphs. The proposed methods also demonstrate the effectiveness of utilizing relevant side information in a principled manner. We discuss each of them below.


\begin{enumerate}[itemsep=2pt,parsep=0pt,partopsep=0pt,leftmargin=*]
	\item \textbf{Knowledge Graph Canonicalization} involves identifying duplicate or redundant nodes in a KG and merging them as a single node. This can be explained through a concrete example. Given two triples in a KG: \textit{(Barack Obama, was president of, US)} and \textit{(Obama, born in, US)}. Identifying that \textit{Barack Obama} and \textit{Obama} refer to the same entity increases the overall facts per entity ratio. In our work, we focus on addressing such issues in Open KGs, which are automatically constructed knowledge graphs (using OpenIE algorithms \cite{ollie,ollie1,ollie3}) without any pre-specified ontology. In spite of its importance, canonicalization is a relatively unexplored problem, especially in the case of Open KGs. In this work, we propose CESI (Canonicalization using Embeddings and Side Information), a novel approach which performs canonicalization over learned embeddings of Open KBs. CESI extends recent advances in KB embedding by incorporating relevant noun and relation phrase side information in a principled manner. More details are provided in Chapter \ref{chap_cesi}.
	
	\item \textbf{Relation Extraction} involves automatically extracting semantic relationships between entity pairs from unstructured text. Most of the existing KGs like Wikidata and Freebase are human-curated. Relation extraction offers a mechanism for automatically constructing these KGs without any supervision. This allows it for further densifying existing KGs by extracting new facts from unstructured text. Since most supervised relation extraction methods require sizeable labeled training data which is expensive to construct, we utilize Distant Supervision (DS) \cite{distant_supervision2009} for automatically constructing a dataset. DS is based on the assumption that if two entities have a relationship in a KB, then all sentences mentioning those entities express the same relation. We propose a novel distantly-supervised neural relation extraction method, RESIDE which utilizes additional side information from KBs for improved relation extraction. It uses entity type and relation alias information for imposing soft constraints while predicting relations. RESIDE employs Graph Convolution Networks (GCN) to encode syntactic information from a text and improves performance even when limited side information is available. Please refer to Chapter \ref{chap_reside} for more details. 
	
\end{enumerate}

In the second part of the thesis, we focus on leveraging recently proposed Graph Convolutional Networks (GCNs) \cite{Defferrard2016,Kipf2016} for exploiting different graph structures in NLP. Traditional neural network architectures like Convolutional Neural Networks (CNNs) \cite{cnn_paper} and Recurrent Neural Networks (RNNs) \cite{lstm} are limited to handle Euclidean data. GCNs have been proposed to address this shortcoming  and have been successfully employed  for improving performance on tasks such as  semantic role labeling \citep{gcn_srl}, neural machine translation \citep{gcn_nmt}, relation extraction \cite{gcn_re_stanford}, shape segmentation \citep{yi2016syncspeccnn}, and action recognition \citep{huang2017deep}. In this work, we begin with utilizing GCNs for exploiting syntactic and event-ordering graph structures in Document Timestamping problem which involves predicting creation date of a given document. The task is at the core of many essential tasks, such as, information retrieval \cite{ir_time_usenix,ir_time_li}, temporal reasoning \cite{temp_reasoner1,temp_reasoner2}, text summarization \cite{text_summ_time}, and analysis of historical text \cite{history_time}. For this, we propose NeuralDater, a GCN-based approach which is to the best of our knowledge, the first application of deep learning for the problem. The model is more elaborately described in Chapter \ref{chap_neuraldater}. Next, we propose to use GCNs for utilizing syntactic context while learning word embeddings. Most existing word embedding methods are restricted to using the sequential context of a word. In this work, we overcome this problem by proposing SynGCN, a flexible Graph Convolution based method which utilizes the dependency context of a word without increasing the vocabulary size. We also propose SemGCN, an effective framework for incorporating diverse semantic knowledge for further enhancing learned word representations. Refer to Chapter \ref{chap_wordgcn} for details. 

In the third part of the thesis, we address some of the significant limitations of the current Graph Convolution based models. Most of the existing GCN methods are an instantiation of \textit{Message Passing Neural Networks} \cite{mpnn} which uses neighborhood aggregation scheme which puts no constraints on the number of nodes that can influence the representation of a given target node. In a $k$-layer model, each node is influenced by all the nodes in its k-hop neighborhood. This becomes a concern for hub nodes which covers almost the entire graph with a few hop neighbors. To alleviate this shortcoming, we propose ConfGCN, a Graph Convolutional Network which models label distribution and their confidences for each node in the graph. ConfGCN utilizes label confidences to estimate the influence of one node on another in a label-specific manner during neighborhood aggregation, thus controlling the influence neighborhood of nodes during GCN learning. Please refer to Chapter \ref{chap_confgcn} for details. Apart from this, we also propose an extension of GCN models for multi-relational graphs. Most of the existing GCN models are limited to handle undirected graphs. However, a more general and pervasive class of graphs are relational graphs where each edge has a label and direction associated with it. Existing approaches to handle such graph data suffer from overparameterization and are restricted to learning representation of nodes only. We propose CompGCN, a novel Graph Convolutional framework which jointly embeds entity and relations in a relational graph. CompGCN is parameter efficient and scales with the number of relations. It leverages a variety of entity-relation composition operations from Knowledge Graph Embedding techniques. CompGCN allows the application of GCNs for a problem which requires both node and edge embeddings such as drug discovery and KG link prediction. Through extensive experiments, we demonstrate the effectiveness of our proposed approaches. More details are presented in Chapter \ref{chap_compgcn}.

\section{Summary of Contributions}
Our contributions in the thesis can be grouped into the following three parts:

\textbf{\partOne{}:}
For addressing the sparsity problem in knowledge graphs, first we propose CESI (Canonicalization using Embeddings and Side Information), a novel method for canonicalizing Open KBs using learned embeddings. To the best of our knowledge, this is the first approach to use learned embeddings and side information for canonicalizing an Open KB. CESI models the problem of noun phrase (NP) and relation phrase canonicalization jointly using relevant side information in a principled manner. This is unlike prior approaches where NP and relation phrase canonicalization were performed sequentially. For densifying existing knowledge graphs using unstructured text, we propose RESIDE, a novel neural method which utilizes additional supervision from KB in a principled manner for improving distant supervised RE. RESIDE uses Graph Convolution Networks (GCN) for modeling syntactic information and has been shown to perform competitively even with limited side information. 
Through extensive evaluation on various benchmark datasets, we demonstrate the effectiveness of our proposed approaches. 

\textbf{\partTwo{}:}
We leverage recently proposed Graph Convolutional Networks for exploiting several graph structures in NLP to improve performance on two tasks: Document Timestamping and Word embeddings. We propose NeuralDater, a Graph Convolution Network (GCN)-based approach for document dating. To the best of our knowledge, this is the first application of GCNs, and more broadly deep neural network-based methods, for the document dating problem. NeuralDater is the first document dating approach which exploits the syntactic as well as temporal structure of the document, all within a principled joint model. Next, we propose SynGCN, a Graph Convolution based method for learning word embeddings. Unlike previous methods, SynGCN utilizes syntactic context for learning word representations without increasing vocabulary size. We also present SemGCN, a framework for incorporating diverse semantic knowledge (e.g., synonymy, antonymy, hyponymy, etc.) in learned word embeddings, without requiring relation-specific special handling as in previous methods. Through experiments on multiple intrinsic and extrinsic tasks, we demonstrate that our proposed methods obtain substantial improvement over state-of-the-art approaches, and also yield advantage when used with methods such as ELMo. 

\textbf{\partThree{}:}
Finally, we address two limitations in existing Graph Convolutional Network (GCN) based methods. For this, We propose ConfGCN, a Graph Convolutional Network framework for semi-supervised learning which models label distribution and their confidences for each node in the graph. To the best of our knowledge, this is the first confidence enabled formulation of GCNs. ConfGCN utilizes label confidences to estimate the influence of one node on another in a label-specific manner during neighborhood aggregation of GCN learning. Next, we propose CompGCN, a novel framework for incorporating multi-relational information in Graph Convolutional Networks which leverages a variety of composition operations from knowledge graph embedding techniques. Unlike previous GCN based multi-relational graph embedding methods, COMPGCN jointly learns embeddings of both nodes and relations in the graph. Through extensive experiments on multiple tasks, we demonstrate the effectiveness of our proposed method. Through extensive experiments on multiple tasks, we demonstrate the effectiveness of our proposed methods. 

\section{Organization of Thesis}
The rest of the thesis is organized as follows: In Chapter \ref{chap:background}, we review some background on Knowledge Graphs and Graph Convolutional Networks. Then in Part 1 of the thesis, we present two methods for addressing sparsity problem in Knowledge Graph: Canonicalization (Chapter \ref{chap_cesi}) and Relation Extraction (Chapter \ref{chap_reside}). In Part 2, we present two novel applications of Graph Convolutional Networks in NLP for Document Timestamping (Chapter \ref{chap_neuraldater}) and Word embedding (Chapter \ref{chap_wordgcn}) tasks. Then, we address two limitations in existing GCN models in Part 3. We present ConfGCN for controlling influence neighborhood in GCN learning in Chapter \ref{chap_confgcn} and an extension of GCNs for relational graphs in Chapter \ref{chap_compgcn}. Finally, we conclude in Chapter \ref{chap:conclusion} by summarizing our contributions and discussing future directions. 
\chapter{Background: Graph Convolutional Networks}
\label{chap:background}

In this chapter, we provide an overview of Graph Convolutional Networks (GCNs) which forms a necessary background material required for understanding the subsequent chapters of this work. 

\section{Introduction}
Convolutional Neural Networks (CNNs) have lead to major breakthroughs in the era of deep learning because of their ability to extract highly expressive features. However, CNNs are restricted to Euclidean data such as images and text which have grid like structure. Non-Euclidean data such as graphs are more expressive and have been used to model a variety of problems. Graph Convolutional Networks (GCNs) address this shortcoming by generalizing CNNs' property of local receptive field, shared weights, and multiple layers to graphs. GCNs have been successfully applied to several domains such as social networks \cite{fastgcn}, knowledge graphs \cite{r_gcn}, natural language processing \cite{gcn_srl,gcn_nmt}, drug discovery \cite{deep_chem} and natural sciences \cite{gcn_comp_synthesis,gcn_protein_interaction}. In this chapter. we describe how CNN model for Euclidean graphs can be generalized for non-Euclidean data using Spectral Graph theory \cite{spectral_graph_theory}. We acknowledge that most of the content of this chapter is adopted from \citet{emerging_field_graph_signal,Defferrard2016,Kipf2016}.



\section{Preliminaries}
\label{sec:prelim}

\noindent \textbf{Notations:} We denote an undirected and connected graph as $\m{G} = (\m{V}, \m{E}, \bm{W})$, where $\m{V}$ refers to the set of nodes ($N = |\m{V}|$), $\m{E} = \{(u, v) \ | \ u, v \in \m{V}\}$ indicates the set of edges, and $\bm{W}$  is a weighted adjacency matrix of the graph. If there does not exist an edge between node $i$ and $j$ then $W_{ij}$ is set to $0$.  

\ \\

\noindent \textbf{Graph Signal} refers to a function defined on the vertices of a graph $\m{G}$, i.e., $f: \m{V} \rightarrow \real{}$. For the entire graph, it can be represented as a vector $\bm{x} \in \real{N}$, where $x_i$ denotes the function value at the $i^{th}$ vertex.  Figure \ref{fig:back_graph_signal} shows an illustration of a graph signal over a graph.  

\begin{figure*}[h]
	\centering
	\includegraphics[width=0.4\textwidth]{./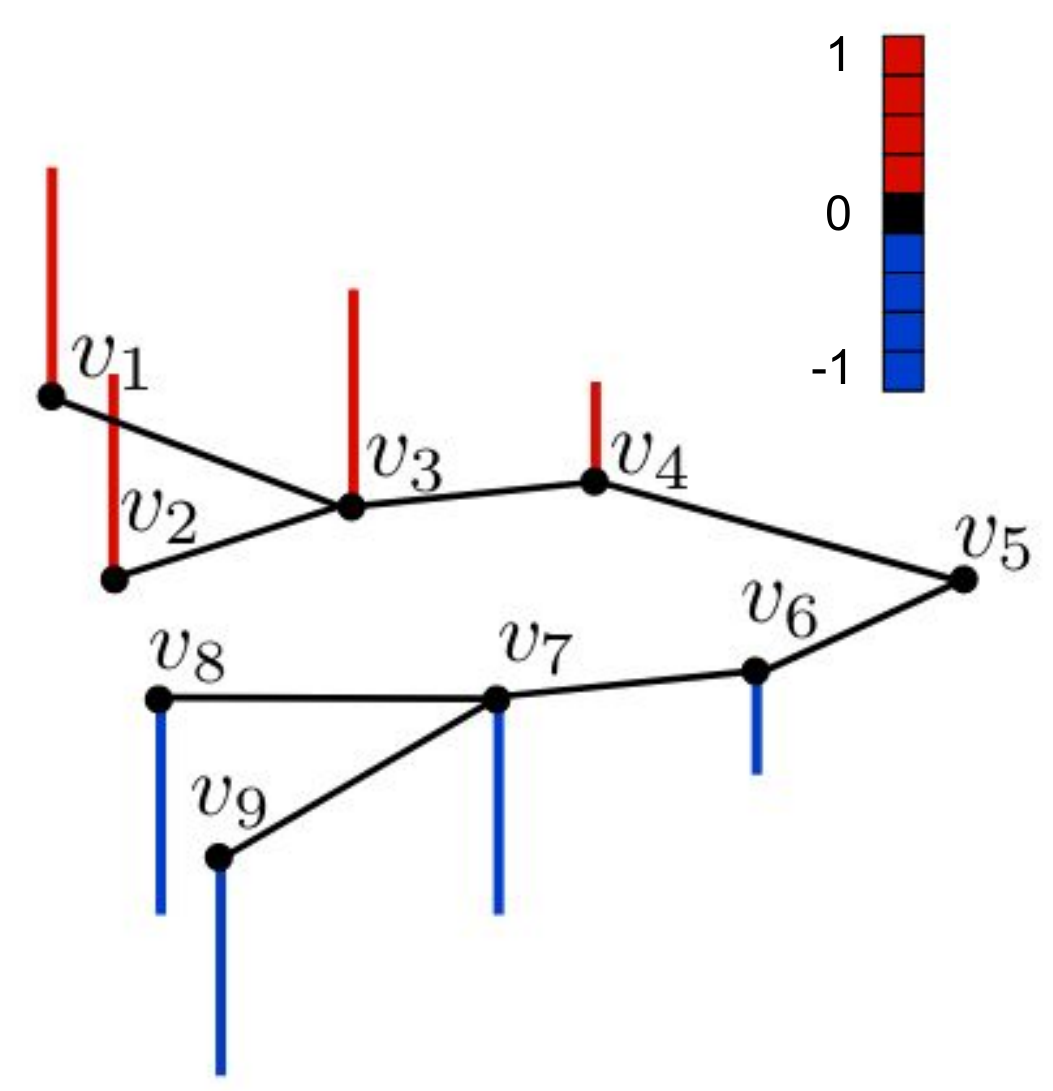}
	\caption{\small \label{fig:back_graph_signal}An Illustration of Graph signal over a graph. Refer to Section \ref{sec:prelim} for details.}  
\end{figure*}

\noindent \textbf{Graph Laplacian ($\Delta$)} for any graph signal $f$ is defined as:
\[
(\Delta f)(i) = \sum_{j \in \m{N}_i}W_{i,j} [f(i)-f(j)],
\]
where $\m{N}_i$ is the set of immediate neighbors of vertex $i$ in $\m{G}$. Graph Laplacian measures the difference between $f$ and its local average. It is small for a smooth signal, i.e., connected vertices have similar values and is large when $f$ frequently oscillates between connected vertices. Graph Laplacian can be represented as Laplacian matrix, i.e.,
\[
\Delta = D - \bm{W},
\]
where $D$ is a degree matrix, i.e., $D = diag\left(\sum_{i \neq j}W_{i,j}\right)$. $\Delta$ is a real symmetric matrix, therefore, it has a complete set of orthonormal eigenvectors which we denote by $\{\phi_0, \phi_1, ..., \phi_{N-1}\}$. Moreover, all its eigenvalues are real and non-negative, i.e., $\lambda_0, \lambda_1, ..., \lambda_{N-1} \geq 0$. Further, graph Laplacian $(\Delta)$ can be decomposed (Spectral Decomposition) as 
\[
\Delta = \Phi^T\Lambda\Phi,
\]
where $\Phi = [\phi_0, \phi_1, ..., \phi_{N-1}]$ and $\Lambda = diag(\lambda_0, \lambda_1, ..., \lambda_{N-1})$. In the graph setting, eigenvalues and eigenvectors provide a notion of frequency. The eigenvector corresponding to smaller eigenvalues are smoother compared to the eigenvectors with larger eigenvalues. For instance, if we count cross edges ($\m{Z}_{\m{G}}(f))$, i.e.,  the number of edges connecting vertices with opposite signal value which is defined as:
\[
\m{Z}_{\m{G}}(f)) = \{e = (i,j) \in \m{E} : f(j)f(j) < 0 \},
\]
then, we obtain the plot as shown in Figure \ref{fig:back_eigenvalues}. This shows that with the increase in eigenvalue, the number of such edges also increases. 

\begin{figure*}[h]
	\centering
	\includegraphics[width=0.8\textwidth]{./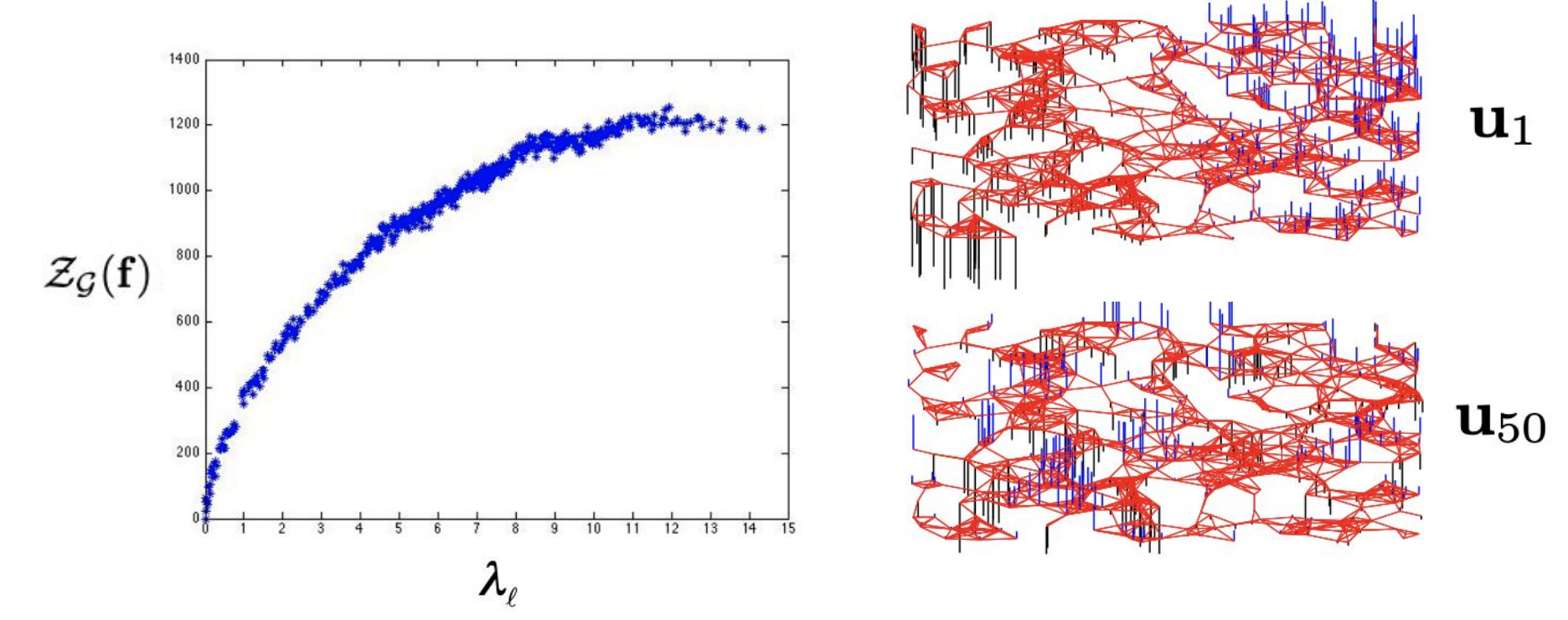}
	\caption{\small \label{fig:back_eigenvalues} \textbf{(Left)} Shows change in cross edges with the increase in eigenvalues of graph Laplacian. \textbf{(Right)} demonstrates that an eigenvector corresponding to a smaller eigenvalue is smoother compared to the eigenvector corresponding to a larger eigenvalue. }  
\end{figure*}

\section{Convolution in Euclidean space}
Given two functions $f, g : [-\pi, \pi] \rightarrow \real{}$, their convolution is defined as
\begin{equation}
\label{eqn:conv_def}
(f \star g)(x) = \int_{-\pi}^{\pi} f(t) g(x-t) dt,
\end{equation}

the above formulation satisfies the following properties:
\begin{enumerate}
	\item \textbf{Shift-invariance:} The convolution result remains unchanged on translating either of the function, i.e., $f(x-x_0) \star g(x) = (f \star g )(x-x_0)$.
	\item \textbf{Convolutional Theorem:} Fourier transform diagonalizes the convolution operator which allows it to be computed in the Fourier domain as 
	\[
	\widehat{(f \star g)} = \hat{f} \cdot \hat{g},
	\]
	where $\hat{\cdot}$ indicates the fourier transform of the function. Similarly, the convolution of two vectors $\bold{f} = (f_1, f_2, ..., f_n)$ and $\bold{g} = (g_1, g_2, .., g_n)$ can be defined as
	\begin{equation}
	\bold{f} \star \bold{g} = \bm{\Phi}(\bm{\Phi}^T\bold{g} \circ \bm{\Phi}^T\bold{f}) \\
	\end{equation}
	\item \textbf{Computational Efficient:} Using Fast-Fourier Transform (FFT) \cite{fft}, the Fourier transform can be computed efficiently in $O(\log{n})$.
\end{enumerate}

\section{Convolution in non-Euclidean space}
\label{sec:undirected_gcn}

The definition of convolution as given in Equation \ref{eqn:conv_def} cannot be directly generalized for the graph setting. This is because translation, $f(x-t)$ is not defined on graphs. However, by analogy, one can define convolution operation for two vectors $\bold{f}, \bold{g}: \m{V} \rightarrow \real{N}$ as 
\begin{align*}
	\bold{f} \star \bold{g} &= \bm{\Phi}(\bm{\Phi}^T\bold{g} \circ \bm{\Phi}^T\bold{f}) \\
	&= \bm{\Phi}\ \mathrm{diag}(\hat{g}_1, \hat{g}_2, ..., \hat{g}_N) \bm{\Phi}^T\bold{f} \\
	& = \bm{\Phi} \hat{g}(\bm{\Lambda})\bm{\Phi}^T\bold{f} \\
	& = \hat{g}(\bm{\Phi} \bm{\Lambda} \bm{\Phi}^T) \bold{f} \\
	& = \hat{g}(\Delta)\bold{f}
\end{align*}

The above formulation unlike for Euclidean space suffers from lack of shift invariance. Moreover, the filter coefficients $(\hat{g}(\Delta))$ depend on Fourier basis $\{\phi_1, \phi_2, ..., \phi_N\}$, which is expensive to compute $O(n^2)$ as FFT algorithm is not directly applicable. The above reduction allows to address this shortcoming, as $\hat{g}(\Lambda)$ can be approximated as a polynomial function of Laplacian eigenvalues, i.e.,
\[
\hat{g}(\Lambda)  =\sum_{k=1}^{K} \alpha_k \lambda^k, 
\]
where $\bm{\alpha} = (\alpha_1 \in \real{}, \alpha_2 \in \real{}, ..., \alpha_K \in \real{})^T$ is a vector of filter parameters. The filters represented by $K^{th}$-order polynomials of the Laplacian are exactly $K$-localized, i.e., restricted to capture $K$-hop neighborhood. Moreover, this also reduces the learning complexity to $O(K)$, the support size of the filter which is the same complexity as the standard CNNs. The above formulation, however requires $O(n^2)$ operation as $ \bm{\Phi} \hat{g}(\bm{\Lambda})\bm{\Phi}^T\bold{f}$ involves multiplication with Fourier basis. One solution for this is to use Chebyshev polynomial to parameterize $\hat{g}(\Delta)$ and recursively compute it from $\Delta$, i.e.,
\begin{equation}
\label{eqn:chebyshev}
\hat{g}(\Delta) = \sum_{k=0}^{K}\theta_k T_k(\tilde{\Delta}) \bold{f}.
\end{equation}

Here, $T_{k+1}(x) = 2xT_k(x) - T_{k-1}(x)$ with $T_0(x) = 1$ and $T_1(x)=x$. $T_k(\tilde{\Delta})$ denotes the $k^{th}$-order Chebyshev polynomial evaluated at $\tilde{\Delta} = 2 \Delta / \lambda_{max} - I_N$, Laplacian with eigenvalues constrained to $[-1, 1]$. This reduces the time complexity from $O(n^2)$ to $O(K|\m{E}|)$ as it involves $K$ multiplication with sparse $\Delta$ matrix.

\citet{Kipf2016} defines a first-order approximation of the above formulation by taking $K=1$. This reduces Equation \ref{eqn:chebyshev} to
\begin{equation*}
\hat{g}(\Delta) \bold{f} = (\theta_0 + \theta_1\tilde{\Delta})\bold{f}.
\end{equation*}
Now, approximating $\lambda_{max} \approx 2$  as neural network parameters will adapt to this change in scale during training and taking $\theta_0 = - \theta_1 = \theta$ gives $(\theta_0 + \theta_1\tilde{\Delta}) = (\theta - \theta(2 \Delta / \lambda_{max} - I_N)) = \theta(I_N - (\Delta - I_N))$. 
Thus, the above equation reduces to 
\begin{equation}
\label{eqn:kipf_1}
\hat{g}(\Delta) \bold{f} =  \theta(I_N + D^{-1/2}AD^{-1/2})\bold{f},
\end{equation}
here, $\Delta$ is replaced with $I_N - D^{-1/2}AD^{-1/2}$, the normalized Laplacian operator. Note that since $I_N + D^{-1/2}AD^{-1/2}$ has eigenvalues in range $[0, 2]$, repeated application of this operator can lead to numerical instabilities. To address this, we use re-normalization which replaces $I_N + D^{-1/2}AD^{-1/2}$ with $\tilde{D}^{-1/2} \tilde{A} \tilde{D}^{-1/2}$, where $\tilde{A} = A + I_N$ and $\tilde{D}_{ii} = \sum_{j}\tilde{A}_{ij}$. Thus, Equation \ref{eqn:kipf_1} is reduces to
\[
\hat{g}(\Delta) \bold{f} = \theta (\tilde{D}^{-1/2} \tilde{A} \tilde{D}^{-1/2})\bold{f}.
\]
The above formulation can be generalized for a graph signal $X \in \real{N \times d}$ with $d$-dimensional feature vector for every node and $F$ filters as
\begin{equation}
\label{eqn:kipf_main}
\bm{H} = f(\tilde{D}^{-1/2} \tilde{A} \tilde{D}^{-1/2}X\bm{W}),
\end{equation}
where $\bm{W} \in \real{d \times F}$ is a filter parameter, $f$ is any non-linearity and $\bm{H} \in \real{N \times F}$ is the convoluted signal matrix. For an undirected graph $\m{G}$, the above equation can be re-written as
\begin{equation}
h_{v} = f\left(\sum_{u \in \m{N}(v)}\left(\bm{W} x_{u} + b\right)\right),~~~\forall v \in \m{V} .
\end{equation}
Here, $\m{N}(v)$ refers to the set of neighbors of $v$ and and $b \in \mathbb{R}^{F}$ are learned in a task-specific setting using first-order gradient optimization. In order to capture nodes many hops away, multiple GCN layers may be stacked one on top of another. In particular, $h^{k+1}_{v}$, representation of node $v$  after $k^{th}$ GCN layer can be formulated as
\begin{equation}
h^{k+1}_{v} = f\left(\sum_{u \in \m{N}(v)}\left(W^{k} h^{k}_{u} + b^{k}\right)\right), \forall v \in \m{V} . 
\end{equation}

\section{GCNs for Directed and Labeled Graphs}
\label{sec:directed_gcn}

In this section, we consider GCN formulations over graphs where each edge is labeled as well as directed, proposed by \citet{gcn_srl}. In this setting, an edge from node $u$ to $v$ with label $l(u, v)$ is denoted as $(u, v, l(u, v))$. 
Based on the assumption that the information in a directed edge need not only propagate along its direction, following \citet{gcn_srl} we define an updated edge set $\m{E'}$ which expands the original set $\m{E}$ by incorporating inverse, as well self-loop edges.
\[
\m{E'} = \m{E} \cup \{(v,u,l(u,v)^{-1})~|~ (u,v,l(u,v)) \in \m{E}\} \cup \{(u, u, \top)~|~u \in \m{V})\} \label{neuraldater_eqn:updated_edges}.
\]

\begin{equation}
\label{eqn:gcn_directed}
h_{v}^{k+1} = f \left(\sum_{u \in \m{N}(v)}\left(W^{k}_{l(u,v)}h_{u}^{k} + b^{k}_{l(u,v)}\right)\right).
\end{equation}

We note that the parameters $W^{k}_{l(u,v)}$ and $b^{k}_{l(u,v)}$ in this case are edge label specific.
\ \\

\noindent \textbf{Incorporating Edge Importance: }
In many practical settings, we may not want to give equal importance to all the edges. For example, in case of automatically constructed graphs, some of the edges may be erroneous and we may want to automatically learn to discard them. Edge-wise gating may be used in a GCN to give importance to relevant edges and subdue the noisy ones. \citet{gcn_event, gcn_srl} used gating for similar reasons and obtained large performance gain. At $k^{th}$ layer, we compute gating value for a particular edge $(u,v, l(u,v))$ as:
\[
g^{k}_{u,v} = \sigma \left( h^{k}_u \cdot \hat{w}^{k}_{l(u,v)} + \hat{b}^{k}_{l(u,v)} \right),
\]
where, $\sigma(\cdot)$ is the sigmoid function, $\hat{w}^{k}_{l(u,v)}$ and $ \hat{b}^{k}_{l(u,v)}$ are label specific gating parameters. Thus, gating helps to make the model robust to the noisy labels and directions of the input graphs. GCN embedding of a node while incorporating edge gating may be computed as follows.
\begin{equation}
\label{eqn:gated_gcn}
h^{k+1}_{v} = f\left(\sum_{u \in \m{N}(v)} g^{k}_{u,v} \times \left({W}^{k}_{l(u,v)} h^{k}_{u} + b^{k}_{l(u,v)}\right)\right).
\end{equation}

We utilize the GCN formulation for directed and labeled graph with (Equation \ref{eqn:gated_gcn}) and without edge-wise gating (Equation \ref{eqn:gcn_directed}) for most of the works in this thesis.

\part{\partOne{}}
\chapter{Open Knowledge Base Canonicalization using Embeddings and Side Information}
\label{chap_cesi}

\section{Introduction}
\label{cesi_sec:intro}

In this chapter, we present our first solution to address the sparsity problem in Knowledge Graphs. Recent research has resulted in the development of several large \emph{Ontological} Knowledge Bases (KBs), examples include  DBpedia \cite{Auer:2007:DNW:1785162.1785216}, YAGO \cite{yago}, and Freebase \cite{freebase}. These KBs are called  ontological as the knowledge captured by them conform to a fixed ontology, i.e., pre-specified Categories (e.g., \textit{person, city}) and Relations (e.g., \textit{mayorOfCity(Person, City)}). Construction of such ontological KBs require significant human supervision. Moreover, due to the need for pre-specification of the ontology, such KB construction methods can't be quickly adapted to new domains and corpora. While other ontological KB construction approaches such as NELL \cite{NELL-aaai15} learn from limited human supervision, they  still suffers from the quick adaptation bottleneck.

In contrast, Open Information Extraction (OpenIE) methods need neither supervision nor any pre-specified  ontology. Given unstructured text documents, OpenIE methods readily extract triples of the form \textit{(noun phrase, relation phrase, noun phrase)} from them, resulting in the development of large Open Knowledge Bases (Open KBs). Examples of Open KBs include TextRunner \cite{textrunner}, ReVerb \cite{reverb}, and OLLIE \cite{ollie1,ollie2,ollie3}. While this makes OpenIE methods highly adaptable, they suffer from the following shortcoming: unlike Ontological KBs, the Noun Phrases (NPs) and \RP{}s in Open KBs are not \emph{canonicalized}. This results in storage of redundant and ambiguous facts.


Let us explain the need for canonicalization through a concrete example. Please consider the two sentences below.

\begin{center}
\textit{Barack Obama was the president of US.} \\
\textit{Obama was born in Honolulu.}
\end{center}

Given the two sentences above, an OpenIE method may extract the two triples below and store them in an Open KB.

\begin{center}
\textit{(Barack Obama, was president of, US)} \\
\textit{(Obama, born in, Honolulu)}
\end{center}

Unfortunately, neither such OpenIE methods nor the associated Open KBs have any knowledge that both \textit{Barack Obama} and \textit{Obama} refer to the same person. This can be a significant problem as Open KBs will not return all the facts associated with \textit{Barack Obama} on querying for it. Such KBs will also contain redundant facts, which is undesirable. Thus, there is an urgent need to \emph{canonicalize} noun phrases (NPs) and relations in Open KBs.

In spite of its importance, canonicalization of Open KBs is a relatively unexplored problem. In \cite{Galarraga:2014:COK:2661829.2662073}, canonicalization of Open KBs is posed as a clustering problem over \emph{manually} defined feature representations. Given the costs and sub-optimality involved with manual feature engineering, and inspired by recent advances in knowledge base embedding \cite{transe,hole}, we pose canonicalization of Open KBs as a clustering over \emph{automatically learned} embeddings. We make the following contributions in this chapter.
\begin{itemize}
	\item We propose Canonicalization using Embeddings and Side Information (CESI), a novel method for canonicalizing Open KBs using learned embeddings. To the best of our knowledge,
	this is the first approach to use learned embeddings and side information
	for canonicalizing an Open KB.
	\item {CESI} models the problem of noun phrase (NP) and relation phrase canonicalization \emph{jointly} using relevant side information  
	in a principled manner. This is unlike  prior approaches where \NP{} and \RP{} canonicalization were performed sequentially.
	\item We build and experiment with \myData{}, a new dataset for Open KB canonicalization. \myData{} consists of 20x more NPs than the previous biggest dataset for this task. Through extensive experiments on this and other real-world datasets, we demonstrate CESI's  effectiveness (\refsec{cesi_sec:experiments}).
\end{itemize}

CESI's source code and datasets used in the chapter are available at \texttt{\url{https://github.com/malllabiisc/cesi}}.
\section{Related Work}
 \label{cesi_sec:related}

{\bf Entity Linking}: One traditional approach to canonicalizing noun phrases is to map them to an existing KB such as Wikipedia or Freebase. This problem is known as Entity Linking (EL) or Named Entity Disambiguation (NED). Most approaches generate a list of candidate entities for each NP and re-rank them using machine learning techniques. Entity linking has been an active area of research in the NLP community \cite{Trani:2014:DOS:2878453.2878558,Lin:2012:ELW:2391200.2391216,Ratinov:2011:LGA:2002472.2002642}. A major problem with these kind of approaches is that many NPs may refer to new and emerging entities which may not exist in KBs. One approach to resolve these noun phrases is to map them to NIL or an OOKB (Out of Knowledge Base) entity, but the problem still remains as to how to cluster these NIL mentions. Although entity linking is not the best approach to NP canonicalization, we still leverage signals from entity linking systems for improved canonicalization in CESI.

{\bf Canonicalization in Ontological KBs}: Concept Resolver \cite{Krishnamurthy:2011:NPD:2002472.2002545} is used for clustering NP mentions in NELL \cite{NELL-aaai15}. It makes ``one sense per category" assumption which states that a noun phrase can refer to at most one concept in each category of NELL's ontology. For example, the noun phrase ``Apple" can either refer to a company or a fruit, but it can refer to only one company and only one fruit. Another related problem to NP canonicalization is Knowledge Graph Identification \cite{Pujara:2013:KGI:2717129.2717164}, where given a noisy extraction graph, the task is to produce a consistent Knowledge Graph (KG) by performing entity resolution, entity classification and link prediction jointly. Pujara et al. \cite{Pujara:2013:KGI:2717129.2717164} incorporate information from multiple extraction sources and use ontological information to infer the most probable knowledge graph using probabilistic soft logic (PSL) \cite{Brocheler:2010:PSL:3023549.3023558}. However, both of these approaches require additional information in the form of an ontology of relations, which is not available in the Open KB setting.

{\bf Relation Taxonomy Induction}: SICTF \cite{D16-1040} tries to learn relation schemas for different OpenIE relations. It is built up on RESCAL \cite{Nickel:2011:TMC:3104482.3104584}, and uses tensor factorization methods to cluster noun phrases into \emph{categories} (such as ``person", ``disease", etc.). We, however, are interested in clustering noun phrases into entities.

There has been relatively less work on the task of relation phrase canonicalization. Some of the early works include DIRT \cite{Lin:2001:DSI:502512.502559}, which proposes an unsupervised method for discovering inference rules of the form ``\textit{X is the author of Y }  $\approx$ \textit{X wrote Y}" using paths in dependency trees; and the PATTY system \cite{Nakashole:2012:PTR:2390948.2391076}, which tries to learn subsumption rules among relations (such as \textit{son-of} $\subset$ \textit{child-of}) using techniques based on frequent itemset mining. These approaches are more focused on finding a taxonomy of relation phrases, while we are looking at finding equivalence between relation phrases.

{\bf Knowledge Base Embedding}: KB embedding techniques such as TransE \cite{transe}, HolE \cite{hole} try to learn vector space embeddings for entities and relations present in a KB. TransE makes the assumption that for any \textit{$\langle$subject, relation, object$\rangle$} triple, the relation vector is a translation from the subject vector to the object vector. HolE, on the other hand, uses non-linear operators to model a triple. These embedding methods have been successfully applied for the task of link prediction in KBs. In this work, we build up on HolE while exploiting relevant side information for the task of Open KB canonicalization. We note that, even though KB embedding techniques like HolE have been applied to ontological KBs, CESI might be the first attempt to use them in the context of Open KBs.

{\bf Canonicalizing Open KBs}: The RESOLVER system \cite{Yates:2009:UMD:1622716.1622724} uses string similarity based features to cluster phrases in TextRunner \cite{textrunner} triples. String similarity features, although being effective, fail to handle synonymous phrases which have completely different surface forms, such as \textit{Myopia} and \textit{Near-sightedness}. 

KB-Unify \cite{dellibovi-espinosaanke-navigli:2015:EMNLP} addresses the problem of unifying multiple Ontological and Open KBs into one KB. However, KB-Unify requires a pre-determined sense inventory which is not available in the setting CESI operates. 


The most closely related work to ours is \cite{Galarraga:2014:COK:2661829.2662073}. They perform NP canonicalization by performing Hierarchical Agglomerative Clustering (HAC) \cite{Tan:2005:IDM:1095618} over manually-defined feature spaces, and subsequently perform  relation phrase  clustering by using the AMIE algorithm \cite{Galarraga:2013:AAR:2488388.2488425}. CESI significantly outperforms this prior method (\refsec{cesi_sec:experiments}). 

\section{Proposed Approach: CESI}
\label{cesi_sec:approach}

\begin{figure}
	\begin{center}
	\includegraphics[scale=0.6]{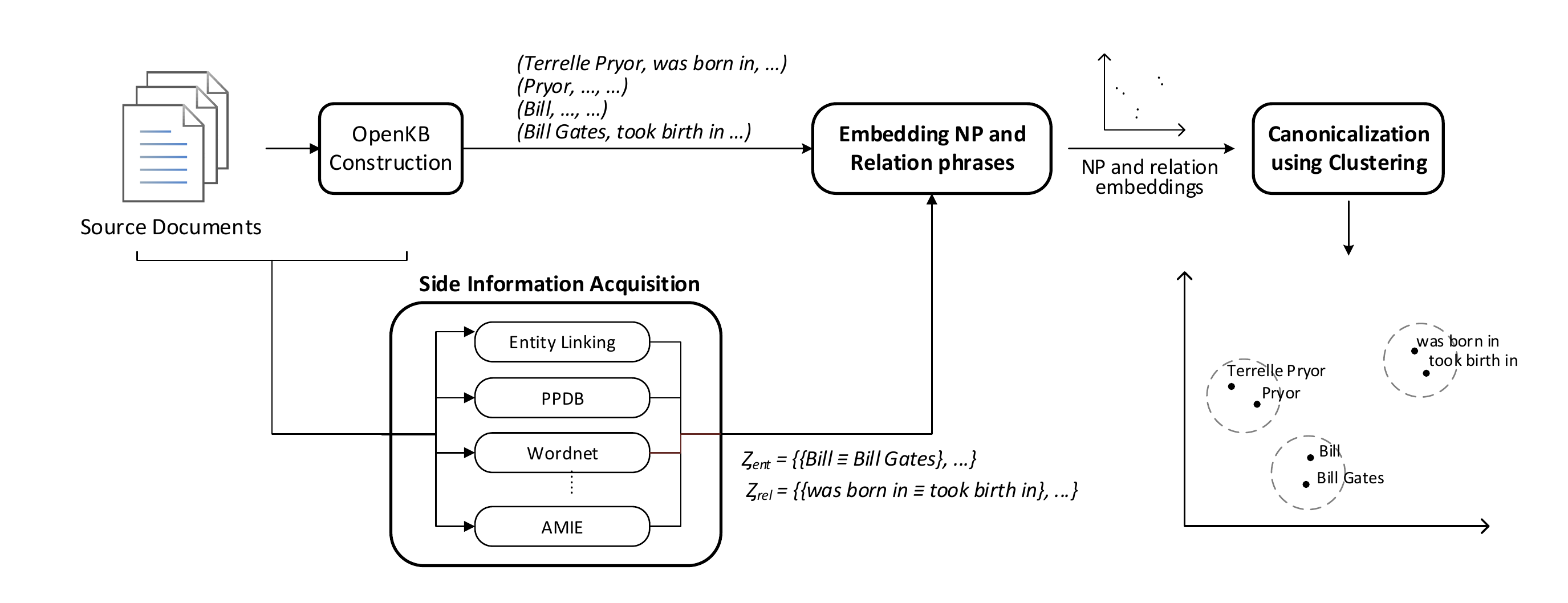}
	\caption{\label{cesi_fig:overview}\small Overview of CESI. CESI first acquires side information of noun and relation phrases of Open KB triples. In the second step, it learns embeddings of these NPs and relation phrases while utilizing the side information obtained in previous step. In the third step, CESI performs clustering over the learned embeddings to canonicalize NP and relation phrases. Please see \refsec{cesi_sec:approach} for more details.}
	\end{center}
\end{figure}

\subsection{Overview}
Overall architecture and dataflow of CESI is shown in Figure \ref{cesi_fig:overview}. The input to CESI is an un-canonicalized Open Knowledge Base (KB) with source information for each triple. The output is a list of canonicalized noun and relation phrases, which can be used to identify equivalent entities and relations or canonicalize the KB. CESI achieves this through its three step procedure:

\begin{enumerate}
	\item \textbf{\stepSide:} The goal of this step is to gather various NP and \RP{} side information for each triple in the input by running several standard algorithms on the source text of the triples. More details can be found in Section \ref{cesi_sec:side-info}.
	\item \textbf{\stepEmbed:} In this step, CESI learns specialized vector embeddings for all NPs and \RP{}s in the input by making principled use of side information available from the previous step. 
	\item \textbf{\stepCluster:} Goal of this step is to cluster the NPs and \RP{}s on the basis of their distance in the embedding space. Each cluster represents a specific entity or relation. Based on certain relevant heuristics, we assign a representative to each NP and \RP{} cluster.
\end{enumerate}

Details of different steps of CESI are described next.


\subsection{\stepSide}
\label{cesi_sec:side-info}

Noun and relation phrases in Open KBs often have relevant side information in the form of useful context in the documents from which the triples were extracted. Sometimes, such information may also be present in other related KBs. Previous Open KB canonicalization methods \cite{Galarraga:2014:COK:2661829.2662073} ignored such available side information and performed canonicalization in isolation focusing only on the Open KB triples. 
CESI attempts to exploit such side information to further improve the performance on this problem. In {CESI}, we make use of five types of NP side information to get equivalence relations of the form $e_1 \equiv e_2$ between two entities $e_1$ and $e_2$. Similarly, \RP{} side information is used to derive relation equivalence, $r_1 \equiv r_2$. All equivalences are used as soft constraints in later steps of CESI (details in Section \ref{cesi_sec:embedding}).

\subsubsection{Noun Phrase side Information}
\label{cesi_subsec:np-side}

In the present version of {CESI}, we make use of the following five types of NP side information:
\begin{enumerate}
	
	\item \textbf{Entity Linking}:  
	Given unstructured text, entity linking algorithms identify entity mentions and link them to Ontological KBs such as  Wikipedia, Freebase etc. We make use of Stanford CoreNLP entity linker which is based on \cite{SPITKOVSKY12.266} for getting NP to Wikipedia entity linking. Roughly, in about 30\% cases, we get this information for NPs. If two NPs are linked to the same Wikipedia entity, we assume them to be equivalent as per this information. For example, \textit{US} and \textit{America} can get linked to the same Wikipedia entity \textit{United\_States}.
	
	\item \textbf{PPDB Information}: 
	We make use of PPDB 2.0 \cite{ppdb2}, a large collection of paraphrases in English, for identifying equivalence relation among NPs. We first extracted high confidence paraphrases from the dataset while removing duplicates. Then, using union-find, we clustered all the phrases which are stated equivalent with high confidence and randomly assigned a representative to each cluster. Using an index created over the obtained clusters, we find cluster representative for each NP. If two NPs have the same cluster representative then they are considered to be equivalent. NPs not present in the dataset are skipped. This information helps us identifying equivalence between NPs such as \textit{management} and \textit{administration}. 
	
	\item \textbf{WordNet with Word-sense Disambiguation}:
	Using word-sense disambiguation \cite{Banerjee2002} with Wordnet \cite{wordnet}, we identify possible synsets for a given NP. If two NPs share a common synset, then they are marked as similar as per this side information. For example, \textit{picture} and \textit{image} can get linked to the same synset \textit{visualize.v.01}.
	
	\item \textbf{IDF of Overlapping Toke}: 
	NPs sharing infrequent terms give a strong indication of them referring to the same entity. For example, it is very likely for \textit{Warren Buffett} and \textit{Buffett} to refer to the same person. In \cite{Galarraga:2014:COK:2661829.2662073}, IDF token overlap was found to be the most effective feature for canonicalization. We assign a score for every pair of NPs based on the standard IDF formula:
		$$score_{{idf}}(n, n') = \dfrac{\sum_{x \in w(n) \cap w(n')}{\log{(1+f(x))^{-1}}}}{\sum_{x \in w(n) \cup w(n')}{\log{(1+f(x))^{-1}}}}$$
		
	Here, $w(\cdot)$ for a given NP returns the set of its terms, excluding stop words. $f(\cdot)$ returns the document frequency for a token.  
	
	\item \textbf{Morph Normalization}: 
	We make use of multiple morphological normalization operations like tense removal, pluralization, capitalization and others as used in \cite{reverb} for finding out equivalent NPs. We show in Section \ref{cesi_sec:ablation} that this information helps in improving performance. 
	
\end{enumerate}

\subsubsection{Relation Phrase Side Information}
Similar to noun phrases, we make use of PPDB and WordNet side information for \RP{} canonicalization as well. Apart from these, we use the following two additional types of side information involving relation phrases.

\begin{enumerate}
	\item \textbf{AMIE Information}: 
	AMIE algorithm \cite{Galarraga:2013:AAR:2488388.2488425} tries to learn implication rules between two relations $r$ and $r'$ of the form $r \Rightarrow r'$. These rules are detected based on statistical rule mining \cite{Galarraga:2014:COK:2661829.2662073}. It declares two relations $r$ and $r'$ to be equivalent if both $r \Rightarrow r'$ and  $r' \Rightarrow r$ satisfy support and confidence thresholds. AMIE accepts a semi-canonicalized KB as input, i.e., a KB where NPs are already canonicalized. Since this is not the case with Open KBs, we first canonicalized NPs morphologically and then applied AMIE over the NP-canonicalized KB. We chose morphological normalization for this step as such normalization is available for all NPs, and also because we found this side information to be quite effective in large Open KBs. 

	\item \textbf{KBP Information}: 
	Given unstructured text, Knowledge Base Population (KBP) systems detect relations between entities and link them to relations in standard KBs. For example, \textit{``Obama was born in Honolulu"} contains \textit{``was born in"} relation between \textit{Obama} and \textit{Honolulu}, which can be linked to \textit{per:city\_of\_birth} relation in KBs. In {CESI}, we use Stanford KBP \cite{surdeanu2012multi} to categorize relations. If two relations fall in the same category, then they are considered equivalent as per this information.
	
\end{enumerate}

The given list can be further extended based on the availability of other side information. For the experiments in this chapter, we have used the above mentioned NP and \RP{} side information. Some of the equivalences derived from different side information might be erroneous, therefore, instead of using them as hard constraints, we try to use them as supplementary information as described in the next section. Even though side information might be available only for a small fraction of NPs and \RP{}s, the hypothesis is that it will result in better overall canonicalization. We find this to be true, as shown in Section \ref{cesi_sec:results}.

\subsection{\stepEmbed}
\label{cesi_sec:embedding}

For learning embeddings of NPs and \RP{}s in a given Open KB, {CESI} optimizes HolE's \cite{hole} objective function along with terms for penalizing violation of equivalence conditions from the NP and \RP{} side information. Since the conditions from side information might be spurious, a factor ($\lambda_{\text{ent}/\text{rel},\theta}$) is multiplied with each term, which acts as a hyper-parameter and is tuned on a held out validation set. We also keep a constant ($\lambda_{str}$) with HolE objective function, to make selective use of structural information from KB for canonicalization. We choose HolE because it is one of the best performing KB embeddings techniques for tasks like link prediction in knowledge graphs. Since KBs store only true triples, we generate negative examples using local closed world heuristic \cite{kg_incomp1}. To keep the rank of true triples higher than the non-existing ones, we use pairwise ranking loss function. The final objective function is described below.

\begin{equation*}
\begin{split}
\min_{\Theta} 
    & \hspace{2 mm} \lambda_{str} \sum_{i \in D_+} \sum_{j \in D_-}{ \text{max} (0, \gamma + \sigma(\eta_j) - \sigma(\eta_i))} \\    
    & +  \sum_{\theta \in \mathscr{C}_{\text{ent}}} 
    	 \dfrac{\sideConst{ent}{\theta}} {|\sidePairs{ent}{\theta}|} 
    	 \sum_{v, v^{'} \in \sidePairs{ent}{\theta}}{\|e_v - e_{v^{'}}\|^2} \\
    & +  \sum_{\phi \in \mathscr{C}_{\text{rel}}} 
    	 \dfrac{\sideConst{rel}{\phi}} {|\sidePairs{rel}{\phi}|} 
    	 \sum_{u, u^{'} \in \sidePairs{rel}{\phi}}{\|r_u - r_{u^{'}}\|^2} \\
    & + \lambda_{\text{reg}} \left( \sum_{v \in V} \|e_v\|^2 + \sum_{r \in R} \|e_r\|^2 \right).
\end{split}
\end{equation*}

The objective function, consists of three main terms, along with one term for regularization. Optimization parameter, $\Theta = \{e_v\}_{v \in V} \cup \{r_u\}_{u \in R}$, is the set of all NP ($e_v$) and \RP{} ($r_u$) $d$-dimensional embeddings, where, $V$ and $R$ denote the set of all NPs and \RP{}s in the input. In the first term, $D_+, D_-$ specify the set of positive and negative examples and $\gamma > 0$ refers to the width of the margin \cite{transe}. Further, $\sigma(\cdot)$ denotes the logistic function and for a triple $t_i$ $(s,p,o)$, $\eta_i = r_p^T(e_s \star e_o)$, where $\star : \real{d} \times \real{d} \rightarrow \real{d}$ is the circular correlation operator defined as follows.
\begin{align*}
[a \star b]_{k} = \sum_{i=0}^{d-1} a_i b_{(k+i) \text{ mod} \hspace{1 mm} d}.
\end{align*}
The first index of $(a \star b)$ measures the similarity between $a$ and $b$, while other indices capture the interaction of features from $a$ and $b$, in a particular order. Please refer to \cite{hole} for more details.

In the second and third terms, $\mathscr{C}_{\text{ent}}$ and $\mathscr{C}_{\text{rel}}$ are the collection of all types of NP and relation side information available from the previous step (Section \ref{cesi_sec:side-info}), i.e., $\mathscr{C}_{\text{ent}} = \{ \text{Entity Linking, PPDB, ..} \}$ and $\mathscr{C}_{\text{rel}} = \{ \text{AMIE, KBP, ..} \}$. Further, $\sideConst{ent}{\theta}$ and $\sideConst{rel}{\phi}$ denote the constants associated with entity and relation side information. Their value is tuned using grid search on a held out validation set. The set of all equivalence conditions from a particular side information is denoted by $\sidePairs{ent}{\theta}$ and $\sidePairs{rel}{\phi}$. 
The rationale behind putting these terms is to allow inclusion of side information while learning embeddings, by enforcing two NPs or relations close together if they are equivalent as per the available side information. Since the side information is available for a fraction of NPs and \RP{}s in the input, including these terms in the objective does not slow down the training of embeddings significantly.

The last term adds L2 regularization on the embeddings. All embeddings are initialized by averaging GloVe vectors \cite{glove}. We use mini-batch gradient descent for optimization.

\subsection{\stepCluster}
\label{cesi_sec:clustering}

{CESI} clusters NPs and \RP{}s by performing Hierarchical Agglomerative Clustering (HAC) using cosine similarity over the embeddings learned in the previous step (Section \ref{cesi_sec:embedding}). HAC was preferred over other clustering methods because the number of clusters are not known beforehand. Complete linkage criterion is used for calculating the similarity between intermediate clusters as it gives smaller sized clusters, compared to single and average linkage criterion. This is more reasonable for canonicalization problem, where cluster sizes are expected to be small. The threshold value for HAC was chosen based on held out validation dataset.

%

The time complexity of HAC with complete linkage criterion is $O(n^2)$ \cite{defays1977efficient}. For scaling up {CESI} to large knowledge graphs, one may go for modern variants of approximate Hierarchical clustering algorithms \cite{Kobren:2017:HAE:3097983.3098079} at the cost of some loss in performance. 

Finally, we decide a representative for each NP and \RP{} cluster. For each cluster, we compute a mean of all elements' embeddings weighted by the frequency of occurrence of each element in the input. NP or \RP{} which lies closest to the weighted cluster mean is chosen as the representative of the cluster.

\section{Experiments}
\label{cesi_sec:experiments}

\subsection{Experimental Setup}

\subsubsection{Datasets}
\label{cesi_sec:datasets}

\begin{table}[t]
	\centering
	\begin{tabular}{ccccc}
		\toprule
		Datasets 	& \# Gold & \#NPs 	& \#Relations 	& \#Triples \\
		& Entities & & & \\ 
		\midrule
		Base 		& 150 		& 290  & 3K 		& 9K  \\
		Ambiguous	& 446 		& 717  & 11K		& 37K \\
		\myData 	& 7.5K 		& 15.5K & 22K 		& 45K \\
		\bottomrule
		\addlinespace
	\end{tabular}
	\caption{\label{cesi_tb:datasets}\small Details of datasets used. \myData{} is the new dataset we propose in this chapter. Please see \refsec{cesi_sec:datasets} for details.}
\end{table}


Statistics of the three datasets used in the experiments of this chapter are summarized in \reftbl{cesi_tb:datasets}. We present below brief summary of each dataset.
\begin{enumerate}
	\item \textbf{Base and Ambiguous Datasets:} We obtained the Base and Ambiguous datasets from the authors of \citet{Galarraga:2014:COK:2661829.2662073}. Base dataset was created by collecting triples containing 150 sampled Freebase entities that appear with at least two aliases in ReVerb Open KB. The same dataset was further enriched with mentions of homonym entities to create the  Ambiguous dataset. Please see \citet{Galarraga:2014:COK:2661829.2662073} for more details.

	\item \textbf{\myData{}:} This is the new Open KB canonicalization dataset we propose in this work.  
	\myData{} is a significantly extended version of the Ambiguous dataset, containing more than 20x NPs. \myData{} is constructed by intersecting information from the following three sources: ReVerb Open KB \cite{reverb}, Freebase entity linking information from \cite{gabrilovich2013facc1}, and Clueweb09 corpus \cite{callan2009clueweb09}. 
	Firstly, for every triple in ReVerb, we extracted the source text from Clueweb09 corpus from which the triple was generated. In this process, we rejected  triples for which we could not find any source text. Then, based on the entity linking information from \cite{gabrilovich2013facc1}, we linked all subjects and objects of triples to their corresponding Freebase entities. If we could not find high confidence linking information for both subject and object in a triple, then it was rejected. Further, following the dataset construction procedure adopted by \cite{Galarraga:2014:COK:2661829.2662073}, we selected triples associated with all Freebase entities with at least two aliases occurring as subject in our dataset. Through these steps, we obtained 45K high-quality triples which we used for evaluation. We call this resulting dataset {\myData}.
	
	In contrast to Base and Ambiguous datasets, the number of entities, NPs and \RP{}s in \myData{} are significantly larger. 
	Please see \reftbl{cesi_tb:datasets} for a detailed comparison. This better mimics real-world KBs which tend to be sparse with very few edges per entity, as also observed by \cite{transe}.


\end{enumerate}

For getting test and validation set for each dataset, we randomly sampled 20\% Freebase entities and called all the triples associated with them as validation set and rest was used as the test set. 

\subsubsection{Evaluation Metrics}
\label{cesi_sec:metrics}

Following \cite{Galarraga:2014:COK:2661829.2662073}, we use macro-, micro- and pairwise metrics for evaluating Open KB canonicalization methods. 
We briefly describe below these metrics for completeness. In all cases, $C$ denotes the clusters produced by the  algorithm to be evaluated, and $E$ denotes the gold standard clusters. In all cases, F1 measure is given as the harmonic mean of precision and recall.

\noindent \textbf{Macro:} Macro precision ($P_{\mathrm{macro}}$) is defined as the fraction of pure clusters in $C$,  i.e., clusters in which all the NPs (or relations) are linked to the same gold entity (or relation). Macro recall ($R_{\mathrm{macro}}$) is calculated like macro precision but with the roles of $E$ and $C$ interchanged.
\begin{eqnarray*}
	P_{\mathrm{macro}}(C, E) &=& \dfrac{|\{c \in C:\exists e \in E : e \supseteq c\}|}{|C|} \\
	R_{\mathrm{macro}}(C, E) &=& P_{\mathrm{macro}}(E, C)
\end{eqnarray*}
\textbf{Micro:} Micro precision ($P_{\mathrm{micro}}$) is defined as the purity of $C$ clusters \cite{Manning:2008:IIR:1394399} based on the assumption that the most frequent gold entity (or relation) in a cluster is correct. Micro recall ($R_{\mathrm{micro}}$) is defined similarly as macro recall.
\begin{eqnarray*} 
	P_{\mathrm{micro}}(C, E) &=& \dfrac{1}{N} \sum_{c \in C} \max_{e \in E} |c \cap e| \\
	R_{\mathrm{micro}}(C, E) &=& P_{\mathrm{micro}}(E, C)
\end{eqnarray*}
\textbf{Pairwise:} Pairwise precision ($P_{\mathrm{pair}}$) is measured as the ratio of the number of hits in $C$ to the total possible pairs in $C$. 
Whereas, pairwise recall ($R_{\mathrm{pair}}$) is the ratio of number of hits in $C$ to all possible pairs in $E$. A pair of elements in a cluster in $C$ produce a hit if they both refer to the same gold entity (or relation).
\begin{eqnarray*}
	P_{\mathrm{pair}}(C, E) &=& \dfrac{\sum_{c \in C}{|\{ (v,v') \in e, \exists e \in E, \forall (v,v') \in c\}|}} {\sum_{c \in C}{\Comb{|c|}{2}}} \\ 
	R_{\mathrm{pair}}(C, E) &=& \dfrac{\sum_{c \in C}{|\{ (v,v') \in e, \exists e \in E, \forall (v,v') \in c\}|}} {\sum_{e \in E}{\Comb{|e|}{2}}} \\
\end{eqnarray*}


\begin{figure}[t]
	\centering
	\includegraphics[width=4in]{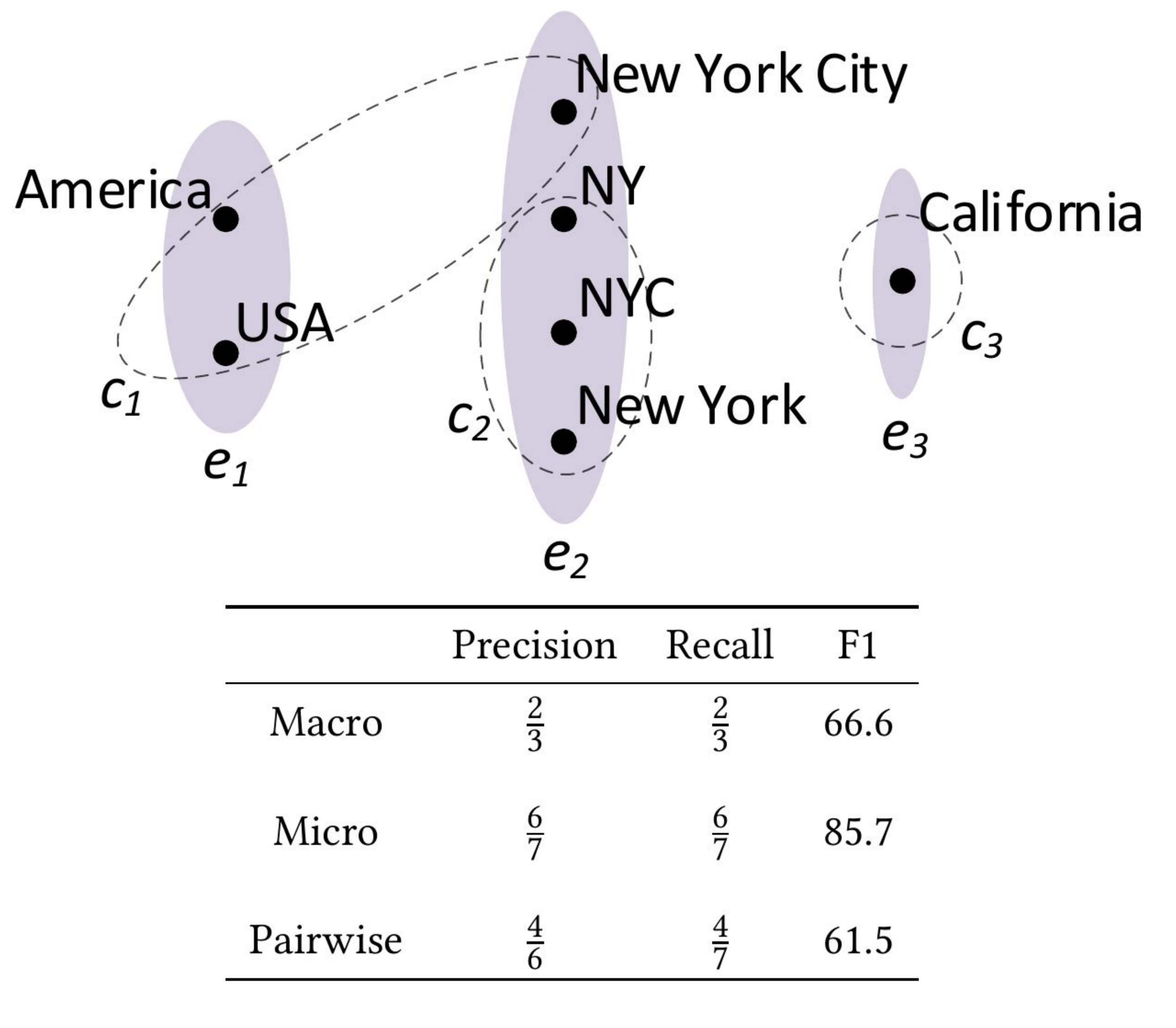}
	\caption{\label{cesi_tb:metric-example}\small
	Top: Illustrative example for different evaluation metrics. $e_i$ denotes actual clusters, whereas $c_i$ denotes predicted clusters. Bottom: Metric results for the above example. Please see \refsec{cesi_sec:metrics} for details.}
\end{figure}
%

Let us illustrate these metrics through a concrete NP canonicalization example shown in Figure \ref{cesi_tb:metric-example}. In this Figure, we can see that only $c_2$ and $c_3$ clusters in C are pure because they contain mentions of only one entity, and hence, $P_{\mathrm{macro}} = \frac{2}{3}$. On the other hand, we have $e_1$ and $e_3$ as pure clusters if we interchange the roles of $E$ and $C$. So, $R_{\mathrm{macro}} = \frac{2}{3}$ in this case. For micro precision, we can see that \textit{America}, \textit{New York}, and \textit{California} are the most frequent gold entities in $C$ clusters. Hence, $P_{\mathrm{micro}} = \frac{6}{7}$. Similarly, $R_{\mathrm{micro}} = \frac{6}{7}$ in this case.
For pairwise analysis, we need to first calculate the number of hits in $C$. In $c_1$ we have 3 possible pairs out of which only 1, (\textit{America, USA}) is a hit as they belong to same gold cluster $e_1$. Similarly, we have 3 hits in $c_2$ and 0 hits in $c_3$. Hence, $P_{\mathrm{pair}} = \frac{4}{6}$.  To compute $R_{\mathrm{pair}}$, we need total number of pairwise decisions in $E$, which is $1 + 6 + 0$ , thus, $R_{\mathrm{pair}} = \frac{4}{7}$. All the results are summarized in \reftbl{cesi_tb:metric-example}.

For evaluating NP canonicalization, we use Macro, Micro and Pairwise F1 score. However, in the case of relations, where gold labels are not available, we use macro, micro and pairwise precision values based on the scores given by human judges. 

\subsubsection{Methods Compared}
\textbf{Noun Phrase Canonicalization:}  
For \NP{} canonicalization, {CESI} has been compared against the following methods:

\begin{itemize}
	\item \textbf{Morphological Normalization:} As used in \cite{reverb}, this involves applying simple normalization operations like removing tense, pluralization, capitalization etc. over NPs and \RP{}s.
	\item {\bf Paraphrase Database (PPDB):} Using PPDB 2.0 \cite{ppdb2}, we clustered two NPs together if they happened to share a common paraphrase. NPs which could not be found in PPDB are put into singleton clusters.
	\item {\bf Entity Linking}: Since the problem of \NP{} canonicalization is closely related to entity linking, we compare our method against Stanford CoreNLP Entity Linker \cite{SPITKOVSKY12.266}. Two NPs linked to the same entity are clustered together.
	\item \textbf{\gidf{} \cite{Galarraga:2014:COK:2661829.2662073}}: IDF Token Overlap was the best performing method proposed in \cite{Galarraga:2014:COK:2661829.2662073} for NP canonicalization. In this method, IDF token similarity is defined between two NPs as in  \refsec{cesi_subsec:np-side}, and HAC is used to cluster the mentions.
	\item \textbf{\gstrsim{} \cite{Galarraga:2014:COK:2661829.2662073}}: This method is similar to Galarraga-IDF, but with similarity metric being the Jaro-Winkler \cite{winkler1999state} string similarity measure.
	\item \textbf{\gattr{} \cite{Galarraga:2014:COK:2661829.2662073}}: Again, this method is similar to the Galarraga-IDF, except that Attribute Overlap is used as the similarity metric between two NPs in this case. Attribute for a NP $n$, is defined as the set of relation-NP pairs which co-occur with $n$ in the input triples. Attribute overlap similarity between two NPs, is defined as the Jaccard coefficient of the set of attributes:
	$$ f_{\text{attr}}(n,n') = \dfrac{|A \cap A'|}{|A \cup A'|}$$
	where, $A$ and $A'$ denote the set of attributes associated with $n$ and $n'$.
	
	Since canonicalization methods using above similarity measures were found to be most effective in \cite{Galarraga:2014:COK:2661829.2662073}, even outperforming Machine Learning-based alternatives, we consider these three baselines as representatives of state-of-the-art in Open KB canonicalization. 
	
	\item {\bf GloVe}: 
	In this scheme, each NP and \RP{} is represented by a 300 dimensional GloVe embedding \cite{glove} trained on Wikipedia 2014 and Gigaword 5 \cite{parker2011english} datasets with 400k vocabulary size. Word vectors were averaged together to get embeddings for multi-word phrases. These GloVE embeddings were then clustered for final canonicalization.
	
	\item {\bf HolE}: In this method, embeddings of NPs and \RP{}s in an Open KB are obtained by applying HolE \cite{hole} over the Open KB. These embeddings are then clustered to obtain the final canonicalized groupings. Based on the initialization of embeddings, we differentiate between \textbf{HolE(Random)} and \textbf{HolE(GloVe)}. 
	\item {\bf CESI}: This is the method proposed in this chapter, please see \refsec{cesi_sec:approach} for more details. 

\end{itemize}

{\bf Hyper-parameters}: Following \cite{Galarraga:2014:COK:2661829.2662073}, we used Hierarchical Agglomerative Clustering (HAC) as the default clustering method across all methods (wherever necessary). For all methods, grid search over the hyperparameter space was performed, and results for the best performing setting are reported. This process was repeated for each dataset.

\subsubsection{Relation Phrase Canonicalization} AMIE \cite{Galarraga:2013:AAR:2488388.2488425} was found to be effective for \RP{} canonicalization in \cite{Galarraga:2014:COK:2661829.2662073}. We thus consider AMIE\footnote{We use support and confidence values of 2 and 0.2 for all the experiments in this chapter.} as the state-of-the-art baseline for \RP{} canonicalization and compare against CESI. We note that AMIE requires NPs of the input Open KB to be already canonicalized. In all our evaluation datasets, we already have \emph{gold} NP canonicalization available. We provide this gold NP canonicalization information as input to AMIE. Please note that CESI doesn't require such pre-canonicalized NP as input, as it performs \emph{joint} NP and \RP{} canonicalization. Moreover, providing gold NP canonicalization information to AMIE puts CESI at a disadvantage. We decided to pursue this choice anyways in the interest of stricter evaluation. However, in spite of starting from this disadvantageous position, CESI significantly outperforms AMIE in \RP{} canonicalization, as we will see in \refsec{cesi_sec:rel-eval}.

For evaluating performance of both  algorithms, we randomly sampled 25 non-singleton relation clusters for each of the three datasets and gave them to five different human evaluators\footnote{Authors did not participate in this evaluation.} for assigning scores to each cluster. The setting was kept blind, i.e., identity of the algorithm producing a cluster was not known to the evaluators. Based on the average of evaluation scores, precision values were calculated. Only non-singleton clusters were sampled, as singleton clusters will always give a precision of one.



\subsection{Results}
\label{cesi_sec:results}

In this section, we evaluate the following questions.

\begin{itemize}
	\item[Q1.] Is CESI effective in Open KB canonicalization? (\refsec{cesi_sec:overall-perf})
	\item[Q2.] What is the effect of side information in CESI's performance? (\refsec{cesi_sec:ablation})
	\item[Q3.] Does addition of entity linking side information degrade CESI's ability to canonicalize unlinked NPs (i.e., NPs missed by the entity linker)? (\refsec{cesi_sec:unlinked-eval})
\end{itemize}

Finally, in \refsec{cesi_sec:qualitative}, we present qualitative examples and discussions.

\begin{table*}[t]
	\begin{small}
		\resizebox{\textwidth}{!}{
			\begin{tabular}{lccc|ccc|ccc|c}
				\toprule
				Method & \multicolumn{3}{c}{Base Dataset} & \multicolumn{3}{c}{Ambiguous Dataset} & \multicolumn{3}{c}{\myData} & \\ 
				\cmidrule(r){2-4} \cmidrule(r){5-7} \cmidrule(r){8-10} \cmidrule(r){11-11} 
				& Macro & Micro & Pair. & Macro & Micro & Pair. & Macro & Micro & Pair. & Row Average\\
				\midrule
				Morph Norm	& 58.3 & 88.3 & 83.5 & 49.1 & 57.2 & 70.9 & 1.4  & 77.7 & 75.1 & 62.3\\
				PPDB       	& 42.4 & 46.9 & 32.2 & 37.3 & 60.2 & 69.3 & 46.0 & 45.4 & 64.2 & 49.3\\
				EntLinker   	& 54.9 & 65.1 & 75.2 & 49.7 & 83.2 & 68.8 & 62.8 & 81.8 & 80.4 & 69.1\\
				\gstrsim{} & 88.2 & 96.5 & 97.7 & 66.6 & 85.3 & 82.2 & 69.9 & 51.7 & 0.5  & 70.9\\
				\gidf{} & 94.8 & 97.9 & 98.3 & 67.9 & 82.9 & 79.3 & 71.6 & 50.8 & 0.5  & 71.5\\
				\gattr{}	& 76.1 & 51.4 & 18.1 & \textbf{82.9} & 27.7 & 8.4 & \textbf{75.1} & 20.1 & 0.2  & 40.0\\
				GloVe 		& 95.7 & 97.2 & 91.1 & 65.9 & 89.9 & 90.1 & 56.5 & 82.9 & 75.3 & 82.7\\
				HolE (Random)		& 69.5 & 91.3 & 86.6 & 53.3 & 85.0 & 75.1 & 5.4  & 74.6 & 50.9 & 65.7\\
				HolE (GloVe)    & 75.2 & 93.6 & 89.3 & 53.9 & 85.4 & 76.7 & 33.5 & 75.8 & 51.0 & 70.4\\
				CESI 	& \textbf{98.2} & \textbf{99.8} & \textbf{99.9} & 66.2 & \textbf{92.4} & \textbf{91.9} & 62.7 & \textbf{84.4} & \textbf{81.9} & \textbf{86.3}\\
				\bottomrule
				\addlinespace
			\end{tabular}
		}
		\caption{\label{cesi_tb:np_canonicalization}\small NP Canonicalization Results. CESI outperforms all other methods across datasets (Best in 7 out of 9 cases. \refsec{cesi_sec:np_results})}
	\end{small}
\end{table*}

\subsubsection{Evaluating Effectiveness of CESI in Open KB Canonicalization}
\label{cesi_sec:overall-perf}

\textbf{Noun Phrase Canonicalization:}
\label{cesi_sec:np_results}
Results for \NP{} canonicalization are summarized in \reftbl{cesi_tb:np_canonicalization}. Overall, we find that CESI performs well consistently across the datasets.
Morphological Normalization failed to give competitive performance in presence of homonymy.  
PPDB, in spite of being a vast reservoir of paraphrases, lacks information about real-world entities like people, places etc. Therefore, its performance remained weak throughout all datasets. Entity linking methods 
make use of contextual information from source text of each triple to link a NP to a KB entity. But their performance is limited because they are restricted by the entities in KB.

String similarity also gave decent performance in most cases but since they solely rely on surface form of NPs, they are bound to fail with NPs having dissimilar mentions.

Methods such as \gidf{}, \gstrsim{}, and \gattr{} performed poorly on \myData{}. Although, their performance is considerably better on the other two datasets. This is because of the fact that in contrast to Base and Ambiguous datasets, {\myData} has considerably large number of entities and comparatively fewer triples (\reftbl{cesi_tb:datasets}). \gidf{} token overlap is more likely to put two NPs together if they share an uncommon token, i.e., one with high IDF value. Hence, accuracy of the method relies heavily on the quality of document frequency estimates which may be quite misleading when we have smaller number of triples. Similar is the case with \gattr{} which decides similarity of NPs based on the set of shared attributes. Since, attributes for a NP is defined as a set of relation-NP pairs occurring with it across all triples, sparse data also results in poor performance for this method. 

GloVe captures semantics of NPs and unlike string similarity it doesn't rely on the surface form of NPs. Therefore, its performance has been substantial across all the datasets. HolE captures structural information from the given triples and uses it for learning embeddings. Through our experiments, we can see that solely structural information from KB is quite effective for NP canonicalization. {CESI} performs the best across the datasets in 7 out of the 9 settings, as it incorporates the strength of all the listed methods. The superior performance of {CESI} compared to HolE clearly indicates that the side information is indeed helpful for canonicalization task. Results of GloVe, HolE and {CESI} suggest that embeddings based method are much more  effective for Open KB canonicalization.

\begin{table}[!t]
	\centering
	\small 
	\begin{tabular}{ccccc}
		\toprule
		& Macro & Micro & Pairwise & Induced \\
		& Precision & Precision & Precision &  Relation \\
		&  &  &  &  Clusters \\
		\midrule		
		\addlinespace
		\multicolumn{5}{c}{\textbf{Base Dataset}} \\
		AMIE    & 42.8 & 63.6 & 43.0 & 7 \\
		CESI & \textbf{88.0} & \textbf{93.1} & \textbf{88.1} & \textbf{210} \\	
		\addlinespace
		\hline
		\addlinespace
		\multicolumn{5}{c}{\textbf{Ambiguous Dataset}} \\ 
		AMIE    & 55.8 & 64.6 & 23.4 & 46 \\
		CESI & \textbf{76.0} & \textbf{91.9} & \textbf{80.9} & \textbf{952}\\					
		\addlinespace
		\hline
		\addlinespace 
		\multicolumn{5}{c}{\textbf{\myData}} \\ 
		AMIE 	& 69.3 & 84.2 & 66.2 & 51 \\
		CESI & \textbf{77.3} &\textbf{87.8} & \textbf{72.6} & \textbf{2116} \\
		\bottomrule
		\addlinespace
	\end{tabular}
	\caption{\label{cesi_tb:rel_canonicalization}\small Relation canonicalization results. Compared to AMIE, CESI canonicalizes more number of \RP{}s at higher precision. Please see \refsec{cesi_sec:rel-eval} for details.}
\end{table}

\ \\
\noindent \textbf{Relation Phrase Canonicalization}
\label{cesi_sec:rel-eval}
Results for \RP{} canonicalization are presented in \reftbl{cesi_tb:rel_canonicalization}. For all experiments, in spite of using  quite low values for minimum support and confidence, AMIE was unable to induce any reasonable number of non-singleton clusters (e.g., only 51 clusters out of the 22K \RP{}s in the \myData{} dataset). For relation canonicalization experiments, AMIE was evaluated on gold NP canonicalized data as the algorithm requires NPs to be already canonicalized. CESI, on the other hand, was tested on all the datasets without making use of gold NP canonicalization information. 
 
Based on the results in \reftbl{cesi_tb:rel_canonicalization}, it is quite evident that AMIE induces too few relation clusters to be of value in practical settings. 
On the other hand, {CESI} consistently performs well across all the datasets and induces  significantly larger number of clusters. 

\subsubsection{Effect of Side Information in CESI}
\label{cesi_sec:ablation}

In this section, we evaluate the effect of various side information in CESI's performance. For this, we evaluated the performances of various  versions of CESI, each one of them obtained by ablating increasing amounts of side information from the full CESI model. Experimental results comparing these ablated versions on the \myData{} are presented in \reffig{cesi_fig:ablation}. From this figure, we observe that while macro performance benefits most from different forms of side information, micro and pairwise performance also show increased performance in the presence of various side information. This validates one of the central thesis of this chapter: side information, along with embeddings, can result in improved Open KB canonicalization.

\begin{table}[!t]
	\centering
	\small
	\begin{tabular}{lccc}
		\toprule
		& Macro F1 	& Micro F1 	& Pairwise F1 \\ 
		\midrule
		CESI 			& 81.7 & 87.6 & 81.5 	\\
		{CESI} w/o EL	& 81.3 & 87.3 & 80.7	\\
		\bottomrule
		\addlinespace
	\end{tabular}
	\caption{\label{cesi_tb:non-linked}\small CESI's performance in canonicalizing unlinked NPs, with and without Entity Linking (EL) side information, in the \myData{} dataset. We observe that CESI does not overfit to EL side information, and thereby helps prevent performance degradation in unlinked NP canonicalization (in fact it even helps a little). Please see \refsec{cesi_sec:unlinked-eval} for details.}
\end{table}

\subsubsection{Effect of Entity Linking Side Information on Unlinked NP}
\label{cesi_sec:unlinked-eval}

From experiments in \refsec{cesi_sec:ablation}, we find that Entity Linking (EL) side information (see \refsec{cesi_subsec:np-side}) is one of the most useful side information that CESI exploits. However, such side information is not available in case of unlinked NPs, i.e., NPs which were not linked by the entity linker. So, this naturally raises the following question: does CESI overfit to the EL side information and ignore the unlinked NPs, thereby resulting in poor canonicalization of such unlinked NPs?

In order to evaluate this question, we compared CESI's performance on unlinked NPs in the \myData{} dataset, with and without EL side information. We note that triples involving unlinked NPs constitute about 25\% of the entire dataset. Results are presented in \reftbl{cesi_tb:non-linked}. From this table, we observe that CESI doesn't overfit to EL side information, and it selectively uses such information when appropriate (i.e., for linked NPs). Because of this robust nature, presence of EL side information in CESI doesn't have an adverse effect on the unlinked NPs, in fact there is a small  gain in performance.

\begin{figure}[!t]
	\begin{minipage}{3.5in}
		\includegraphics[width=3.5in]{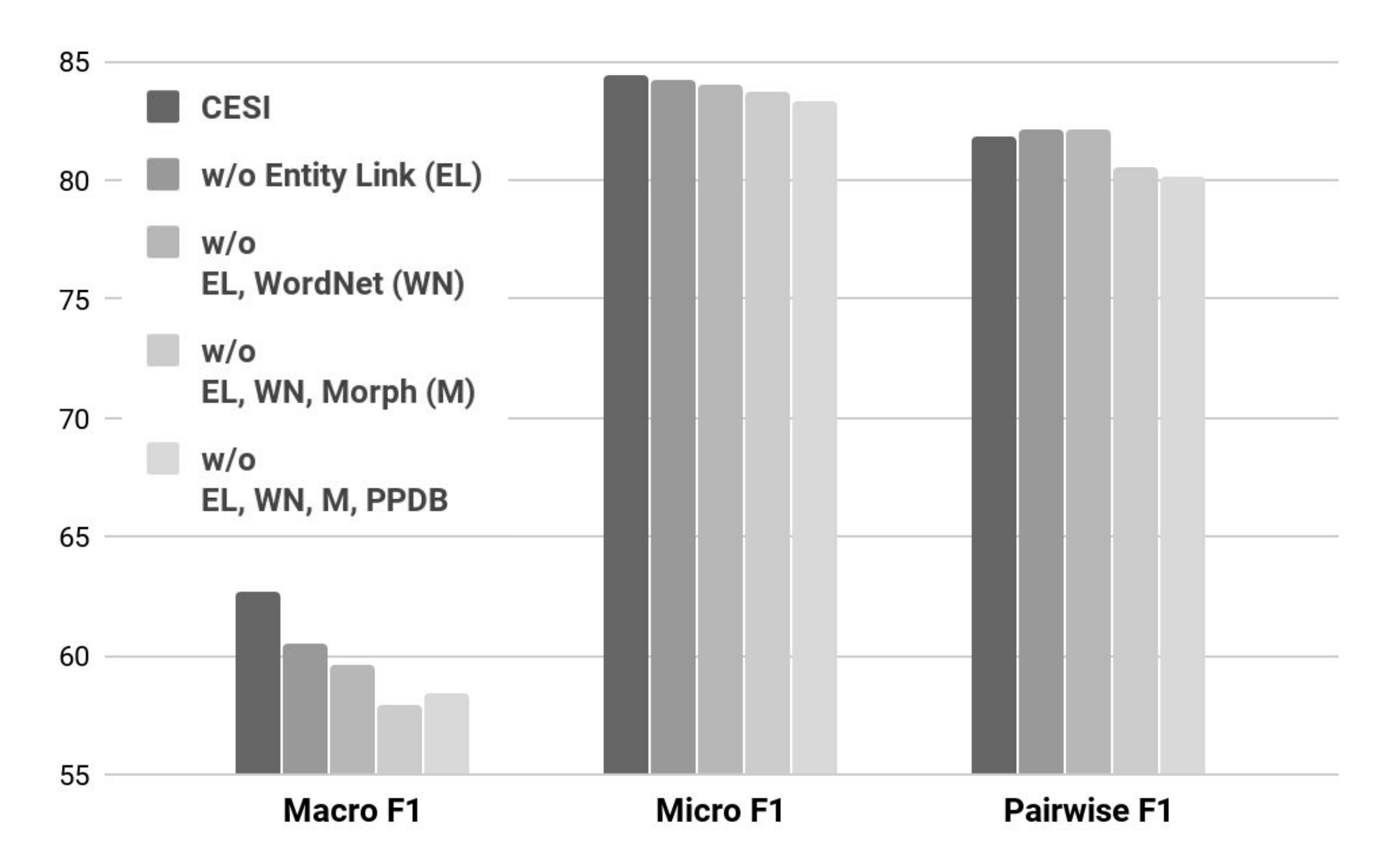}
		\subcaption{\label{cesi_fig:ablation}\small Performance comparison of various side information-ablated versions of CESI for NP canonicalization in the \myData{} dataset. Overall, side information helps CESI improve performance. Please see \refsec{cesi_sec:ablation} for details.}
	\end{minipage}
	\begin{minipage}{3.1in}
		\includegraphics[width=3.1in]{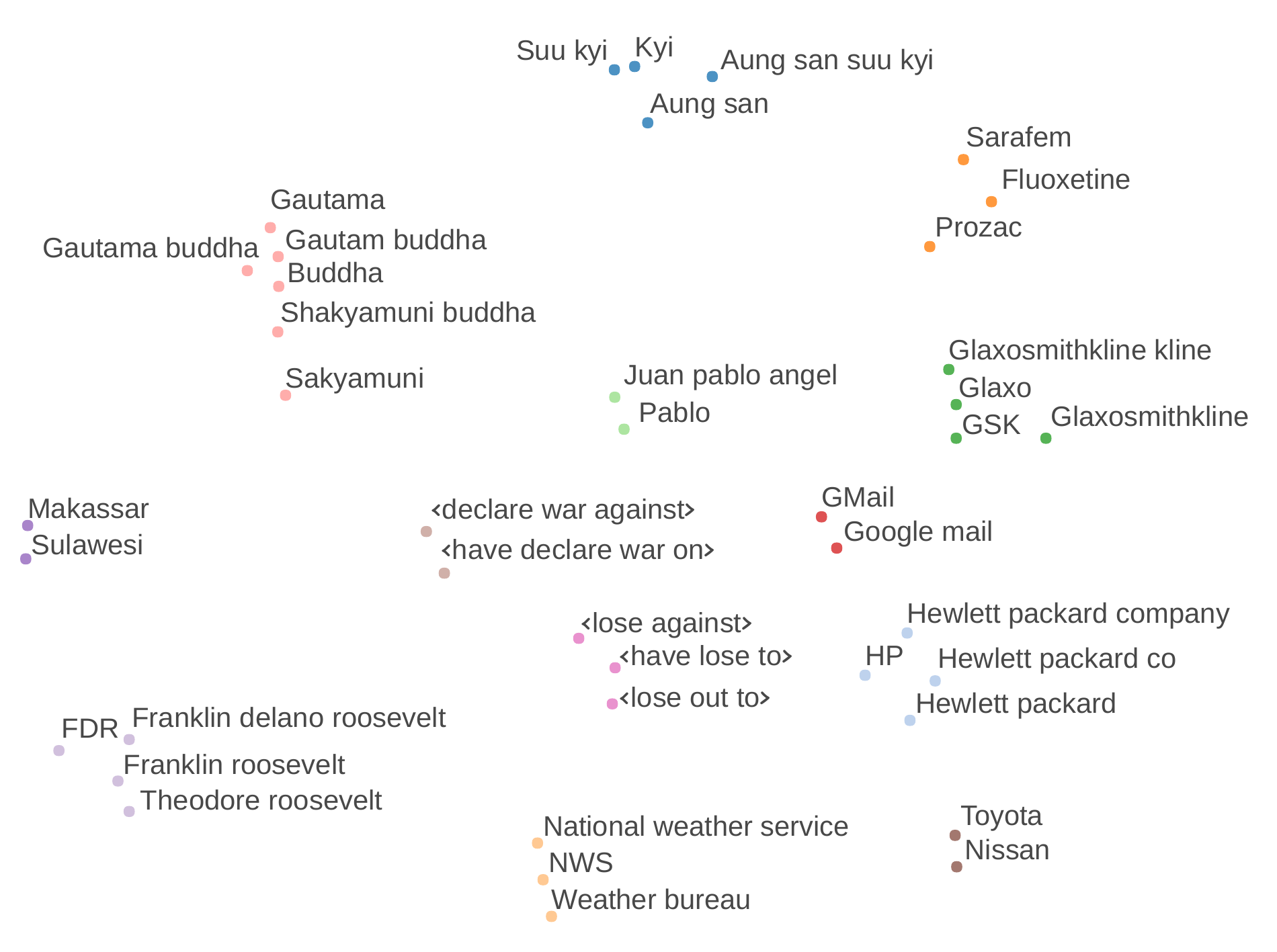}
		\subcaption{\label{cesi_fig:embeddings}t-SNE visualization of NP and relation phrase (marked in '$<\cdots>$') embeddings learned by {CESI} for {\myData} dataset. We observe that CESI is able to induce non-trivial canonical clusters. 
	More details in Sec. \ref{cesi_sec:qualitative}.}	
	\end{minipage}
\end{figure}

\subsubsection{Qualitative Evaluation}
\label{cesi_sec:qualitative}

Figure \ref{cesi_fig:embeddings} shows some of the NP and relation phrase clusters detected by {CESI} in {\myData} dataset. These results highlight the efficacy of algorithm in canonicalizing non-trivial NPs and relation phrases. The figure shows t-SNE \cite{maaten2008visualizing} visualization of NP and relation phrase (marked in '$<\cdots>$') embeddings for a few examples. We can see that the learned embeddings are actually able to capture equivalence of NPs and relation phrases. The algorithm is able to correctly embed \textit{Prozac}, \textit{Sarafem} and \textit{Fluoxetine} together (different names of the same drug), despite their having completely different surface forms. 

Figure \ref{cesi_fig:embeddings} also highlights the failures of {CESI}. For example, \textit{Toyota} and \textit{Nissan} have been embedded together although the two being different companies. Another case is with \textit{Pablo} and \textit{Juan Pablo Angel}, which refer to different entities. The latter case can be avoided by keeping track of the source domain type information of each NP for disambiguation. In this if we know that \textit{Juan Pablo Angel} has come from \textit{SPORTS} domain, whereas \textit{Pablo} has come from a different domain then we can avoid putting them together. We tried using DMOZ \cite{OMV93V_2016} dataset, which provide mapping from URL domain to their categories, for handling such errors. But, because of poor coverage of URLs in DMOZ dataset, we couldn't get significant improvement in canonicalization results. We leave this as a future work.

\section{Conclusion}
\label{cesi_sec:conclusion}


Canonicalizing Open Knowledge Bases (KBs) is an important but underexplored problem. In this chapter, we proposed CESI, a novel method for canonicalizing Open KBs using learned embeddings and side information. 
CESI solves a joint objective to learn noun and relation phrase embeddings, while utilizing relevant side information in a principled manner. These learned embeddings are then clustered together to obtain canonicalized noun and relation phrase clusters. We also propose \myData{}, a new and larger dataset for Open KB canonicalization. Through extensive experiments on this and other real-world datasets, we demonstrate CESI's effectiveness over state-of-the-art baselines. CESI's source code and all data used in this work are publicly available.


\chapter{Improving Distantly-Supervised  Relation Extraction using Graph Convolutional Networks and Side Information}
\label{chap_reside}

\section{Introduction}

In this chapter, we present another solution to address sparsity problem in knowledge graph and also demonstrate how  relevant side information can be effectively utilized for achieving state-of-the-art results. The construction of large-scale Knowledge Bases (KBs) like Freebase \cite{freebase} and Wikidata \cite{wikidata_paper} has proven to be useful in many natural language processing (NLP) tasks like question-answering, web search, etc. However, these KBs have limited coverage. Relation Extraction (RE) attempts to fill this gap by extracting semantic relationships between entity pairs from plain text. This task can be modeled as a simple classification problem after the entity pairs are specified. Formally, given an entity pair ($e_1$,$e_2$) from the KB and an entity annotated sentence (or instance), we aim to predict the relation $r$, from a predefined relation set, that exists between $e_1$ and $e_2$. If no relation exists, we simply label it \textit{NA}.

Most supervised relation extraction methods require large labeled training data which is expensive to construct. Distant Supervision (DS) \cite{distant_supervision2009} helps with the construction of this dataset automatically, under the assumption that if two entities have a relationship in a KB, then all sentences mentioning those entities express the same relation. While this approach works well in generating large amounts of training instances, the DS assumption does not hold in all cases. \citet{riedel2010modeling,hoffmann2011knowledge,surdeanu2012multi} propose multi-instance learning to relax this assumption. However, they use manually defined NLP tools based features which can be sub-optimal. 


Recently, neural models have demonstrated promising performance on RE. \citet{zeng2014relation,zeng2015distant} employ Convolutional Neural Networks (CNN) to learn representations of instances. For alleviating noise in distant supervised datasets, attention has been utilized by \cite{lin2016neural,bgwa_paper}.
Syntactic information from dependency parses has been used by \cite{distant_supervision2009,see_paper} for capturing long-range dependencies between tokens. Recently proposed Graph Convolution Networks (GCN) \cite{Defferrard2016} have been effectively employed for encoding this information \cite{gcn_srl,gcn_nmt}. However, all the above models rely only on the noisy instances from distant supervision for RE.

Relevant side information can be effective for improving RE. For instance, in the sentence, \textit{Microsoft was started by Bill Gates.}, the type information of \textit{Bill Gates (person)}  and \textit{Microsoft (organization)} can be helpful in predicting the correct relation \textit{founderOfCompany}. This is because every relation constrains the type of its target entities. 
Similarly, relation phrase \textit{``was started by"} extracted using Open Information Extraction (Open IE) methods can be useful, given that the aliases of relation \textit{founderOfCompany}, e.g., \textit{founded, co-founded, etc.}, are available. KBs used for DS readily provide such information which has not been completely exploited by current models.

In this chapter, we propose RESIDE, a novel distant supervised relation extraction method which utilizes additional supervision from KB through its neural network based architecture.
RESIDE makes principled use of entity type and relation alias information from KBs, to impose soft constraints while predicting the relation. 
It uses encoded syntactic information obtained from Graph Convolution Networks (GCN), along with embedded side information, to improve neural relation extraction.
Our contributions can be summarized as follows:
\begin{itemize}[itemsep=2pt,parsep=0pt,partopsep=0pt,leftmargin=*]
	\item We propose RESIDE, a novel neural method which utilizes additional supervision from KB in a principled manner for improving distant supervised RE.
	\item RESIDE uses Graph Convolution Networks (GCN) for modeling syntactic information and has been shown to perform competitively even with limited side information.
	\item Through extensive experiments on benchmark datasets, we demonstrate RESIDE's effectiveness over state-of-the-art baselines.
\end{itemize} 
RESIDE's source code and datasets used in the chapter are available at \url{http://github.com/malllabiisc/RESIDE}.
\section{Related Work}
\label{reside_sec:related_work}

\textbf{Distant supervision:} Relation extraction is the task of identifying the relationship between two entity mentions in a sentence. In supervised paradigm, the task is considered as a multi-class classification problem but suffers from lack of large labeled training data. To address this limitation, \cite{distant_supervision2009} propose distant supervision (DS) assumption for creating large datasets, by heuristically aligning text to a given Knowledge Base (KB). As this assumption does not always hold true, some of the sentences might be wrongly labeled. To alleviate this shortcoming, \citet{riedel2010modeling} relax distant supervision for multi-instance single-label learning. Subsequently, for handling overlapping relations between entities \cite{hoffmann2011knowledge,surdeanu2012multi} propose multi-instance multi-label learning paradigm. 

\textbf{Neural Relation Extraction:} The performance of the above methods strongly rely on the quality of hand engineered features. \citet{zeng2014relation} propose an end-to-end CNN based method which could automatically capture relevant lexical and sentence level features. This method is further improved through piecewise max-pooling by \cite{zeng2015distant}.
\citet{lin2016neural,candis_paper} use attention \cite{bahdanau2014} for learning from multiple valid sentences. We also make use of attention for learning sentence and bag representations. 

Dependency tree based features have been found to be relevant for relation extraction \cite{distant_supervision2009}. \citet{see_paper} use them for getting promising results through a recursive tree-GRU based model. In RESIDE, we make use of recently proposed Graph Convolution Networks \cite{Defferrard2016,kipf2016semi}, which have been found to be quite effective for modelling syntactic information \cite{gcn_srl,gcn_event,neuraldater}.

\textbf{Side Information in RE:} Entity description from KB has been utilized for RE \cite{entity_description}, but such information is not available for all entities. Type information of entities has been used by \citet{figer_paper,Liu2014ExploringFE} as features in their model. \citet{typeinfo2017} also attempt to mitigate noise in DS through their joint entity typing and relation extraction model. However, KBs like Freebase provide reliable type information which could be directly utilized. In our work, we make principled use of entity type and relation alias information obtained from KB. 
We also use unsupervised Open Information Extraction (Open IE) methods \cite{ollie,stanford_openie}, which automatically discover possible relations without the need of any predefined ontology, which is used as a side information as defined in \refsec{reside_sec:sideinfo}.

\begin{figure*}[t]
	\centering
	\includegraphics[width=\textwidth]{./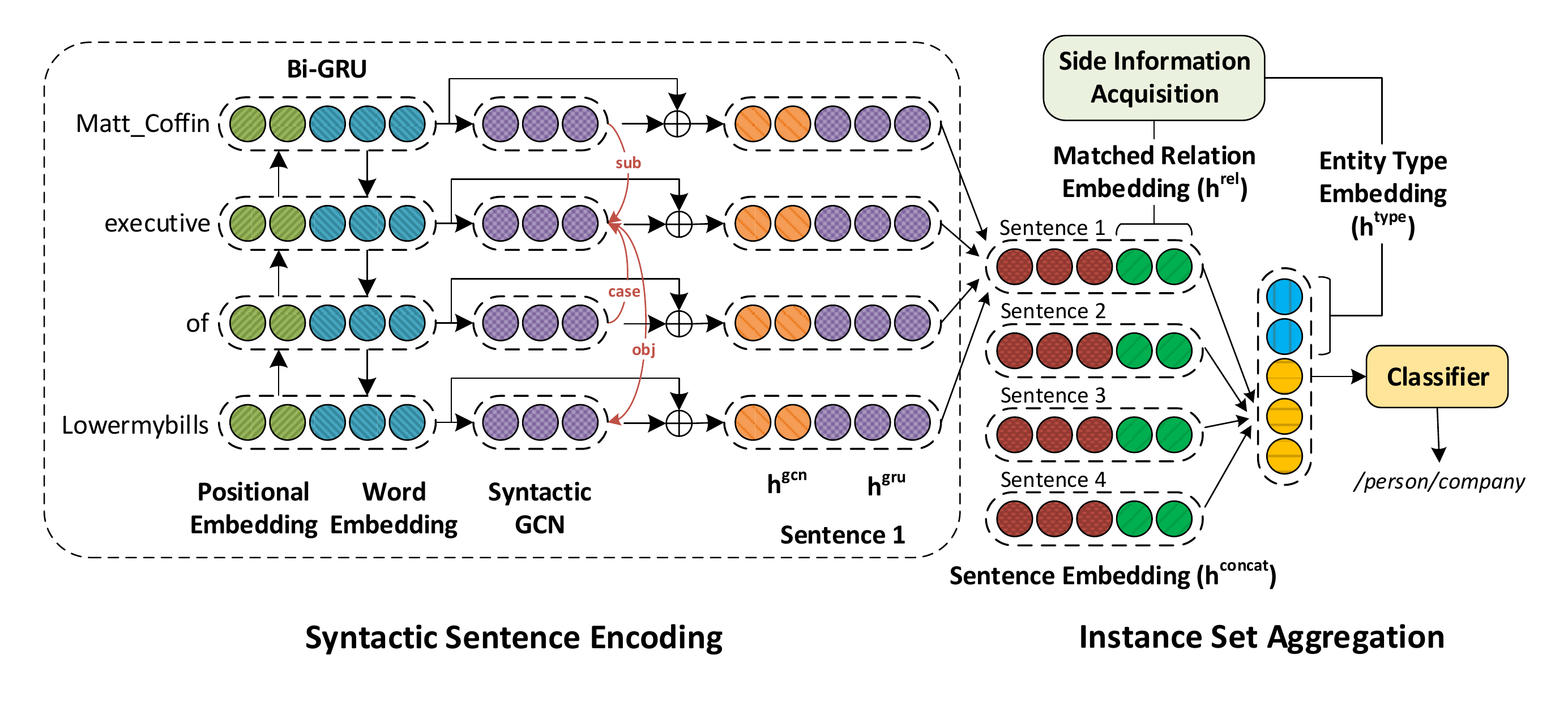}
	\caption{\small \label{reside_fig:model_overview}Overview of RESIDE. RESIDE first encodes each sentence in the bag by concatenating embeddings (denoted by $\oplus$) from Bi-\lstm{} and Syntactic GCN for each token, followed by word attention (denoted by arrows). Then, sentence embedding is concatenated with relation alias information, which comes from the Side Information Acquisition Section (\reffig{reside_fig:rel_alias}), before computing attention over sentences. Finally, bag representation with entity type information is fed to a softmax classifier. Please see \refsec{reside_sec:details} for more details.}  
\end{figure*}

\section{Proposed Method: RESIDE}
\label{reside_sec:details}

\subsection{Overview}
\label{reside_sec:overview}

In multi-instance learning paradigm, we are given a bag of sentences (or instances) $\{s_1, s_2, ... s_n\}$ for a given entity pair, the task is to predict the relation between them.
RESIDE consists of three components for learning a representation of a given bag, which is fed to a softmax classifier. We briefly present the components of RESIDE below. Each component will be described in detail in the subsequent sections. The overall architecture of RESIDE is shown in \reffig{reside_fig:model_overview}.

\begin{enumerate}[itemsep=2pt,parsep=0pt,partopsep=0pt,leftmargin=*]
	\item \textbf{\stepOne{}:} RESIDE uses a Bi-\lstm{} over the concatenated positional and word embedding for encoding the local context of each token. For capturing long-range dependencies, GCN over dependency tree is employed and its encoding is appended to the representation of each token. Finally, attention over tokens is used to subdue irrelevant tokens and get an embedding for the entire sentence. More details in \refsec{reside_sec:sent_encoder}.
	
	\item \textbf{\stepTwo{}:} In this module, we use additional supervision from KBs and utilize Open IE methods for getting relevant side information. This information is later utilized by the model as described in \refsec{reside_sec:sideinfo}.
	
	\item \textbf{\stepThree{}:} In this part, sentence representation from syntactic sentence encoder is concatenated with the \textit{matched relation embedding} obtained from the previous step. Then, using attention over sentences, a representation for the entire bag is learned. This is then concatenated with \textit{entity type embedding} before feeding into the softmax classifier for relation prediction. Please refer to \refsec{reside_sec:rel_ext} for more details.
\end{enumerate}

Below, we provide the detailed description of the components of RESIDE.

\subsection{\stepOne{}}
\label{reside_sec:sent_encoder}
For each sentence in the bag $s_i$ with $m$ tokens $\{w_1, w_2, ... w_m\}$, we first represent each token by $k$-dimensional GloVe embedding \cite{glove}. For incorporating relative position of tokens with respect to target entities, we use $p$-dimensional position embeddings, as used by \cite{zeng2014relation}. The combined token embeddings are stacked together to get the sentence representation $\m{H} \in \mathbb{R}^{m \times (k+2p)}$. Then, using Bi-\lstm{} \cite{gru_paper} over $\m{H}$, we get the new sentence representation $\m{H}^{gru} \in \mathbb{R}^{m \times d_{gru}}$, where $d_{gru}$ is the hidden state dimension. Bi-\lstm{}s have been found to be quite effective in encoding the context of tokens in several tasks \cite{seq2seq,rnn_speech_recog}.

Although Bi-\lstm{} is capable of capturing local context, it fails to capture long-range dependencies which can be captured through dependency edges. Prior works \cite{distant_supervision2009,see_paper} have exploited features from syntactic dependency trees for improving relation extraction. Motivated by their work, we employ Syntactic Graph Convolution Networks for encoding this information. For a given sentence, we generate its dependency tree using Stanford CoreNLP \cite{stanford_corenlp}. We then run GCN over the dependency graph and use Equation \ref{eqn:gated_gcn} for updating the embeddings, taking $\m{H}^{gru}$ as the input. Since dependency graph has 55 different edge labels, incorporating all of them over-parameterizes the model significantly. Therefore, following \cite{gcn_srl,gcn_event,neuraldater} we use only three edge labels based on the direction of the edge \{\textit{forward ($\rightarrow$), backward ($\leftarrow$), self-loop ($\top$)}\}. We define the new edge label $L_{uv}$ for an edge $(u,v,l_{uv})$ as follows:
\[ 
	L_{uv} =
	\begin{cases} 
		\rightarrow & \text{if edge exists in dependency parse} \\
		\leftarrow 	& \text{if edge is an inverse edge}	\\
		\top 		& \text{if edge is a self-loop}
	\end{cases}
\]

For each token $w_i$, GCN embedding $h^{gcn}_{i_{k+1}} \in \mathbb{R}^{d_{gcn}}$ after $k^{th}$ layer is defined as:
\[
h^{gcn}_{i_{k+1}} = f \Bigg(\sum_{u \in \m{N}(i)} g_{iu}^{k} \times \left(W_{L_{iu}}^{k}h^{gcn}_{u_{k}} + b_{L_{iu}}^{k}\right) \Bigg).
\]
Here, $g_{iu}^{k}$ denotes edgewise gating as defined in Equation \ref{eqn:gated_gcn} and $L_{iu}$ refers to the edge label defined above. We use ReLU as activation function $f$, throughout our experiments. The syntactic graph encoding from GCN is appended to Bi-\lstm{} output to get the final token representation, $h^{concat}_i$ as $[h^{gru}_i; h^{gcn}_{i^{k+1}}]$.
Since, not all tokens are equally relevant for RE task, we calculate the degree of relevance of each token using attention as used in \cite{bgwa_paper}. For token $w_i$ in the sentence, attention weight $\alpha_{i}$ is calculated as:
$$ \alpha_{i} = \dfrac{\text{exp}(u_{i})}{\sum_{j=1}^{m}{\text{exp}(u_{j})}} \, \, \text{ where, } u_{i} = h^{concat}_{i} \cdot r .$$
where $r$ is a random query vector and $u_{i}$ is the relevance score assigned to each token. Attention values $\{\alpha_{i}\}$ are calculated by taking softmax over $\{u_{i}\}$. The representation of a sentence is given as a weighted sum of its tokens,
$ s = \sum_{j=1}^{m}{\alpha_{i}h^{concat}_i}$.

\subsection{\stepTwo{}}
\label{reside_sec:sideinfo}

Relevant side information has been found to improve performance on several tasks \cite{figer_paper,cesi_paper}. In distant supervision based relation extraction, since the entities are from a KB, knowledge about them can be utilized to improve relation extraction. Moreover, several unsupervised relation extraction methods (Open IE) \cite{stanford_openie,ollie} allow extracting relation phrases between target entities without any predefined ontology and thus can be used to obtain relevant side information. In RESIDE, we employ Open IE methods and additional supervision from KB for improving neural relation extraction. 

\begin{figure*}[t]
	\centering
	\includegraphics[width=\textwidth]{./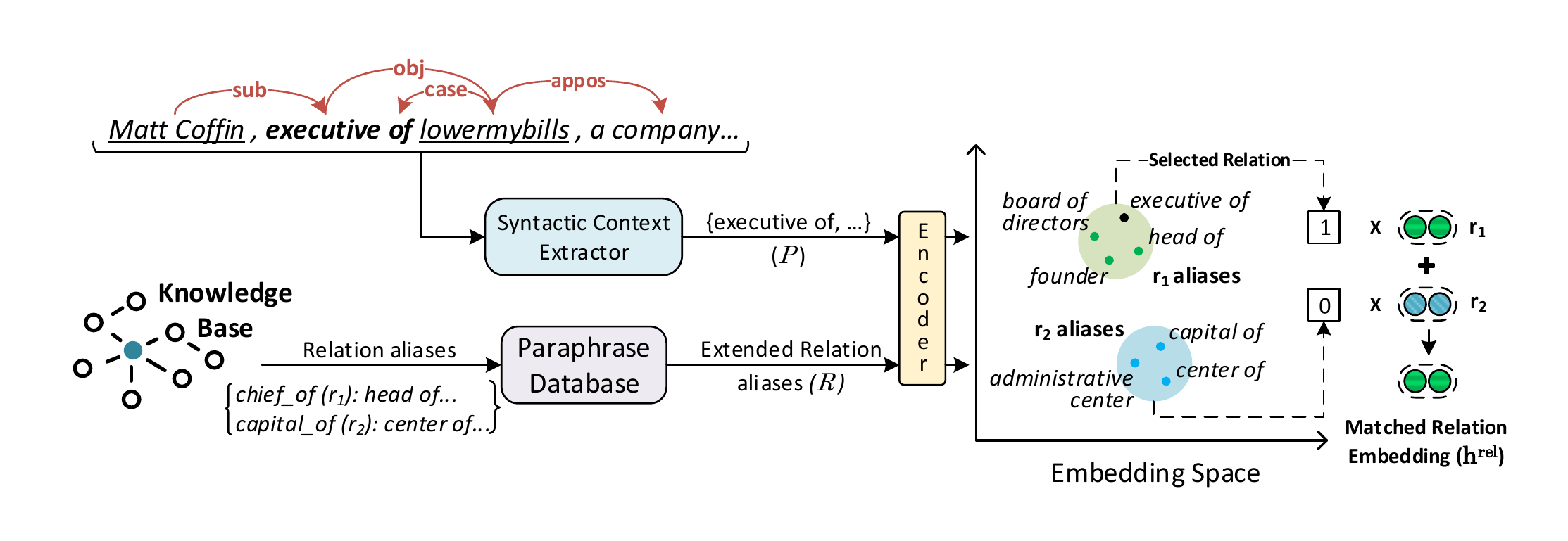}
	\caption{\label{reside_fig:rel_alias}\small
		Relation alias side information extraction for a given sentence. First, Syntactic Context Extractor identifies relevant relation phrases $\m{P}$ between target entities. They are then matched in the embedding space with the extended set of relation aliases $\m{R}$ from KB. Finally, the relation embedding corresponding to the closest alias is taken as relation alias information. Please refer \refsec{reside_sec:sideinfo}. } 
\end{figure*}

\subsubsection*{Relation Alias Side Information}
\label{reside_sec:alias_sideinfo}

RESIDE uses Stanford Open IE \cite{stanford_openie} for extracting relation phrases between target entities, which we denote by $\m{P}$. As shown in \reffig{reside_fig:rel_alias}, for the sentence \textit{Matt Coffin, executive of lowermybills, a company..}, Open IE methods extract \textit{``executive of"} between \textit{Matt Coffin} and \textit{lowermybills}. Further, we extend $\m{P}$ by including tokens at one hop distance in dependency path from target entities. Such features from dependency parse have been exploited in the past by \cite{distant_supervision2009,see_paper}. The degree of match between the extracted phrases in $\m{P}$ and aliases of a relation can give important clues about the relevance of that relation for the sentence. 
Several KBs like Wikidata provide such relation aliases, which can be readily exploited. In RESIDE, we further expand the relation alias set using Paraphrase database (PPDB) \cite{ppdb2}. We note that even for cases when aliases for relations are not available, providing only the names of relations give competitive performance. We shall explore this point further in  \refsec{reside_sec:results_rel_side}.

For matching $\m{P}$ with the PPDB expanded relation alias set $\m{R}$, we project both in a $d$-dimensional space using GloVe embeddings \cite{glove}. Projecting phrases using word embeddings helps to further expand these sets, as semantically similar words are closer in embedding space \cite{word2vec,glove}. Then, for each phrase $p \in \m{P}$, we calculate its cosine distance from all relation aliases in $\m{R}$ and take the relation corresponding to the closest relation alias as a matched relation for the sentence. We use a threshold on cosine distance to remove noisy aliases. In RESIDE, we define a $k_r$-dimensional embedding for each relation which we call as \textit{matched relation embedding} $(h^{rel})$. For a given sentence, $h^{rel}$ is concatenated with its representation $s$, obtained from syntactic sentence encoder (\refsec{reside_sec:sent_encoder}) as shown in \reffig{reside_fig:model_overview}. For sentences with $|\m{P}|>1$, we might get multiple matched relations. In such cases, we take the average of their embeddings. We hypothesize that this helps in improving the performance and find it to be true as shown in \refsec{reside_sec:results}. 

\subsubsection*{Entity Type Side Information}
\label{reside_sec:type_sideinfo}
Type information of target entities has been shown to give promising results on relation extraction \cite{figer_paper,typeinfo2017}. Every relation puts some constraint on the type of entities which can be its subject and object. For example, the relation \textit{person/place\_of\_birth} can only occur between a \textit{person} and a \textit{location}. Sentences in distance supervision are based on entities in KBs, where the type information is readily available.

In RESIDE, we use types defined by FIGER \cite{figer_paper} for entities in Freebase. For each type, we define a $k_t$-dimensional embedding which we call as \textit{entity type embedding} ($h^{type}$). For cases when an entity has multiple types in different contexts, for instance, \textit{Paris} may have types \textit{government} and \textit{location}, we take the average over the embeddings of each type. We concatenate the \textit{entity type embedding} of target entities to the final bag representation before using it for relation classification. To avoid over-parameterization, instead of using all fine-grained 112 entity types, we use 38 coarse types which form the first hierarchy of FIGER types.

\subsection{\stepThree{}}
\label{reside_sec:rel_ext}

For utilizing all valid sentences, following \cite{lin2016neural,bgwa_paper}, we use attention over sentences to obtain a representation for the entire bag. Instead of directly using the sentence representation $s_i$ from \refsec{reside_sec:sent_encoder}, we concatenate the embedding of each sentence with \textit{matched relation embedding} $h_i^{rel}$ as obtained from \refsec{reside_sec:sideinfo}. 
The attention score $\alpha_i$ for $i^{th}$ sentence is formulated as:
$$ \alpha_{i} = \dfrac{\text{exp}(\hat{s}_i \cdot q)}{\sum_{j=1}^{n}{\text{exp}(\hat{s}_j \cdot q)}} \, \, \text{ where, } \hat{s}_{i} = [s_{i};h^{rel}_i] .$$

here $q$ denotes a random query vector. The bag representation $\m{B}$, which is the weighted sum of its sentences, is then concatenated with the \textit{entity type embeddings} of the subject ($h^{type}_{sub}$) and object ($h^{type}_{obj}$) from \refsec{reside_sec:sideinfo} to obtain $\hat{\m{B}}$.
$$ \hat{\m{B}} = [\m{B};h^{type}_{sub}; h^{type}_{obj}]  \, \, \text{ where, } \m{B} = \sum_{i=1}^{n}{\alpha_i \hat{s}_i}.$$
Finally, $\hat{\m{B}}$ is fed to a softmax classifier to get the probability distribution over the relations.
$$p(y) = \mathrm{Softmax}(W \cdot \hat{\m{B}} + b).$$




\begin{table}[t]
	\small
	\centering
	\begin{tabular}{ccccc}
		\toprule
		Datasets 	& Split & \# Sentences 	& \# Entity-pairs \\
		\midrule
		\multirow{3}{*}{\shortstack{Riedel\\(\# Relations: 53)}} & Train & 455,771 & 233,064 \\
		 & Valid & 114,317 & 58,635 \\
		 & Test  & 172,448 & 96,678 \\
		 \midrule
 		\multirow{3}{*}{\shortstack{GDS\\(\# Relations: 5)}} & Train & 11,297 & 6,498 \\
 		& Valid & 1,864 & 1,082 \\
 		& Test  & 5,663 & 3,247 \\
		\bottomrule
		\addlinespace
	\end{tabular}
	\caption{\label{reside_tb:datasets}\small Details of datasets used. Please see \refsec{reside_sec:datasets} for more details. }
\end{table}

\section{Experiments}

\subsection{Experimental Setup}
\subsubsection{Datasets}
\label{reside_sec:datasets}

In our experiments, we evaluate the models on Riedel and Google Distant Supervision (GDS) dataset. Statistics of the datasets is summarized in \reftbl{reside_tb:datasets}. Below we described each in detail.
\begin{enumerate}
	\item \textbf{Riedel:} The dataset is developed by \cite{riedel2010modeling} by aligning Freebase relations with New York Times (NYT) corpus, where sentences from the year 2005-2006 are used for creating the training set and from the year 2007 for the test set. The entity mentions are annotated using Stanford NER \cite{finkel2005incorporating} and are linked to Freebase. The dataset has been widely used for RE by \cite{hoffmann2011knowledge,surdeanu2012multi} and more recently by \cite{lin2016neural,feng2017effective,see_paper}. 
	
	\item \textbf{GIDS:} \citet{bgwa_paper} created Google IISc Distant Supervision (GIDS) dataset by extending the Google relation extraction corpus\footnote{\href{https://research.googleblog.com/2013/04/50000-lessons-on-how-to-read-relation.html}{https://research.googleblog.com/2013/04/50000-lessons-on-how-to-read-relation.html}} with additional instances for each entity pair. The dataset assures that the at-least-one assumption of multi-instance learning, holds. This makes automatic evaluation more reliable and thus removes the need for manual verification.
\end{enumerate}


\begin{figure*}[t]
	\begin{minipage}{\textwidth}
		\captionsetup{type=figure} 
		\centering
		\subcaptionbox{Riedel dataset}
		{\includegraphics[width=.495\textwidth]{./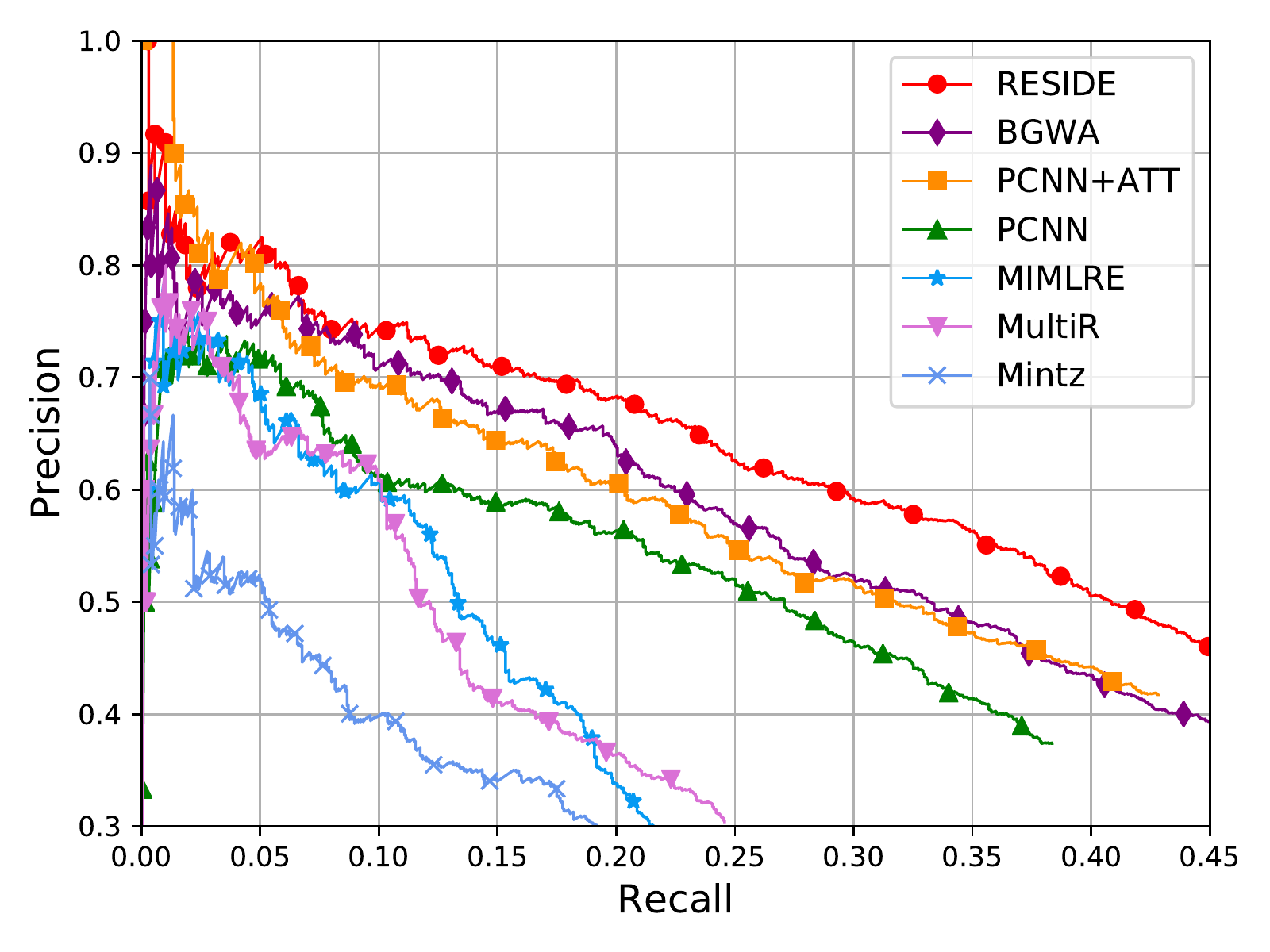}}
		\subcaptionbox{GIDS dataset}
		{\includegraphics[width=.495\textwidth]{./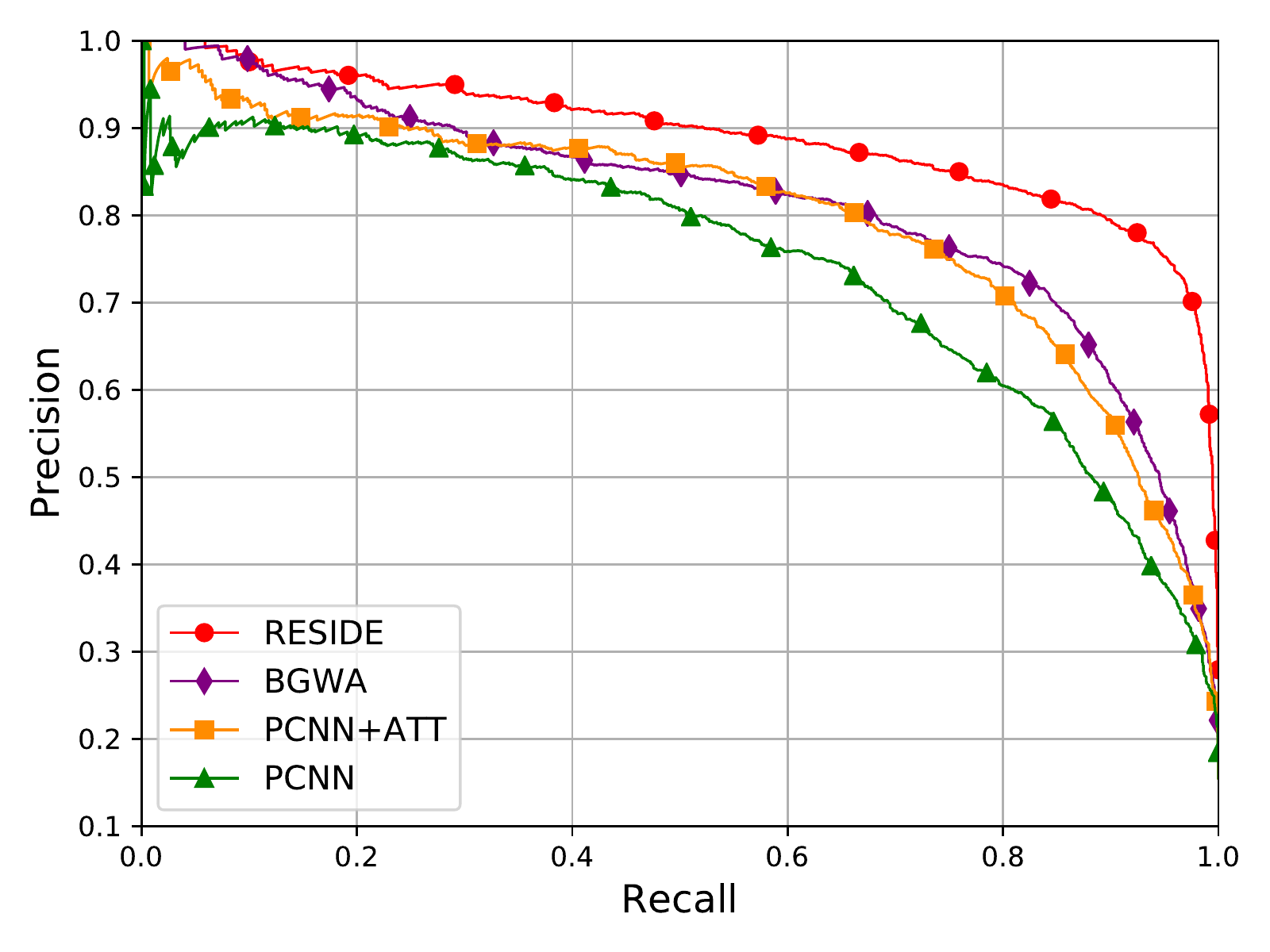}}
		\caption{\label{reside_fig:main_pr}Comparison of Precision-recall curve. RESIDE achieves higher precision over the entire range of recall than all the baselines on both datasets. Please refer \refsec{reside_sec:results_main} for more details.}
	\end{minipage}
\end{figure*}

\subsubsection{Baselines}
\label{reside_sec:baselines}
For evaluating RESIDE, we compare against the following baselines:

\begin{itemize}[itemsep=2pt,parsep=0pt,partopsep=0pt]
	\item \textbf{Mintz:} Multi-class logistic regression model proposed by \cite{distant_supervision2009} for distant supervision paradigm.
	\item \textbf{MultiR:} Probabilistic graphical model for multi instance learning by \cite{hoffmann2011knowledge} 
	\item \textbf{MIMLRE:} A graphical model which jointly models multiple instances and multiple labels. More details in \cite{surdeanu2012multi}.
	\item \textbf{PCNN:} A CNN based relation extraction model by \cite{zeng2015distant} which uses piecewise max-pooling for sentence representation.
	\item \textbf{PCNN+ATT:} A piecewise max-pooling over CNN based model which is used by \cite{lin2016neural} to get sentence representation followed by attention over sentences.
	\item \textbf{BGWA:} Bi-GRU based relation extraction model with word and sentence level attention \cite{bgwa_paper}.
	\item \textbf{RESIDE:} The method proposed in this chapter, please refer \refsec{reside_sec:details} for more details.
\end{itemize}

\subsubsection{Evaluation Criteria}
Following the prior works \cite{lin2016neural,feng2017effective}, we evaluate the models using held-out evaluation scheme. This is done by comparing the relations discovered from test articles with those in Freebase. We evaluate the performance of models with Precision-Recall curve and top-N precision (P@N) metric in our experiments.


\subsection{Results}
\label{reside_sec:results}

In this section we attempt to answer the following questions:
\begin{itemize}[itemsep=2pt,parsep=0pt,partopsep=0pt]
	\item[Q1.] Is RESIDE more effective than existing approaches for distant supervised RE? (\ref{reside_sec:results_main})
	\item[Q2.] What is the effect of ablating different components on RESIDE's performance? (\ref{reside_sec:results_sideinfo})
	\item[Q3.] How is the performance affected in the absence of relation alias information? (\ref{reside_sec:results_rel_side})
\end{itemize}

\begin{table*}[t!]
	\centering
	\begin{small}
		\begin{tabular}{lccc|ccc|ccc}
			\toprule
			& \multicolumn{3}{c}{One} & \multicolumn{3}{c}{Two} & \multicolumn{3}{c}{All}\\ 
			\cmidrule(r){2-4} \cmidrule(r){5-7} \cmidrule(r){8-10} 
			& P@100 & P@200 & P@300 & P@100 & P@200 & P@300 & P@100 & P@200 & P@300 \\
			\midrule
			PCNN		& 73.3	& 64.8	& 56.8	& 70.3	& 67.2	& 63.1	& 72.3	& 69.7	& 64.1 \\ 
			PCNN+ATT	& 73.3	& 69.2	& 60.8	& 77.2	& 71.6	& 66.1	& 76.2	& 73.1	& 67.4 \\
			BGWA		& 78.0	& 71.0	& 63.3	& 81.0	& 73.0	& 64.0	& 82.0	& 75.0	& 72.0 \\
			RESIDE	& \textbf{80.0}	& \textbf{75.5}	& \textbf{69.3}	& \textbf{83.0}	& \textbf{73.5}	& \textbf{70.6}	& \textbf{84.0}	& \textbf{78.5}	& \textbf{75.6} \\
			\bottomrule
			\addlinespace
		\end{tabular}
		\caption{\label{reside_tb:np_canonicalization}P@N for relation extraction using variable number of sentences in bags (with more than one sentence) in Riedel dataset. Here, One, Two and All represents the number of sentences randomly selected from a bag. RESIDE attains improved precision in all settings. More details in \refsec{reside_sec:results_main}}
	\end{small}
\end{table*}

%


\subsubsection{Performance Comparison}
\label{reside_sec:results_main}
For evaluating the effectiveness of our proposed method, RESIDE, we compare it against the baselines stated in \refsec{reside_sec:baselines}. We use only the neural baselines on GDS dataset. The Precision-Recall curves on Riedel and GDS are presented in \reffig{reside_fig:main_pr}. Overall, we find that RESIDE achieves higher precision over the entire recall range on both the datasets. All the non-neural baselines could not perform well as the features used by them are mostly derived from NLP tools which can be erroneous. RESIDE outperforms PCNN+ATT and BGWA which indicates that incorporating side information helps in improving the performance of the model. The higher performance of BGWA and PCNN+ATT over PCNN shows that attention helps in distant supervised RE. Following \cite{lin2016neural,softlabel_paper}, we also evaluate our method with different number of sentences. Results summarized in \reftbl{reside_tb:np_canonicalization}, show the improved precision of RESIDE in all test settings, as compared to the neural baselines, which demonstrates the efficacy of our model.


\subsubsection{Ablation Results}
\label{reside_sec:results_sideinfo}
In this section, we analyze the effect of various components of RESIDE on its performance. For this, we evaluate various versions of our model with cumulatively removed components. The experimental results are presented in \reffig{reside_fig:ablation}. We observe that on removing different components from RESIDE, the performance of the model degrades drastically. The results validate that GCNs are effective at encoding syntactic information. Further, the improvement from side information shows that it is complementary to the features extracted from text, thus validating the central thesis of this chapter, that inducing side information leads to improved relation extraction.


\subsubsection{Effect of Relation Alias Side Information}
\label{reside_sec:results_rel_side}
In this section, we test the performance of the model in setting where relation alias information is not readily available. For this, we evaluate the performance of the model on four different settings:

\begin{itemize}[itemsep=2pt,topsep=2pt,parsep=0pt,partopsep=0pt]
	\item \textbf{None:} Relation aliases are not available.
	\item \textbf{One:} The name of relation is used as its alias. 
	\item \textbf{One+PPDB:} Relation name extended using Paraphrase Database (PPDB). 
	\item \textbf{All:} Relation aliases from Knowledge Base\footnote{Each relation in Riedel dataset is manually mapped to corresponding Wikidata property for getting relation aliases. Few examples are presented in supplementary material.}
	
\end{itemize}

The overall results are summarized in \reffig{reside_fig:diff_aliases}. We find that the model performs best when aliases are provided by the KB itself. Overall, we find that RESIDE gives competitive performance even when very limited amount of relation alias information is available. We observe that performance improves further with the availability of more alias information.

\begin{figure}[t]
	\begin{minipage}{3.15in}
		\includegraphics[width=3.15in]{./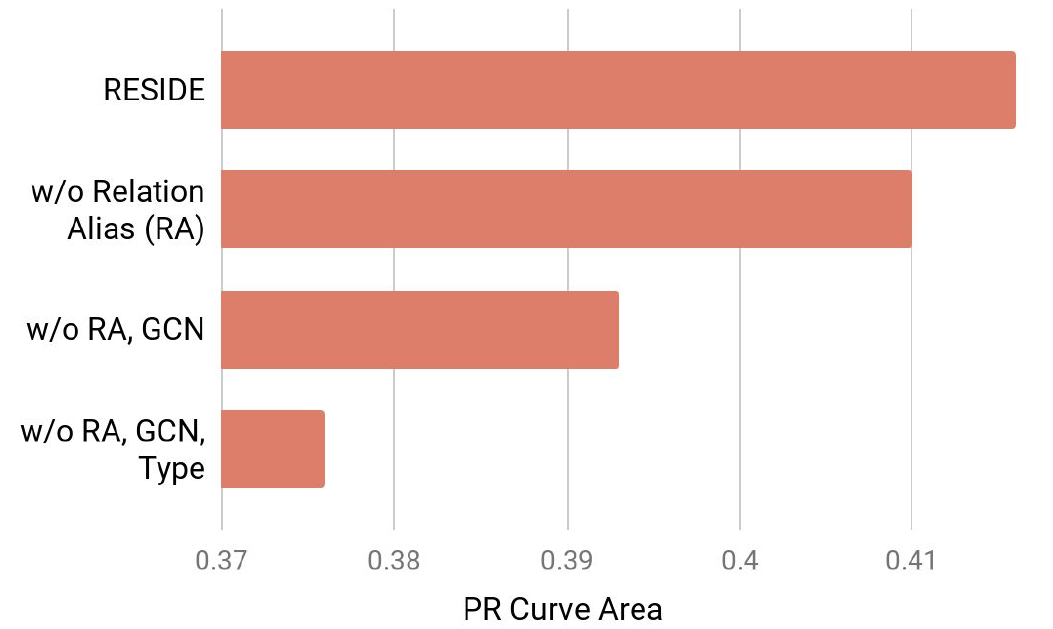}
		\subcaption{\label{reside_fig:ablation}\small Performance comparison of different ablated version of RESIDE on Riedel dataset. Overall, GCN and side information helps RESIDE improve performance. Refer \refsec{reside_sec:results_sideinfo}.}
	\end{minipage}
	\begin{minipage}{3.4in}
		\includegraphics[width=3.4in]{./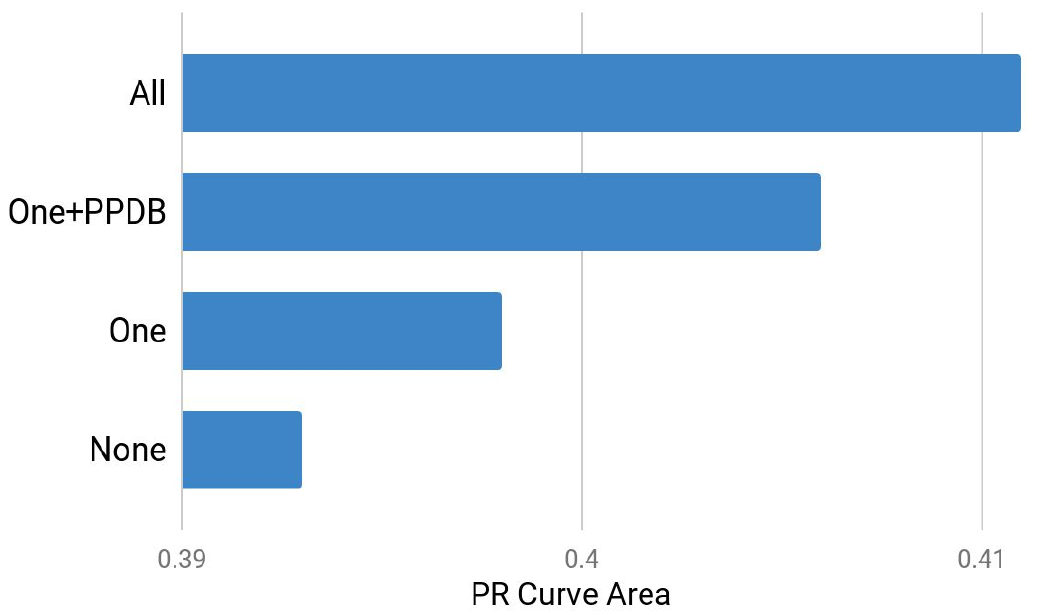}
		\subcaption{\label{reside_fig:diff_aliases}\small Performance on settings defined in \refsec{reside_sec:results_rel_side} with respect to the presence of relation alias side information on Riedel dataset. RESIDE performs comparably in the absence of relations from KB.}
	\end{minipage}
\end{figure}

\section{Conclusion}
\label{reside_sec:conclusion}

In this chapter, we propose RESIDE, a novel neural network based model which makes principled use of relevant side information, such as entity type and relation alias, from Knowledge Base, for improving distant supervised relation extraction. RESIDE employs Graph Convolution Networks for encoding syntactic information of sentences and is robust to limited side information. Through extensive experiments on benchmark datasets, we demonstrate RESIDE's effectiveness over state-of-the-art baselines. We have made RESIDE's source code publicly available to promote reproducible research.

\part{\partTwo{}}
\chapter{Documents Timestamping using Graph Convolutional Networks}
\label{chap_neuraldater}

\section{Introduction}
\label{neuraldater_sec:introduction}

In this chapter, we present our first application of Graph Convolutional Networks for solving document timestamping problem. 
Date of a document, also referred to as the Document Creation Time (DCT), is at the core of many important tasks, such as, information retrieval \cite{ir_time_usenix,ir_time_li,ir_time_dakka}, temporal reasoning \cite{temp_reasoner1,temp_reasoner2}, text summarization \cite{text_summ_time}, event detection \cite{event_detection}, and analysis of historical text \cite{history_time}, among others. In all such tasks, the document date is assumed to be available and also accurate -- a strong assumption, especially for arbitrary documents from the Web. Thus, there is a need to automatically predict the date of a document based on its content. This problem is referred to as \emph{Document Dating}.

\begin{figure}[t]
	\begin{minipage}{3.2in}
		\includegraphics[width=3.2in]{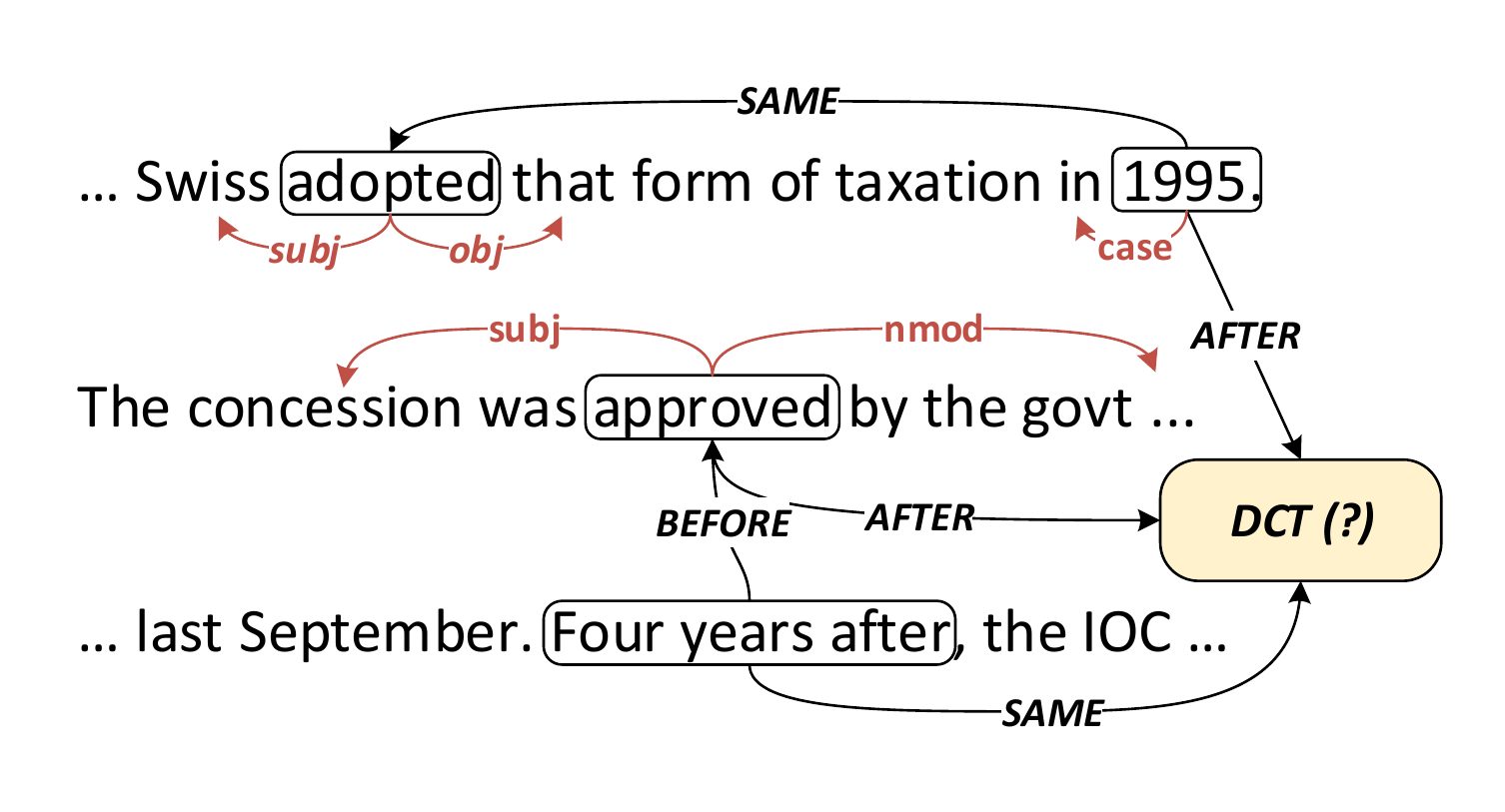}
	\end{minipage}
	\begin{minipage}{3.2in}
		\includegraphics[width=3.2in]{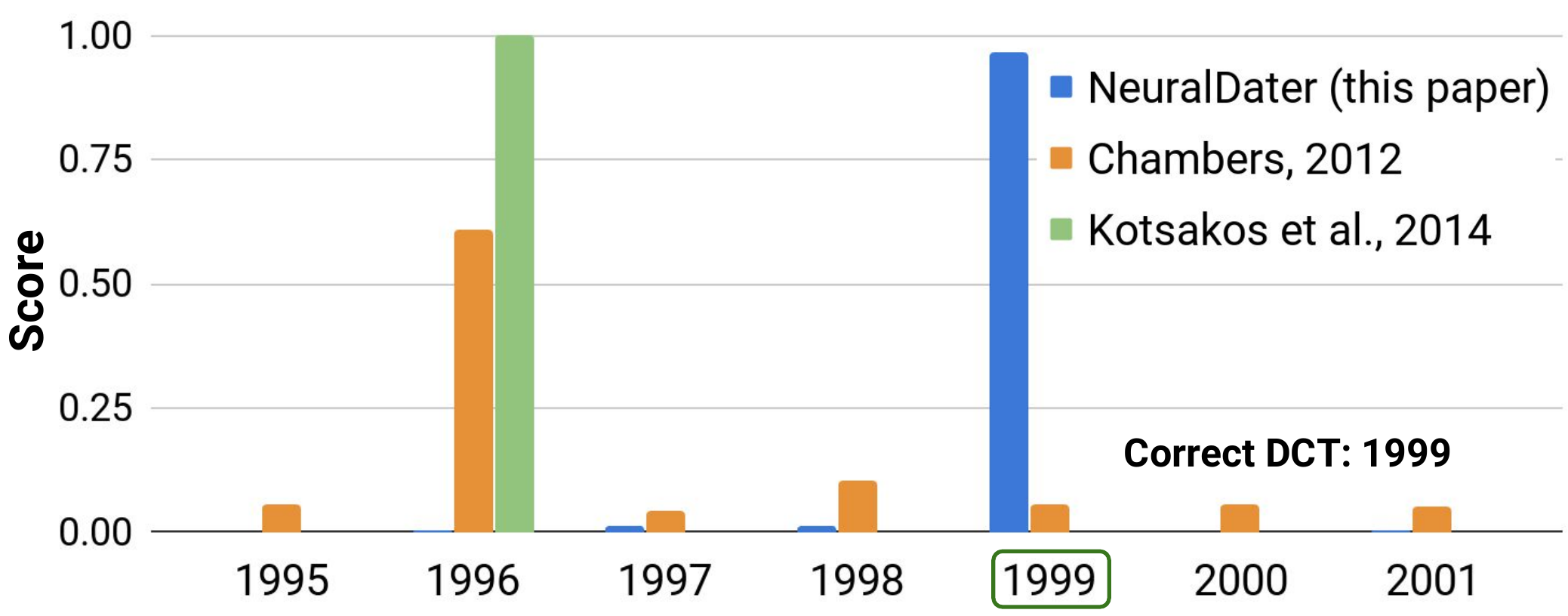}
	\end{minipage}
	\caption{\label{neuraldater_fig:motivation}\small \textbf{Top:} An example document annotated with syntactic and temporal dependencies. In order to predict the right value of 1999 for the Document Creation Time (DCT), inference over these document structures is necessary. \textbf{Bottom:} Document date prediction by two state-of-the-art-baselines and NeuralDater, the method proposed in this chapter. While the two previous methods are getting misled by the temporal expression (\textit{1995}) in the document, NeuralDater is able to use the syntactic and temporal structure of the document to predict the right value (\textit{1999}). 
	}
\end{figure}

Initial attempts on automatic document dating started with generative models by \cite{history_time}. This model is
later improved by \cite{temporal_entropy} who incorporate additional features such as POS tags, collocations, etc. \citet{Chambers:2012:LDT:2390524.2390539} shows significant improvement over these prior efforts through their discriminative models using handcrafted temporal features. \citet{Kotsakos:2014:BAD:2600428.2609495} propose a statistical approach for document dating exploiting term burstiness \cite{Lappas:2009:BSD:1557019.1557075}. 

Document dating is a challenging problem which requires extensive reasoning over the temporal structure of the document. Let us motivate this through an example shown in \reffig{neuraldater_fig:motivation}. In the document, \textit{four years after} plays a crucial role in identifying the creation time of the document. The existing approaches give higher confidence for timestamp immediate to the year mention \textit{1995}. NeuralDater exploits the syntactic and temporal structure of the document to predict the right timestamp (1999) for the document. With the exception of \cite{Chambers:2012:LDT:2390524.2390539}, all prior works on the document dating problem ignore such informative temporal structure within the document.

Research in document event extraction and ordering have made it possible to extract such temporal structures involving events, temporal expressions, and the (unknown) document date in a document \cite{catena_paper,Chambers14}. While methods to perform reasoning over such structures exist \cite{tempeval07,tempeval10,tempeval13,tempeval15,timebank03}, none of them have exploited advances in deep learning \cite{alexnet,microsoft_speech,deep_learning_book}. In particular, recently proposed Graph Convolution Networks (GCN) \cite{Defferrard:2016:CNN:3157382.3157527,kipf2016semi} have emerged as a way to learn graph representation while encoding structural information and constraints represented by the graph. We adapt GCNs for the document dating problem and make the following contributions:

\begin{itemize}
	\item We propose NeuralDater, a Graph Convolution Network (GCN)-based approach for document dating. To the best of our knowledge, this is the first application of GCNs, and more broadly deep neural network-based methods, for the document dating problem.
	\item NeuralDater is the first document dating approach which exploits syntactic as well temporal structure of the document, all within a principled joint model.
	\item Through extensive experiments on multiple real-world datasets, we demonstrate NeuralDater's effectiveness over state-of-the-art baselines.
\end{itemize}

NeuralDater's source code and datasets used in the chapter are available at \url{http://github.com/malllabiisc/NeuralDater}.
\section{Related Work}
\label{neuraldater_sec:related_work}

{\bf Automatic Document Dating}:
\citet{history_time} propose the first approach for automating document dating through a statistical language model. \citet{temporal_entropy} further extend this work by incorporating semantic-based preprocessing and temporal entropy \cite{temporal_entropy} based term-weighting. 
\citet{Chambers:2012:LDT:2390524.2390539} proposes a MaxEnt based discriminative model trained on hand-crafted temporal features. He also proposes a model to learn probabilistic constraints between year mentions and the actual creation time of the document. We draw inspiration from his work for exploiting temporal reasoning for document dating. 
\citet{Kotsakos:2014:BAD:2600428.2609495} propose a purely statistical method which considers lexical similarity alongside burstiness \cite{Lappas:2009:BSD:1557019.1557075} of terms for dating documents. To the best of our knowledge, NeuralDater, our proposed method,  is the first method to utilize deep learning techniques for the document dating problem.



{\bf Event Ordering Systems}: Temporal ordering of events is a vast research topic in NLP. The problem is posed as a temporal relation classification between two given temporal entities. Machine Learned classifiers and well crafted linguistic features for this task are used in \cite{Chambers:2007:CTR:1557769.1557820, E14-1033}. \citet{N13-1112} use a hybrid approach by adding 437 hand-crafted rules. \citet{Chambers:2008:JCI:1613715.1613803, P09-1046} try to classify with many more temporal constraints, while utilizing integer linear programming and Markov logic. 

CAEVO, a CAscading EVent Ordering architecture \cite{Chambers14} use sieve-based architecture 
\cite{sieve_architecture} 
for temporal event ordering for the first time. They mix multiple learners according to their precision based ranks and use transitive closure for maintaining consistency of temporal graph. \citet{catena_paper} recently propose CATENA (CAusal and TEmporal relation extraction from NAtural language texts), the first integrated system for the temporal and causal relations extraction between pre-annotated events and time expressions. They also incorporate sieve-based architecture which outperforms existing methods in temporal relation classification domain. We make use of CATENA for temporal graph construction in our work. 

{\bf Graph Convolutional Networks (GCN)}:
GCNs generalize Convolutional Neural Network (CNN) over graphs. GCN is introduced by \cite{gcn_first_work}, and later extended by \cite{Defferrard:2016:CNN:3157382.3157527} with efficient localized filter approximation in spectral domain. \citet{kipf2016semi} propose a first-order approximation of localized filters through layer-wise propagation rule. GCNs over syntactic dependency trees have been recently exploited in the field of semantic-role labeling \cite{gcn_srl}, neural machine translation \cite{gcn_nmt}, event detection \cite{gcn_event}. In our work, we successfully use GCNs for document dating.

\section{Proposed Approach: NeuralDater}
\label{neuraldater_sec:method}

\subsection{Overview}
\begin{figure*}[!t]
	\centering
	\includegraphics[width=7in]{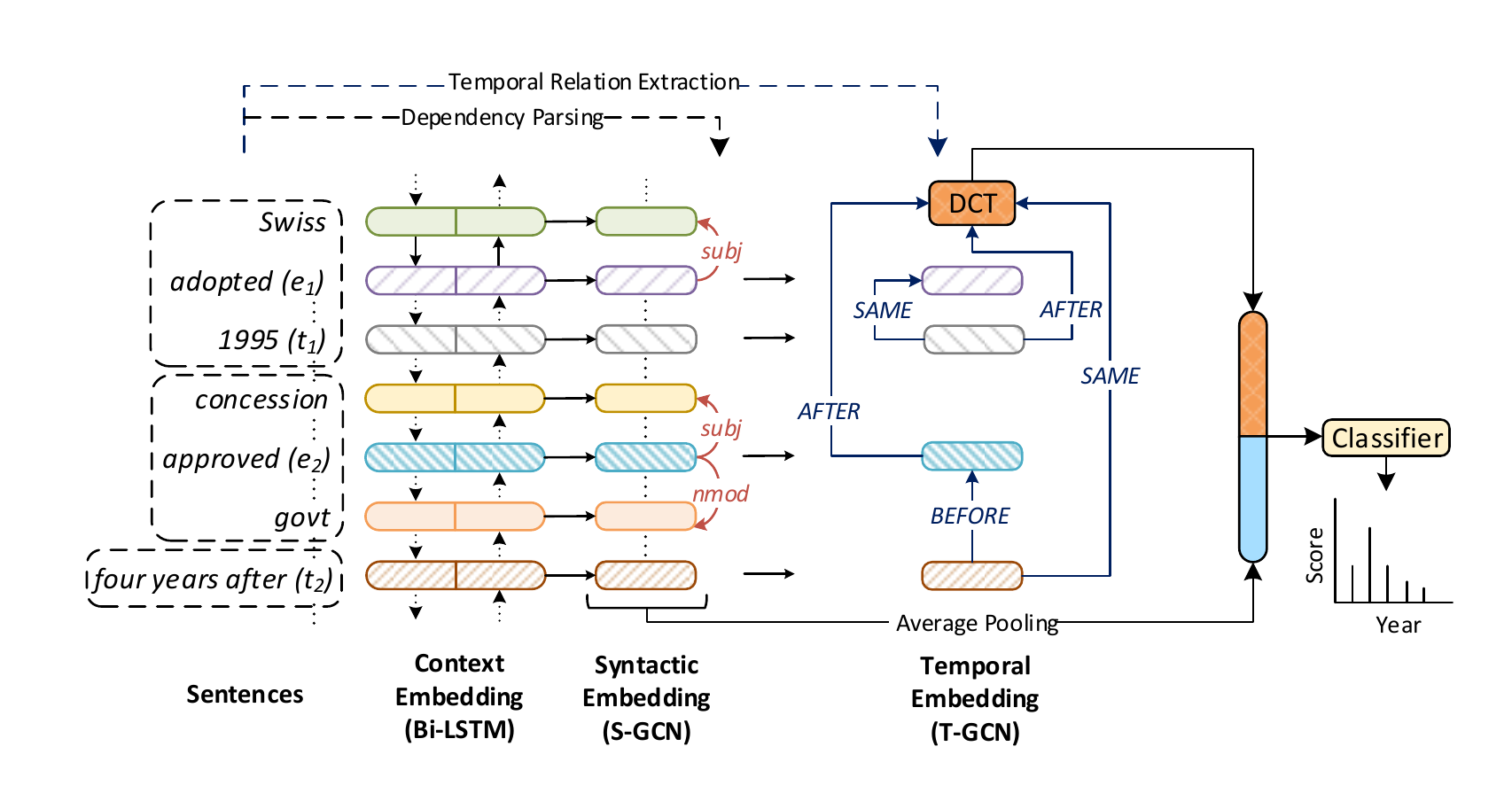}
	\caption{\label{neuraldater_fig:overview} \small Overview of NeuralDater. NeuralDater exploits syntactic and temporal structure in a document to learn effective representation, which in turn are used to predict the  document time. NeuralDater uses a Bi-directional LSTM (Bi-LSTM), two Graph Convolution Networks (GCN) -- one over the dependency tree and the other over the document's temporal graph -- along with a softmax classifier, all trained end-to-end jointly. Please see \refsec{neuraldater_sec:method} for more details.}	
\end{figure*}

The Documents Dating problem may be cast as a multi-class classification problem \cite{Kotsakos:2014:BAD:2600428.2609495,Chambers:2012:LDT:2390524.2390539}. In this section, we present an overview of NeuralDater, the document dating system proposed in this chapter. Architectural overview of NeuralDater is shown in \reffig{neuraldater_fig:overview}.

NeuralDater is a deep learning-based multi-class classification system. It takes in a document as input and returns its predicted date as output by exploiting the syntactic and temporal structure of document. 

NeuralDater network consists of three layers which learn an  embedding for the Document Creation Time (DCT) node corresponding to the document. This embedding is then fed to a softmax classifier which produces a distribution over timestamps. Following prior research \cite{Chambers:2012:LDT:2390524.2390539,Kotsakos:2014:BAD:2600428.2609495}, we work with year granularity for the experiments in this chapter.  We, however, note that NeuralDater can be trained for finer granularity with appropriate training data. The NeuralDater network is trained end-to-end using training data. We briefly present NeuralDater's various components below. Each component is described in greater detail in subsequent sections.

\begin{itemize}
	\item \textbf{Context Embedding}: In this layer, NeuralDater uses a Bi-directional LSTM (Bi-LSTM) to learn embedding for each token in the document. Bi-LSTMs have been shown to be quite effective in capturing local context inside token embeddings \cite{seq2seq}.
	\item \textbf{Syntactic Embedding}: In this step, NeuralDater revises token embeddings from the previous step by running a GCN over the dependency parses of sentences in the document. We refer to this GCN as \textbf{Syntactic GCN} or \textbf{S-GCN}. While the Bi-LSTM captures immediate local context in token embeddings, S-GCN augments them by capturing syntactic context. 
	\item \textbf{Temporal Embedding}: In this step, NeuralDater further refines embeddings learned by S-GCN to incorporate cues from temporal structure of event and times in the document. NeuralDater uses state-of-the-art causal and temporal relation extraction algorithm \cite{catena_paper} for extracting temporal graph for each document. A GCN is then run over this temporal graph to refine the embeddings from the previous layer. We refer to this GCN as \textbf{Temporal GCN} or \textbf{T-GCN}. In this step, a special DCT node is introduced whose embedding is also learned by the T-GCN.
	\item \textbf{Classifier}: Embedding of the DCT node along with average pooled embeddings learned by S-GCN are fed to a fully connected softmax classifier which makes the final prediction about the date of the document.
\end{itemize}

Even though the previous discussion is presented in a sequential manner, the whole network is trained in a joint end-to-end manner using backpropagation. Below, we present detailed description of various components of NeuralDater.

%
%
%

\subsection{Context Embedding (Bi-LSTM)}
\label{neuraldater_sec:et_gcn}

Let us consider a document $D$ with $n$ tokens  $w_1, w_2, ..., w_n$.
We first represent each token by a $k$-dimensional word embedding. For the experiments in this chapter, we use GloVe \cite{glove} embeddings. These token embeddings are stacked together to get the document representation $\m{X} \in \mathbb{R}^{n \times k}$. We then employ a Bi-directional LSTM (Bi-LSTM) \cite{lstm} on the input matrix $\m{X}$ to obtain contextual embedding for each token. After stacking contextual embedding of all these tokens, we get the new document representation matrix $\m{H}^{cntx} \in \mathbb{R}^{n \times r_{cntx}}$. In this new representation, each token is represented in a $r_{cntx}$-dimensional space. Our choice of LSTMs for learning contextual embeddings for tokens is motivated by the previous success of LSTMs in this task \cite{seq2seq}.

\subsection{Syntactic Embedding (S-GCN)} 
\label{neuraldater_sec:syntax_gcn}

While the Bi-LSTM is effective at capturing immediate local context of a token, it may not be as effective in capturing longer range dependencies among words in a sentence. For example, in \reffig{neuraldater_fig:motivation}, we would like the embedding of token \textit{approved} to be directly affected by \textit{govt}, even though they are not immediate neighbors. A dependency parse may be used to capture such longer-range connections. In fact, similar features were exploited by \cite{Chambers:2012:LDT:2390524.2390539} for the document dating problem. NeuralDater captures such longer-range information by using another GCN run over the syntactic structure of the document. We describe this in detail below.

The context embedding, $\m{H}^{cntx} \in \mathbb{R}^{n \times r_{cntx}}$ learned in the previous step is used as input to this layer. For a given document, we first extract its syntactic dependency structure by applying the Stanford CoreNLP's dependency parser \cite{stanford_corenlp} on each sentence in the document individually. We now employ the Graph Convolution Network (GCN) over this dependency graph using the GCN formulation presented in Section \ref{sec:directed_gcn}. We call this GCN the Syntactic GCN or S-GCN, as mentioned in \refsec{neuraldater_sec:method}.

Since S-GCN operates over the dependency graph and uses Equation \ref{eqn:gated_gcn} for updating embeddings, the number of parameters in S-GCN is directly proportional to the number of dependency edge types. Stanford CoreNLP's dependency parser returns 55 different dependency edge types. This large number of edge types is going to significantly over-parameterize S-GCN, thereby increasing the possibility of overfitting. In order to address this, we use only three edge types in S-GCN. For each edge connecting nodes $w_i$ and $w_j$ in $\m{E'}$ (see \refeqn{neuraldater_eqn:updated_edges}), we determine its new type $L(w_i, w_j)$ as follows:
\begin{itemize}
	\item $L(w_i, w_j) = \rightarrow$ if $(w_i, w_j, l(w_i, w_j)) \in \m{E'}$, i.e., if edge is an original dependency parse edge
	\item $L(w_i, w_j) = \leftarrow$ if $(w_i, w_j, l(w_i, w_j)^{-1}) \in \m{E'}$, i.e., if the edges is an inverse edge	
	\item $L(w_i, w_j) = \top$ if $(w_i, w_j, \top) \in \m{E'}$, i.e., if the edge is a self-loop with $w_i = w_j$
\end{itemize}
S-GCN now estimates embedding $h^{syn}_{w_{i}} \in \mathbb{R}^{r_{syn}}$ for each token $w_{i}$ in the document using the formulation shown below.
\[
h^{syn}_{w_i} = f \Bigg(\sum_{w_j \in \m{N}(w_i)}\left(W_{L(w_i, w_j)}h^{cntx}_{w_j} + b_{L(w_i, w_j)}\right) \Bigg)
\]
Please note S-GCN's use of the new edge types $L(w_i, w_j)$  above, instead of the $l(w_i, w_j)$ types used in Equation \ref{eqn:gated_gcn}. By stacking embeddings for all the tokens together, we get the new embedding matrix $\m{H}^{syn} \in \mathbb{R}^{n \times r_{syn}}$ representing the document.

\textbf{AveragePooling}: We obtain an embedding $h_{D}^{avg}$ for the whole document by average pooling of every token representation.
\begin{equation}
h_{D}^{avg} = \frac{1}{n} \sum_{i = 1}^{n} h_{w_i}^{syn}
\label{neuraldater_eqn:avg-pool}.
\end{equation}


\subsection{Temporal Embedding (T-GCN)}
\label{neuraldater_sec:t-gcn}

In this layer, NeuralDater exploits temporal structure of the document to learn an embedding for the Document Creation Time (DCT) node of the document. First, we describe the construction of temporal graph, followed by GCN-based embedding learning over this graph.

\textbf{Temporal Graph Construction}: NeuralDater uses Stanford's SUTime tagger \cite{sutime_paper} for date normalization and the event extraction classifier of \cite{Chambers14} for event detection. The annotated document is then passed to CATENA \cite{catena_paper}, current state-of-the-art temporal and causal relation extraction algorithm, to obtain a temporal graph for each document. 
Since our task is to predict the creation time of a given document, we supply DCT as unknown to CATENA. We hypothesize that the temporal relations extracted in absence of DCT are helpful for document dating and we indeed find this to be true, as shown in Section \ref{neuraldater_sec:results}. Temporal graph is a directed graph, where nodes correspond to events, time mentions, and the Document Creation Time (DCT). Edges in this graph represent causal and temporal relationships between them. Each edge is attributed with a label representing the type of the temporal relation. CATENA outputs 9 different types of temporal relations, out of which we selected five types, viz.,  \textit{AFTER}, \textit{BEFORE}, \textit{SAME}, \textit{INCLUDES}, and  \textit{IS\_INCLUDED}. The remaining four types were ignored as they were substantially infrequent. 

Please note that the temporal graph may involve only a small number of tokens in the document. For example, in the temporal graph in \reffig{neuraldater_fig:overview}, there are a total of 5 nodes: two temporal expression nodes (\textit{1995} and \textit{four years after}), two event nodes (\textit{adopted} and \textit{approved}), and a special DCT node. This graph also consists of temporal relation edges such as (\textit{four years after}, \textit{approved}, \textit{BEFORE}). 


\textbf{Temporal Graph Convolution}: NeuralDater employs a GCN over the temporal graph constructed above. We refer to this GCN as the Temporal GCN or T-GCN, as mentioned in \refsec{neuraldater_sec:method}. T-GCN is based on the GCN formulation presented in Section \ref{sec:directed_gcn}. Unlike S-GCN, here we consider label and direction specific parameters as the temporal graph consists of only five types of edges.

Let $n_T$ be the number of nodes in the temporal graph. Starting with $\m{H}^{syn}$ (\refsec{neuraldater_sec:syntax_gcn}), T-GCN learns a $r_{temp}$-dimensional embedding for each node in the temporal graph. Stacking all these embeddings together, we get the embedding matrix $\m{H}^{temp} \in \mathbb{R}^{n_{T} \times r_{temp}}$. T-GCN embeds the temporal constraints induced by the temporal graph in $h_{DCT}^{temp} \in \mathbb{R}^{r_{temp}}$, embedding of the DCT node of the document. 


\subsection{Classifier}
Finally, the DCT embedding $h_{DCT}^{temp}$ and average-pooled syntactic representation $h_{D}^{avg}$ (see \refeqn{neuraldater_eqn:avg-pool}) of document $D$ are concatenated and fed to a fully connected feed forward network followed by a softmax. This allows the NeuralDater to exploit context, syntactic, and temporal structure of the document to  predict the final document date $y$.
\begin{eqnarray*}
	h_{D}^{avg+temp} &=& \text{ } [h_{DCT}^{temp}~;~h_{D}^{avg}] \\ 
	p(y \vert D) &=& \mathrm{Softmax}(W \cdot h_{D}^{avg+temp} + b).
\end{eqnarray*}
\section{Experiments}

\subsection{Experimental Setup}
\label{neuraldater_sec:experiments}

\begin{table}[t]
	\centering
	\begin{tabular}{cccc}
		\toprule
		Datasets 	& \# Docs & Start Year & End Year\\
		\midrule
		APW 		&  675k	& 1995  & 2010 \\
		NYT			&  647k	& 1987  & 1996 \\
		\bottomrule
		\addlinespace
	\end{tabular}
	\caption{\label{neuraldater_tb:datasets}Details of datasets used. Please see \refsec{neuraldater_sec:experiments} for details.}
\end{table}


\textbf{Datasets}: We experiment on Associated Press Worldstream (APW) and New York Times (NYT) sections of Gigaword corpus \cite{gigaword5th}. The original dataset contains around 3 million documents of APW and 2 million documents of NYT from span of multiple years. From both sections, we randomly sample around 650k documents while maintaining balance among years. Documents belonging to years with substantially fewer documents are omitted. Details of the dataset can be found in Table \ref{neuraldater_tb:datasets}. For train, test and validation  splits, the dataset was randomly divided in 80:10:10 ratio.

\textbf{Evaluation Criteria}: Given a document, the model needs to predict the year in which the document was published. We measure performance in terms of overall accuracy of the model. 

\textbf{Baselines}: For evaluating NeuralDater, we compared against the following methods:

\begin{itemize}
	\item \textbf{BurstySimDater} \citet{Kotsakos:2014:BAD:2600428.2609495}:  This is a purely statistical method which uses lexical similarity and term burstiness \cite{Lappas:2009:BSD:1557019.1557075} for dating documents in arbitrary length time frame. For our experiments, we took the time frame length as 1 year. Please refer to \cite{Kotsakos:2014:BAD:2600428.2609495} for more details.
	\item \textbf{MaxEnt-Time-NER}: Maximum Entropy (MaxEnt) based classifier trained on hand-crafted temporal and Named Entity Recognizer (NER) based features. More details in \cite{Chambers:2012:LDT:2390524.2390539}. 
	\item \textbf{MaxEnt-Joint}: Refers to MaxEnt-Time-NER combined with year mention classifier as described in \cite{Chambers:2012:LDT:2390524.2390539}. 
	\item \textbf{MaxEnt-Uni-Time:} MaxEnt based discriminative model which takes bag-of-words representation of input document with normalized time expression as its features. 
	\item \textbf{CNN:} A Convolution Neural Network (CNN) \cite{cnn_paper} based text classification model proposed by \cite{yoon_kim}, which attained state-of-the-art results in several domains. 
	\item {\bf {NeuralDater}}: Our proposed method, refer Section \ref{neuraldater_sec:method}.
\end{itemize}

\textbf{Hyperparameters}: By default, edge gating (Section \ref{sec:directed_gcn}) is used in all GCNs. The parameter $K$ represents the number of layers in T-GCN (\refsec{neuraldater_sec:t-gcn}). We use 300-dimensional GloVe embeddings and 128-dimensional hidden state for both GCNs and BiLSTM with $0.8$ dropout. We used Adam \cite{adam_opt} with $0.001$ learning rate for training.

\begin{table}[t]
	\centering
	\begin{tabular}{lcc}
		\toprule
		Method 			 & APW & NYT \\
		\midrule		
		\addlinespace
		BurstySimDater 		& 45.9 & 38.5 \\
		MaxEnt-Time+NER		& 52.5 & 42.3 \\
		MaxEnt-Joint		& 52.5 & 42.5 \\
		MaxEnt-Uni-Time		& 57.5 & 50.5 \\
		CNN 				& 56.3 & 50.4 \\
		NeuralDater			& \textbf{64.1} & \textbf{58.9} \\
		\bottomrule
	\end{tabular}
	\caption{\label{neuraldater_tb:result_main}Accuracies of different methods on APW and NYT datasets for the document dating problem (higher is better). NeuralDater significantly outperforms all other competitive baselines. This is our main result. Please see \refsec{neuraldater_sec:perf_comp} for more details.}

\end{table}


\begin{table}[!t]
		\centering
		\begin{tabular}{lc}
			\toprule
			Method 			 & Accuracy \\
			\midrule		
			\addlinespace
			T-GCN 								& 57.3 \\
			S-GCN + T-GCN $(K=1)$				& 57.8 \\
			S-GCN + T-GCN $(K=2)$				& 58.8 \\
			S-GCN + T-GCN $(K=3)$				& \textbf{59.1} \\
			\midrule
			Bi-LSTM 							& 58.6 \\
			Bi-LSTM + CNN 						& 59.0 \\
			Bi-LSTM + T-GCN						& 60.5 \\
			Bi-LSTM + S-GCN + T-GCN (no gate)	& 62.7 \\
			Bi-LSTM + S-GCN + T-GCN $(K=1)$		& \textbf{64.1} \\
			Bi-LSTM + S-GCN + T-GCN $(K=2)$		& 63.8 \\
			Bi-LSTM + S-GCN + T-GCN $(K=3)$		& 63.3 \\
			\bottomrule
		\end{tabular}
	\caption{\label{neuraldater_tb:result_ablation}\small Accuracies of different ablated methods on the APW dataset. Overall, we observe that incorporation of context (Bi-LSTM), syntactic structure (S-GCN) and temporal structure (T-GCN) in NeuralDater achieves the best performance. Please see \refsec{neuraldater_sec:perf_comp} for details.}
\end{table}

\subsection{Results}
\label{neuraldater_sec:results}

\subsubsection{Performance Comparison}
\label{neuraldater_sec:perf_comp}

In order to evaluate the effectiveness of NeuralDater, our proposed method, we compare it against existing document dating systems and text classification models. The final results are summarized in Table \ref{neuraldater_tb:result_main}. Overall, we find that NeuralDater outperforms all other methods with a significant margin on both datasets. Compared to the previous state-of-the-art in document dating, BurstySimDater  \cite{Kotsakos:2014:BAD:2600428.2609495}, we get 19\% average absolute improvement in accuracy across both datasets. We observe only a slight gain in the performance of MaxEnt-based model (MaxEnt-Time+NER) of \cite{Chambers:2012:LDT:2390524.2390539} on combining with temporal constraint reasoner (MaxEnt-Joint). This may be attributed to the fact that the model utilizes only year mentions in the document, thus ignoring other relevant signals which might be relevant to the task. BurstySimDater performs considerably better in terms of precision compared to the other baselines,  although it significantly underperforms in accuracy. We note that NeuralDater outperforms all these prior models both in terms of precision and accuracy. We find that even generic deep-learning based text classification models, such as CNN \cite{yoon_kim}, are quite effective for the problem. However, since such a model doesn't give specific attention to temporal features in the document, its performance remains limited. From \reffig{neuraldater_fig:results_mean_dev}, we observe that NeuralDater's top prediction achieves on average the lowest deviation from the true year.

\subsubsection{Ablation Comparisons}
\label{neuraldater_sec:ablation}

For demonstrating the efficacy of GCNs and BiLSTM for the problem, we evaluate different ablated variants of NeuralDater on the APW dataset. Specifically, we validate the importance of using syntactic and temporal GCNs and the effect of eliminating BiLSTM from the model. Overall results are summarized in Table \ref{neuraldater_tb:result_ablation}. The first block of rows in the table corresponds to the case when BiLSTM layer is excluded from NeuralDater, while the second block denotes the case when BiLSTM is included. We also experiment with multiple stacked layers of T-GCN (denoted by $K$) to observe its effect on the performance of the model. 

We observe that embeddings from Syntactic GCN (S-GCN) are much better than plain GloVe embeddings for T-GCN as S-GCN encodes the syntactic neighborhood information in event and time embeddings which makes them more relevant for document dating task.

Overall, we observe that including BiLSTM in the model improves performance significantly. Single BiLSTM model outperforms all the models listed in the first block of  \reftbl{neuraldater_tb:result_ablation}. Also, some gain in performance is observed on increasing the number of T-GCN layers ($K$) in absence of BiLSTM, although the same does not follow when BiLSTM is included in the model. This observation is consistent with \cite{gcn_srl}, as multiple GCN layers become redundant in the presence of BiLSTM. We also find that eliminating edge gating from our best model deteriorates its overall performance.

In summary, these results validate our thesis that joint incorporation of syntactic and temporal structure of a document in NeuralDater results in improved performance.

\begin{figure}[t]
	\begin{minipage}{3.4in}
		\includegraphics[width=3.4in]{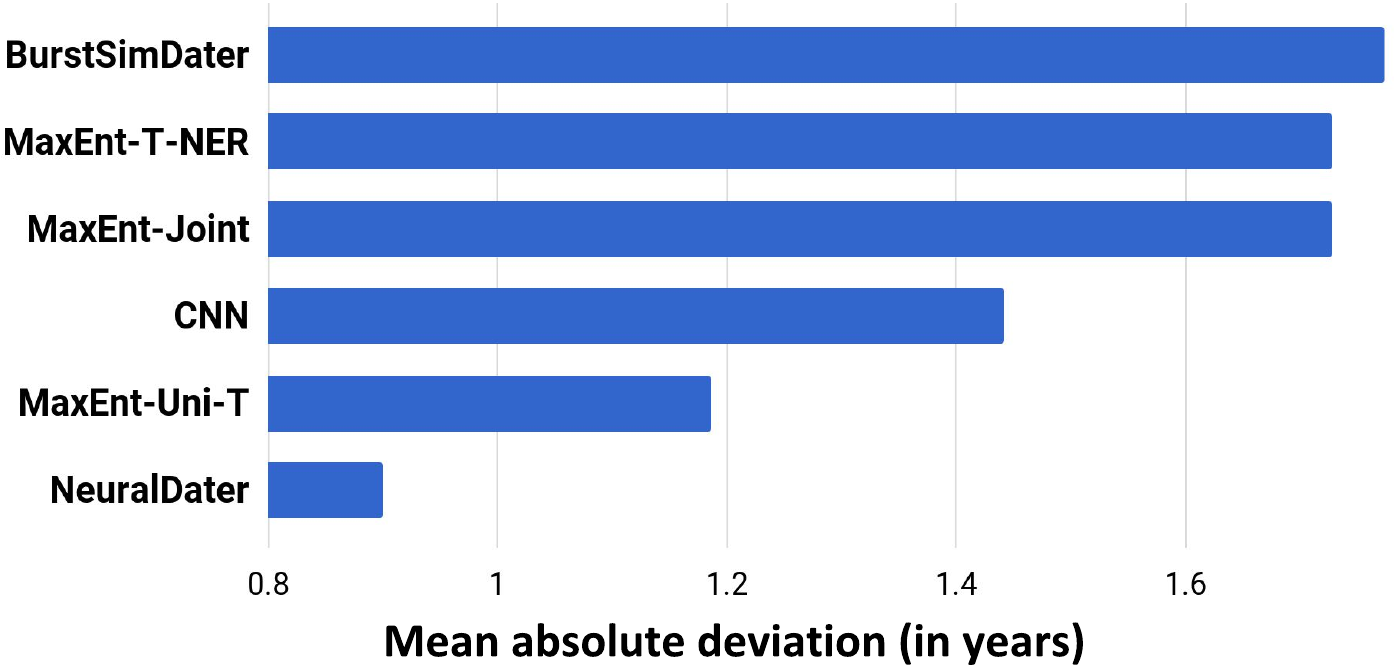}
		\subcaption{\label{neuraldater_fig:results_mean_dev}\small Mean absolute deviation (in years; lower is better) between a model's top prediction and the true year in the APW dataset. We find that NeuralDater, the proposed method, achieves the least deviation. Please see \refsec{neuraldater_sec:perf_comp} for details}
	\end{minipage}
	\begin{minipage}{3.0in}
		\includegraphics[width=3.0in]{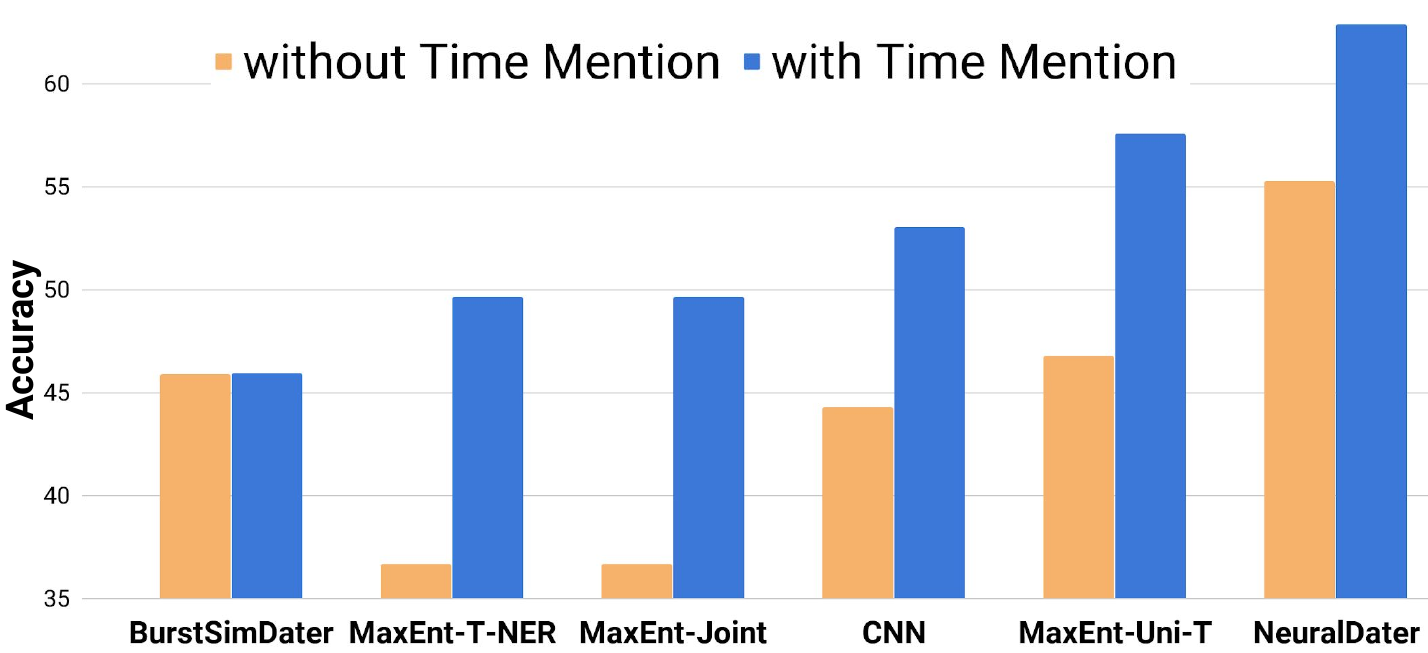}
		\subcaption{\label{neuraldater_fig:results_time_mention}\small Evaluating performance of different methods on dating  documents with and without time mentions. Please see \refsec{neuraldater_sec:discussion} for details.}
	\end{minipage}
\end{figure}

\subsubsection{Discussion and Error Analysis}
\label{neuraldater_sec:discussion}


In this section, we list some of our observations while trying to identify pros and cons of NeuralDater, our proposed method. We divided the development split of the APW dataset into two sets -- those with and without any mention of time expressions (year). We apply NeuralDater and other methods to these two sets of documents and report accuracies in \reffig{neuraldater_fig:results_time_mention}. We find that overall, NeuralDater performs better in comparison to the existing baselines in both scenarios. Even though the performance of NeuralDater degrades in the absence of time mentions, its performance is still the best relatively. 
Based on other analysis, we find that NeuralDater fails to identify timestamp of documents reporting local infrequent incidents without explicit time mention. NeuralDater becomes confused in the presence of multiple misleading time mentions; it also loses out on documents discussing events which are outside the time range of the text on which the model was trained. In future, we plan to eliminate these pitfalls by incorporating additional signals from Knowledge Graphs about entities mentioned in the document. We also plan to utilize free text temporal expression \cite{temponym_paper} in documents for improving performance on this problem. 

\section{Conclusion}
\label{neuraldater_sec:conclusion}

We propose NeuralDater, a Graph Convolutional Network (GCN) based method for document dating which exploits syntactic and temporal structures in the document in a principled way. To the best of our knowledge, this is the first application of deep learning techniques for the problem of document dating. Through extensive experiments on real-world datasets, we demonstrate the effectiveness of NeuralDater over existing state-of-the-art approaches. 
We are hopeful that the representation learning techniques explored in this chapter will inspire further development and adoption of such techniques in the temporal information processing research community.
\chapter{Incorporating Syntactic and Semantic Information in Word Embeddings using Graph Convolutional Networks}
\label{chap_wordgcn}

\section{Introduction}
\label{wordgcn_sec:intro}

As we saw in the last chapter, Graph Convolutional Networks prove to be very effective for exploiting different graph structure in NLP. Here, we utilize them for learning word representation. Representing words as real-valued vectors is an effective and widely adopted technique in NLP. Such representations capture 
properties of words based on their usage and allow them to generalize across tasks. Meaningful word embeddings have been shown to improve performance on several relevant tasks, such as named entity recognition (NER) \cite{app_ner}, parsing \cite{parsing_app}, and part-of-speech (POS) tagging \cite{pos_tagging}. Using word embeddings for initializing Deep Neural Networks has also been found to be quite useful \cite{Collobert2011,multilingual_google,lisa_paper}. 

Most popular methods for learning word embeddings are based on the distributional hypothesis, which utilizes the co-occurrence statistics from \emph{sequential} context of words for learning word representations \cite{word2vec,glove}. More recently, this approach has been extended to include syntactic contexts \cite{deps_paper} derived from dependency parse of text. Higher order dependencies have also been exploited by \citet{ext_paper,Li2018a}. Syntax-based embeddings encode functional similarity (in-place substitutable words) rather than topical similarity (topically related words) which provides an advantage on specific tasks like question classification \cite{ext_paper}.
However, current approaches incorporate syntactic context by concatenating words with their dependency relations. 
 For instance, in Figure \ref{wordgcn_fig:wordgcn_overview} \textit{scientists\_subj}, \textit{water\_obj}, and \textit{mars\_nmod} needs to be included as a part of vocabulary for utilizing the dependency context of \textit{discover}. 
 This severely expands the vocabulary, thus limiting the
scalability of models on large corpora. For instance, in \citet{deps_paper} and \citet{ext_paper}, the context vocabulary explodes to around 1.3 million for learning embeddings of 220k words.

Incorporating relevant signals from semantic knowledge sources such as WordNet \cite{wordnet}, FrameNet \cite{framenet}, and Paraphrase Database (PPDB) \cite{ppdb_paper} has been shown to improve the quality of word embeddings. Recent works utilize these by incorporating them in a neural language modeling objective function 
\cite{Mo2014,japanese2018}, or as a post-processing step \cite{Faruqui2014,counter_paper}. Although existing approaches improve the quality of word embeddings, they require explicit modification for handling different types of semantic information. 

 
Recently proposed Graph Convolutional Networks (GCN) \cite{Defferrard2016,Kipf2016} have been found to be useful for encoding structural information in graphs. Even though GCNs have been successfully employed for several NLP tasks such as machine translation \cite{gcn_nmt}, semantic role labeling \cite{gcn_srl}, document dating \cite{neuraldater} and text classification \cite{gcn_text_class}, they have so far not been used for learning word embeddings, especially leveraging cues such as syntactic and semantic information. GCNs provide flexibility to represent diverse syntactic and semantic relationships between words all within one framework, without requiring relation-specific special handling as in previous methods. 
Recognizing these benefits, we make the following contributions in this chapter. 
 
\noindent
\begin{enumerate}[itemsep=2pt,parsep=0pt,partopsep=0pt,leftmargin=*,topsep=0.1pt]
	\item We propose \wordgcnMethod{}, a Graph Convolution based method for learning word embeddings. Unlike previous methods, \wordgcnMethod{} 
		utilizes syntactic context for learning word representations without increasing vocabulary size. 
	\item We also present \wordgcnMethodSide{}, a framework for incorporating diverse semantic knowledge (e.g., synonymy, antonymy, hyponymy, etc.) in learned word embeddings, without requiring relation-specific special handling as in previous methods.
	\item Through experiments on multiple intrinsic and extrinsic tasks, we demonstrate that our proposed methods obtain substantial improvement over state-of-the-art approaches,  
	and also yield an advantage when used in conjunction with methods such as ELMo \cite{elmo_paper}.
\end{enumerate}
\section{Related Work}
 \label{cesi_sec:related}

{\bf Entity Linking}: One traditional approach to canonicalizing noun phrases is to map them to an existing KB such as Wikipedia or Freebase. This problem is known as Entity Linking (EL) or Named Entity Disambiguation (NED). Most approaches generate a list of candidate entities for each NP and re-rank them using machine learning techniques. Entity linking has been an active area of research in the NLP community \cite{Trani:2014:DOS:2878453.2878558,Lin:2012:ELW:2391200.2391216,Ratinov:2011:LGA:2002472.2002642}. A major problem with these kind of approaches is that many NPs may refer to new and emerging entities which may not exist in KBs. One approach to resolve these noun phrases is to map them to NIL or an OOKB (Out of Knowledge Base) entity, but the problem still remains as to how to cluster these NIL mentions. Although entity linking is not the best approach to NP canonicalization, we still leverage signals from entity linking systems for improved canonicalization in CESI.

{\bf Canonicalization in Ontological KBs}: Concept Resolver \cite{Krishnamurthy:2011:NPD:2002472.2002545} is used for clustering NP mentions in NELL \cite{NELL-aaai15}. It makes ``one sense per category" assumption which states that a noun phrase can refer to at most one concept in each category of NELL's ontology. For example, the noun phrase ``Apple" can either refer to a company or a fruit, but it can refer to only one company and only one fruit. Another related problem to NP canonicalization is Knowledge Graph Identification \cite{Pujara:2013:KGI:2717129.2717164}, where given a noisy extraction graph, the task is to produce a consistent Knowledge Graph (KG) by performing entity resolution, entity classification and link prediction jointly. Pujara et al. \cite{Pujara:2013:KGI:2717129.2717164} incorporate information from multiple extraction sources and use ontological information to infer the most probable knowledge graph using probabilistic soft logic (PSL) \cite{Brocheler:2010:PSL:3023549.3023558}. However, both of these approaches require additional information in the form of an ontology of relations, which is not available in the Open KB setting.

{\bf Relation Taxonomy Induction}: SICTF \cite{D16-1040} tries to learn relation schemas for different OpenIE relations. It is built up on RESCAL \cite{Nickel:2011:TMC:3104482.3104584}, and uses tensor factorization methods to cluster noun phrases into \emph{categories} (such as ``person", ``disease", etc.). We, however, are interested in clustering noun phrases into entities.

There has been relatively less work on the task of relation phrase canonicalization. Some of the early works include DIRT \cite{Lin:2001:DSI:502512.502559}, which proposes an unsupervised method for discovering inference rules of the form ``\textit{X is the author of Y }  $\approx$ \textit{X wrote Y}" using paths in dependency trees; and the PATTY system \cite{Nakashole:2012:PTR:2390948.2391076}, which tries to learn subsumption rules among relations (such as \textit{son-of} $\subset$ \textit{child-of}) using techniques based on frequent itemset mining. These approaches are more focused on finding a taxonomy of relation phrases, while we are looking at finding equivalence between relation phrases.

{\bf Knowledge Base Embedding}: KB embedding techniques such as TransE \cite{transe}, HolE \cite{hole} try to learn vector space embeddings for entities and relations present in a KB. TransE makes the assumption that for any \textit{$\langle$subject, relation, object$\rangle$} triple, the relation vector is a translation from the subject vector to the object vector. HolE, on the other hand, uses non-linear operators to model a triple. These embedding methods have been successfully applied for the task of link prediction in KBs. In this work, we build up on HolE while exploiting relevant side information for the task of Open KB canonicalization. We note that, even though KB embedding techniques like HolE have been applied to ontological KBs, CESI might be the first attempt to use them in the context of Open KBs.

{\bf Canonicalizing Open KBs}: The RESOLVER system \cite{Yates:2009:UMD:1622716.1622724} uses string similarity based features to cluster phrases in TextRunner \cite{textrunner} triples. String similarity features, although being effective, fail to handle synonymous phrases which have completely different surface forms, such as \textit{Myopia} and \textit{Near-sightedness}. 

KB-Unify \cite{dellibovi-espinosaanke-navigli:2015:EMNLP} addresses the problem of unifying multiple Ontological and Open KBs into one KB. However, KB-Unify requires a pre-determined sense inventory which is not available in the setting CESI operates. 


The most closely related work to ours is \cite{Galarraga:2014:COK:2661829.2662073}. They perform NP canonicalization by performing Hierarchical Agglomerative Clustering (HAC) \cite{Tan:2005:IDM:1095618} over manually-defined feature spaces, and subsequently perform  relation phrase  clustering by using the AMIE algorithm \cite{Galarraga:2013:AAR:2488388.2488425}. CESI significantly outperforms this prior method (\refsec{cesi_sec:experiments}). 

\section{Proposed Approach: CESI}
\label{cesi_sec:approach}

\begin{figure}
	\begin{center}
	\includegraphics[scale=0.6]{sections/cesi/images/cesi_overview}
	\caption{\label{cesi_fig:overview}\small Overview of CESI. CESI first acquires side information of noun and relation phrases of Open KB triples. In the second step, it learns embeddings of these NPs and relation phrases while utilizing the side information obtained in previous step. In the third step, CESI performs clustering over the learned embeddings to canonicalize NP and relation phrases. Please see \refsec{cesi_sec:approach} for more details.}
	\end{center}
\end{figure}

\subsection{Overview}
Overall architecture and dataflow of CESI is shown in Figure \ref{cesi_fig:overview}. The input to CESI is an un-canonicalized Open Knowledge Base (KB) with source information for each triple. The output is a list of canonicalized noun and relation phrases, which can be used to identify equivalent entities and relations or canonicalize the KB. CESI achieves this through its three step procedure:

\begin{enumerate}
	\item \textbf{\stepSide:} The goal of this step is to gather various NP and \RP{} side information for each triple in the input by running several standard algorithms on the source text of the triples. More details can be found in Section \ref{cesi_sec:side-info}.
	\item \textbf{\stepEmbed:} In this step, CESI learns specialized vector embeddings for all NPs and \RP{}s in the input by making principled use of side information available from the previous step. 
	\item \textbf{\stepCluster:} Goal of this step is to cluster the NPs and \RP{}s on the basis of their distance in the embedding space. Each cluster represents a specific entity or relation. Based on certain relevant heuristics, we assign a representative to each NP and \RP{} cluster.
\end{enumerate}

Details of different steps of CESI are described next.


\subsection{\stepSide}
\label{cesi_sec:side-info}

Noun and relation phrases in Open KBs often have relevant side information in the form of useful context in the documents from which the triples were extracted. Sometimes, such information may also be present in other related KBs. Previous Open KB canonicalization methods \cite{Galarraga:2014:COK:2661829.2662073} ignored such available side information and performed canonicalization in isolation focusing only on the Open KB triples. 
CESI attempts to exploit such side information to further improve the performance on this problem. In {CESI}, we make use of five types of NP side information to get equivalence relations of the form $e_1 \equiv e_2$ between two entities $e_1$ and $e_2$. Similarly, \RP{} side information is used to derive relation equivalence, $r_1 \equiv r_2$. All equivalences are used as soft constraints in later steps of CESI (details in Section \ref{cesi_sec:embedding}).

\subsubsection{Noun Phrase side Information}
\label{cesi_subsec:np-side}

In the present version of {CESI}, we make use of the following five types of NP side information:
\begin{enumerate}
	
	\item \textbf{Entity Linking}:  
	Given unstructured text, entity linking algorithms identify entity mentions and link them to Ontological KBs such as  Wikipedia, Freebase etc. We make use of Stanford CoreNLP entity linker which is based on \cite{SPITKOVSKY12.266} for getting NP to Wikipedia entity linking. Roughly, in about 30\% cases, we get this information for NPs. If two NPs are linked to the same Wikipedia entity, we assume them to be equivalent as per this information. For example, \textit{US} and \textit{America} can get linked to the same Wikipedia entity \textit{United\_States}.
	
	\item \textbf{PPDB Information}: 
	We make use of PPDB 2.0 \cite{ppdb2}, a large collection of paraphrases in English, for identifying equivalence relation among NPs. We first extracted high confidence paraphrases from the dataset while removing duplicates. Then, using union-find, we clustered all the phrases which are stated equivalent with high confidence and randomly assigned a representative to each cluster. Using an index created over the obtained clusters, we find cluster representative for each NP. If two NPs have the same cluster representative then they are considered to be equivalent. NPs not present in the dataset are skipped. This information helps us identifying equivalence between NPs such as \textit{management} and \textit{administration}. 
	
	\item \textbf{WordNet with Word-sense Disambiguation}:
	Using word-sense disambiguation \cite{Banerjee2002} with Wordnet \cite{wordnet}, we identify possible synsets for a given NP. If two NPs share a common synset, then they are marked as similar as per this side information. For example, \textit{picture} and \textit{image} can get linked to the same synset \textit{visualize.v.01}.
	
	\item \textbf{IDF of Overlapping Toke}: 
	NPs sharing infrequent terms give a strong indication of them referring to the same entity. For example, it is very likely for \textit{Warren Buffett} and \textit{Buffett} to refer to the same person. In \cite{Galarraga:2014:COK:2661829.2662073}, IDF token overlap was found to be the most effective feature for canonicalization. We assign a score for every pair of NPs based on the standard IDF formula:
		$$score_{{idf}}(n, n') = \dfrac{\sum_{x \in w(n) \cap w(n')}{\log{(1+f(x))^{-1}}}}{\sum_{x \in w(n) \cup w(n')}{\log{(1+f(x))^{-1}}}}$$
		
	Here, $w(\cdot)$ for a given NP returns the set of its terms, excluding stop words. $f(\cdot)$ returns the document frequency for a token.  
	
	\item \textbf{Morph Normalization}: 
	We make use of multiple morphological normalization operations like tense removal, pluralization, capitalization and others as used in \cite{reverb} for finding out equivalent NPs. We show in Section \ref{cesi_sec:ablation} that this information helps in improving performance. 
	
\end{enumerate}

\subsubsection{Relation Phrase Side Information}
Similar to noun phrases, we make use of PPDB and WordNet side information for \RP{} canonicalization as well. Apart from these, we use the following two additional types of side information involving relation phrases.

\begin{enumerate}
	\item \textbf{AMIE Information}: 
	AMIE algorithm \cite{Galarraga:2013:AAR:2488388.2488425} tries to learn implication rules between two relations $r$ and $r'$ of the form $r \Rightarrow r'$. These rules are detected based on statistical rule mining \cite{Galarraga:2014:COK:2661829.2662073}. It declares two relations $r$ and $r'$ to be equivalent if both $r \Rightarrow r'$ and  $r' \Rightarrow r$ satisfy support and confidence thresholds. AMIE accepts a semi-canonicalized KB as input, i.e., a KB where NPs are already canonicalized. Since this is not the case with Open KBs, we first canonicalized NPs morphologically and then applied AMIE over the NP-canonicalized KB. We chose morphological normalization for this step as such normalization is available for all NPs, and also because we found this side information to be quite effective in large Open KBs. 

	\item \textbf{KBP Information}: 
	Given unstructured text, Knowledge Base Population (KBP) systems detect relations between entities and link them to relations in standard KBs. For example, \textit{``Obama was born in Honolulu"} contains \textit{``was born in"} relation between \textit{Obama} and \textit{Honolulu}, which can be linked to \textit{per:city\_of\_birth} relation in KBs. In {CESI}, we use Stanford KBP \cite{surdeanu2012multi} to categorize relations. If two relations fall in the same category, then they are considered equivalent as per this information.
	
\end{enumerate}

The given list can be further extended based on the availability of other side information. For the experiments in this chapter, we have used the above mentioned NP and \RP{} side information. Some of the equivalences derived from different side information might be erroneous, therefore, instead of using them as hard constraints, we try to use them as supplementary information as described in the next section. Even though side information might be available only for a small fraction of NPs and \RP{}s, the hypothesis is that it will result in better overall canonicalization. We find this to be true, as shown in Section \ref{cesi_sec:results}.

\subsection{\stepEmbed}
\label{cesi_sec:embedding}

For learning embeddings of NPs and \RP{}s in a given Open KB, {CESI} optimizes HolE's \cite{hole} objective function along with terms for penalizing violation of equivalence conditions from the NP and \RP{} side information. Since the conditions from side information might be spurious, a factor ($\lambda_{\text{ent}/\text{rel},\theta}$) is multiplied with each term, which acts as a hyper-parameter and is tuned on a held out validation set. We also keep a constant ($\lambda_{str}$) with HolE objective function, to make selective use of structural information from KB for canonicalization. We choose HolE because it is one of the best performing KB embeddings techniques for tasks like link prediction in knowledge graphs. Since KBs store only true triples, we generate negative examples using local closed world heuristic \cite{kg_incomp1}. To keep the rank of true triples higher than the non-existing ones, we use pairwise ranking loss function. The final objective function is described below.

\begin{equation*}
\begin{split}
\min_{\Theta} 
    & \hspace{2 mm} \lambda_{str} \sum_{i \in D_+} \sum_{j \in D_-}{ \text{max} (0, \gamma + \sigma(\eta_j) - \sigma(\eta_i))} \\    
    & +  \sum_{\theta \in \mathscr{C}_{\text{ent}}} 
    	 \dfrac{\sideConst{ent}{\theta}} {|\sidePairs{ent}{\theta}|} 
    	 \sum_{v, v^{'} \in \sidePairs{ent}{\theta}}{\|e_v - e_{v^{'}}\|^2} \\
    & +  \sum_{\phi \in \mathscr{C}_{\text{rel}}} 
    	 \dfrac{\sideConst{rel}{\phi}} {|\sidePairs{rel}{\phi}|} 
    	 \sum_{u, u^{'} \in \sidePairs{rel}{\phi}}{\|r_u - r_{u^{'}}\|^2} \\
    & + \lambda_{\text{reg}} \left( \sum_{v \in V} \|e_v\|^2 + \sum_{r \in R} \|e_r\|^2 \right).
\end{split}
\end{equation*}

The objective function, consists of three main terms, along with one term for regularization. Optimization parameter, $\Theta = \{e_v\}_{v \in V} \cup \{r_u\}_{u \in R}$, is the set of all NP ($e_v$) and \RP{} ($r_u$) $d$-dimensional embeddings, where, $V$ and $R$ denote the set of all NPs and \RP{}s in the input. In the first term, $D_+, D_-$ specify the set of positive and negative examples and $\gamma > 0$ refers to the width of the margin \cite{transe}. Further, $\sigma(\cdot)$ denotes the logistic function and for a triple $t_i$ $(s,p,o)$, $\eta_i = r_p^T(e_s \star e_o)$, where $\star : \real{d} \times \real{d} \rightarrow \real{d}$ is the circular correlation operator defined as follows.
\begin{align*}
[a \star b]_{k} = \sum_{i=0}^{d-1} a_i b_{(k+i) \text{ mod} \hspace{1 mm} d}.
\end{align*}
The first index of $(a \star b)$ measures the similarity between $a$ and $b$, while other indices capture the interaction of features from $a$ and $b$, in a particular order. Please refer to \cite{hole} for more details.

In the second and third terms, $\mathscr{C}_{\text{ent}}$ and $\mathscr{C}_{\text{rel}}$ are the collection of all types of NP and relation side information available from the previous step (Section \ref{cesi_sec:side-info}), i.e., $\mathscr{C}_{\text{ent}} = \{ \text{Entity Linking, PPDB, ..} \}$ and $\mathscr{C}_{\text{rel}} = \{ \text{AMIE, KBP, ..} \}$. Further, $\sideConst{ent}{\theta}$ and $\sideConst{rel}{\phi}$ denote the constants associated with entity and relation side information. Their value is tuned using grid search on a held out validation set. The set of all equivalence conditions from a particular side information is denoted by $\sidePairs{ent}{\theta}$ and $\sidePairs{rel}{\phi}$. 
The rationale behind putting these terms is to allow inclusion of side information while learning embeddings, by enforcing two NPs or relations close together if they are equivalent as per the available side information. Since the side information is available for a fraction of NPs and \RP{}s in the input, including these terms in the objective does not slow down the training of embeddings significantly.

The last term adds L2 regularization on the embeddings. All embeddings are initialized by averaging GloVe vectors \cite{glove}. We use mini-batch gradient descent for optimization.

\subsection{\stepCluster}
\label{cesi_sec:clustering}

{CESI} clusters NPs and \RP{}s by performing Hierarchical Agglomerative Clustering (HAC) using cosine similarity over the embeddings learned in the previous step (Section \ref{cesi_sec:embedding}). HAC was preferred over other clustering methods because the number of clusters are not known beforehand. Complete linkage criterion is used for calculating the similarity between intermediate clusters as it gives smaller sized clusters, compared to single and average linkage criterion. This is more reasonable for canonicalization problem, where cluster sizes are expected to be small. The threshold value for HAC was chosen based on held out validation dataset.

%

The time complexity of HAC with complete linkage criterion is $O(n^2)$ \cite{defays1977efficient}. For scaling up {CESI} to large knowledge graphs, one may go for modern variants of approximate Hierarchical clustering algorithms \cite{Kobren:2017:HAE:3097983.3098079} at the cost of some loss in performance. 

Finally, we decide a representative for each NP and \RP{} cluster. For each cluster, we compute a mean of all elements' embeddings weighted by the frequency of occurrence of each element in the input. NP or \RP{} which lies closest to the weighted cluster mean is chosen as the representative of the cluster.

\section{Experiments}
\label{cesi_sec:experiments}

\subsection{Experimental Setup}

\subsubsection{Datasets}
\label{cesi_sec:datasets}

\begin{table}[t]
	\centering
	\begin{tabular}{ccccc}
		\toprule
		Datasets 	& \# Gold & \#NPs 	& \#Relations 	& \#Triples \\
		& Entities & & & \\ 
		\midrule
		Base 		& 150 		& 290  & 3K 		& 9K  \\
		Ambiguous	& 446 		& 717  & 11K		& 37K \\
		\myData 	& 7.5K 		& 15.5K & 22K 		& 45K \\
		\bottomrule
		\addlinespace
	\end{tabular}
	\caption{\label{cesi_tb:datasets}\small Details of datasets used. \myData{} is the new dataset we propose in this chapter. Please see \refsec{cesi_sec:datasets} for details.}
\end{table}


Statistics of the three datasets used in the experiments of this chapter are summarized in \reftbl{cesi_tb:datasets}. We present below brief summary of each dataset.
\begin{enumerate}
	\item \textbf{Base and Ambiguous Datasets:} We obtained the Base and Ambiguous datasets from the authors of \citet{Galarraga:2014:COK:2661829.2662073}. Base dataset was created by collecting triples containing 150 sampled Freebase entities that appear with at least two aliases in ReVerb Open KB. The same dataset was further enriched with mentions of homonym entities to create the  Ambiguous dataset. Please see \citet{Galarraga:2014:COK:2661829.2662073} for more details.

	\item \textbf{\myData{}:} This is the new Open KB canonicalization dataset we propose in this work.  
	\myData{} is a significantly extended version of the Ambiguous dataset, containing more than 20x NPs. \myData{} is constructed by intersecting information from the following three sources: ReVerb Open KB \cite{reverb}, Freebase entity linking information from \cite{gabrilovich2013facc1}, and Clueweb09 corpus \cite{callan2009clueweb09}. 
	Firstly, for every triple in ReVerb, we extracted the source text from Clueweb09 corpus from which the triple was generated. In this process, we rejected  triples for which we could not find any source text. Then, based on the entity linking information from \cite{gabrilovich2013facc1}, we linked all subjects and objects of triples to their corresponding Freebase entities. If we could not find high confidence linking information for both subject and object in a triple, then it was rejected. Further, following the dataset construction procedure adopted by \cite{Galarraga:2014:COK:2661829.2662073}, we selected triples associated with all Freebase entities with at least two aliases occurring as subject in our dataset. Through these steps, we obtained 45K high-quality triples which we used for evaluation. We call this resulting dataset {\myData}.
	
	In contrast to Base and Ambiguous datasets, the number of entities, NPs and \RP{}s in \myData{} are significantly larger. 
	Please see \reftbl{cesi_tb:datasets} for a detailed comparison. This better mimics real-world KBs which tend to be sparse with very few edges per entity, as also observed by \cite{transe}.


\end{enumerate}

For getting test and validation set for each dataset, we randomly sampled 20\% Freebase entities and called all the triples associated with them as validation set and rest was used as the test set. 

\subsubsection{Evaluation Metrics}
\label{cesi_sec:metrics}

Following \cite{Galarraga:2014:COK:2661829.2662073}, we use macro-, micro- and pairwise metrics for evaluating Open KB canonicalization methods. 
We briefly describe below these metrics for completeness. In all cases, $C$ denotes the clusters produced by the  algorithm to be evaluated, and $E$ denotes the gold standard clusters. In all cases, F1 measure is given as the harmonic mean of precision and recall.

\noindent \textbf{Macro:} Macro precision ($P_{\mathrm{macro}}$) is defined as the fraction of pure clusters in $C$,  i.e., clusters in which all the NPs (or relations) are linked to the same gold entity (or relation). Macro recall ($R_{\mathrm{macro}}$) is calculated like macro precision but with the roles of $E$ and $C$ interchanged.
\begin{eqnarray*}
	P_{\mathrm{macro}}(C, E) &=& \dfrac{|\{c \in C:\exists e \in E : e \supseteq c\}|}{|C|} \\
	R_{\mathrm{macro}}(C, E) &=& P_{\mathrm{macro}}(E, C)
\end{eqnarray*}
\textbf{Micro:} Micro precision ($P_{\mathrm{micro}}$) is defined as the purity of $C$ clusters \cite{Manning:2008:IIR:1394399} based on the assumption that the most frequent gold entity (or relation) in a cluster is correct. Micro recall ($R_{\mathrm{micro}}$) is defined similarly as macro recall.
\begin{eqnarray*} 
	P_{\mathrm{micro}}(C, E) &=& \dfrac{1}{N} \sum_{c \in C} \max_{e \in E} |c \cap e| \\
	R_{\mathrm{micro}}(C, E) &=& P_{\mathrm{micro}}(E, C)
\end{eqnarray*}
\textbf{Pairwise:} Pairwise precision ($P_{\mathrm{pair}}$) is measured as the ratio of the number of hits in $C$ to the total possible pairs in $C$. 
Whereas, pairwise recall ($R_{\mathrm{pair}}$) is the ratio of number of hits in $C$ to all possible pairs in $E$. A pair of elements in a cluster in $C$ produce a hit if they both refer to the same gold entity (or relation).
\begin{eqnarray*}
	P_{\mathrm{pair}}(C, E) &=& \dfrac{\sum_{c \in C}{|\{ (v,v') \in e, \exists e \in E, \forall (v,v') \in c\}|}} {\sum_{c \in C}{\Comb{|c|}{2}}} \\ 
	R_{\mathrm{pair}}(C, E) &=& \dfrac{\sum_{c \in C}{|\{ (v,v') \in e, \exists e \in E, \forall (v,v') \in c\}|}} {\sum_{e \in E}{\Comb{|e|}{2}}} \\
\end{eqnarray*}


\begin{figure}[t]
	\centering
	\includegraphics[width=4in]{sections/cesi/images/venn2}
	\caption{\label{cesi_tb:metric-example}\small
	Top: Illustrative example for different evaluation metrics. $e_i$ denotes actual clusters, whereas $c_i$ denotes predicted clusters. Bottom: Metric results for the above example. Please see \refsec{cesi_sec:metrics} for details.}
\end{figure}
%

Let us illustrate these metrics through a concrete NP canonicalization example shown in Figure \ref{cesi_tb:metric-example}. In this Figure, we can see that only $c_2$ and $c_3$ clusters in C are pure because they contain mentions of only one entity, and hence, $P_{\mathrm{macro}} = \frac{2}{3}$. On the other hand, we have $e_1$ and $e_3$ as pure clusters if we interchange the roles of $E$ and $C$. So, $R_{\mathrm{macro}} = \frac{2}{3}$ in this case. For micro precision, we can see that \textit{America}, \textit{New York}, and \textit{California} are the most frequent gold entities in $C$ clusters. Hence, $P_{\mathrm{micro}} = \frac{6}{7}$. Similarly, $R_{\mathrm{micro}} = \frac{6}{7}$ in this case.
For pairwise analysis, we need to first calculate the number of hits in $C$. In $c_1$ we have 3 possible pairs out of which only 1, (\textit{America, USA}) is a hit as they belong to same gold cluster $e_1$. Similarly, we have 3 hits in $c_2$ and 0 hits in $c_3$. Hence, $P_{\mathrm{pair}} = \frac{4}{6}$.  To compute $R_{\mathrm{pair}}$, we need total number of pairwise decisions in $E$, which is $1 + 6 + 0$ , thus, $R_{\mathrm{pair}} = \frac{4}{7}$. All the results are summarized in \reftbl{cesi_tb:metric-example}.

For evaluating NP canonicalization, we use Macro, Micro and Pairwise F1 score. However, in the case of relations, where gold labels are not available, we use macro, micro and pairwise precision values based on the scores given by human judges. 

\subsubsection{Methods Compared}
\textbf{Noun Phrase Canonicalization:}  
For \NP{} canonicalization, {CESI} has been compared against the following methods:

\begin{itemize}
	\item \textbf{Morphological Normalization:} As used in \cite{reverb}, this involves applying simple normalization operations like removing tense, pluralization, capitalization etc. over NPs and \RP{}s.
	\item {\bf Paraphrase Database (PPDB):} Using PPDB 2.0 \cite{ppdb2}, we clustered two NPs together if they happened to share a common paraphrase. NPs which could not be found in PPDB are put into singleton clusters.
	\item {\bf Entity Linking}: Since the problem of \NP{} canonicalization is closely related to entity linking, we compare our method against Stanford CoreNLP Entity Linker \cite{SPITKOVSKY12.266}. Two NPs linked to the same entity are clustered together.
	\item \textbf{\gidf{} \cite{Galarraga:2014:COK:2661829.2662073}}: IDF Token Overlap was the best performing method proposed in \cite{Galarraga:2014:COK:2661829.2662073} for NP canonicalization. In this method, IDF token similarity is defined between two NPs as in  \refsec{cesi_subsec:np-side}, and HAC is used to cluster the mentions.
	\item \textbf{\gstrsim{} \cite{Galarraga:2014:COK:2661829.2662073}}: This method is similar to Galarraga-IDF, but with similarity metric being the Jaro-Winkler \cite{winkler1999state} string similarity measure.
	\item \textbf{\gattr{} \cite{Galarraga:2014:COK:2661829.2662073}}: Again, this method is similar to the Galarraga-IDF, except that Attribute Overlap is used as the similarity metric between two NPs in this case. Attribute for a NP $n$, is defined as the set of relation-NP pairs which co-occur with $n$ in the input triples. Attribute overlap similarity between two NPs, is defined as the Jaccard coefficient of the set of attributes:
	$$ f_{\text{attr}}(n,n') = \dfrac{|A \cap A'|}{|A \cup A'|}$$
	where, $A$ and $A'$ denote the set of attributes associated with $n$ and $n'$.
	
	Since canonicalization methods using above similarity measures were found to be most effective in \cite{Galarraga:2014:COK:2661829.2662073}, even outperforming Machine Learning-based alternatives, we consider these three baselines as representatives of state-of-the-art in Open KB canonicalization. 
	
	\item {\bf GloVe}: 
	In this scheme, each NP and \RP{} is represented by a 300 dimensional GloVe embedding \cite{glove} trained on Wikipedia 2014 and Gigaword 5 \cite{parker2011english} datasets with 400k vocabulary size. Word vectors were averaged together to get embeddings for multi-word phrases. These GloVE embeddings were then clustered for final canonicalization.
	
	\item {\bf HolE}: In this method, embeddings of NPs and \RP{}s in an Open KB are obtained by applying HolE \cite{hole} over the Open KB. These embeddings are then clustered to obtain the final canonicalized groupings. Based on the initialization of embeddings, we differentiate between \textbf{HolE(Random)} and \textbf{HolE(GloVe)}. 
	\item {\bf CESI}: This is the method proposed in this chapter, please see \refsec{cesi_sec:approach} for more details. 

\end{itemize}

{\bf Hyper-parameters}: Following \cite{Galarraga:2014:COK:2661829.2662073}, we used Hierarchical Agglomerative Clustering (HAC) as the default clustering method across all methods (wherever necessary). For all methods, grid search over the hyperparameter space was performed, and results for the best performing setting are reported. This process was repeated for each dataset.

\subsubsection{Relation Phrase Canonicalization} AMIE \cite{Galarraga:2013:AAR:2488388.2488425} was found to be effective for \RP{} canonicalization in \cite{Galarraga:2014:COK:2661829.2662073}. We thus consider AMIE\footnote{We use support and confidence values of 2 and 0.2 for all the experiments in this chapter.} as the state-of-the-art baseline for \RP{} canonicalization and compare against CESI. We note that AMIE requires NPs of the input Open KB to be already canonicalized. In all our evaluation datasets, we already have \emph{gold} NP canonicalization available. We provide this gold NP canonicalization information as input to AMIE. Please note that CESI doesn't require such pre-canonicalized NP as input, as it performs \emph{joint} NP and \RP{} canonicalization. Moreover, providing gold NP canonicalization information to AMIE puts CESI at a disadvantage. We decided to pursue this choice anyways in the interest of stricter evaluation. However, in spite of starting from this disadvantageous position, CESI significantly outperforms AMIE in \RP{} canonicalization, as we will see in \refsec{cesi_sec:rel-eval}.

For evaluating performance of both  algorithms, we randomly sampled 25 non-singleton relation clusters for each of the three datasets and gave them to five different human evaluators\footnote{Authors did not participate in this evaluation.} for assigning scores to each cluster. The setting was kept blind, i.e., identity of the algorithm producing a cluster was not known to the evaluators. Based on the average of evaluation scores, precision values were calculated. Only non-singleton clusters were sampled, as singleton clusters will always give a precision of one.



\subsection{Results}
\label{cesi_sec:results}

In this section, we evaluate the following questions.

\begin{itemize}
	\item[Q1.] Is CESI effective in Open KB canonicalization? (\refsec{cesi_sec:overall-perf})
	\item[Q2.] What is the effect of side information in CESI's performance? (\refsec{cesi_sec:ablation})
	\item[Q3.] Does addition of entity linking side information degrade CESI's ability to canonicalize unlinked NPs (i.e., NPs missed by the entity linker)? (\refsec{cesi_sec:unlinked-eval})
\end{itemize}

Finally, in \refsec{cesi_sec:qualitative}, we present qualitative examples and discussions.

\begin{table*}[t]
	\begin{small}
		\resizebox{\textwidth}{!}{
			\begin{tabular}{lccc|ccc|ccc|c}
				\toprule
				Method & \multicolumn{3}{c}{Base Dataset} & \multicolumn{3}{c}{Ambiguous Dataset} & \multicolumn{3}{c}{\myData} & \\ 
				\cmidrule(r){2-4} \cmidrule(r){5-7} \cmidrule(r){8-10} \cmidrule(r){11-11} 
				& Macro & Micro & Pair. & Macro & Micro & Pair. & Macro & Micro & Pair. & Row Average\\
				\midrule
				Morph Norm	& 58.3 & 88.3 & 83.5 & 49.1 & 57.2 & 70.9 & 1.4  & 77.7 & 75.1 & 62.3\\
				PPDB       	& 42.4 & 46.9 & 32.2 & 37.3 & 60.2 & 69.3 & 46.0 & 45.4 & 64.2 & 49.3\\
				EntLinker   	& 54.9 & 65.1 & 75.2 & 49.7 & 83.2 & 68.8 & 62.8 & 81.8 & 80.4 & 69.1\\
				\gstrsim{} & 88.2 & 96.5 & 97.7 & 66.6 & 85.3 & 82.2 & 69.9 & 51.7 & 0.5  & 70.9\\
				\gidf{} & 94.8 & 97.9 & 98.3 & 67.9 & 82.9 & 79.3 & 71.6 & 50.8 & 0.5  & 71.5\\
				\gattr{}	& 76.1 & 51.4 & 18.1 & \textbf{82.9} & 27.7 & 8.4 & \textbf{75.1} & 20.1 & 0.2  & 40.0\\
				GloVe 		& 95.7 & 97.2 & 91.1 & 65.9 & 89.9 & 90.1 & 56.5 & 82.9 & 75.3 & 82.7\\
				HolE (Random)		& 69.5 & 91.3 & 86.6 & 53.3 & 85.0 & 75.1 & 5.4  & 74.6 & 50.9 & 65.7\\
				HolE (GloVe)    & 75.2 & 93.6 & 89.3 & 53.9 & 85.4 & 76.7 & 33.5 & 75.8 & 51.0 & 70.4\\
				CESI 	& \textbf{98.2} & \textbf{99.8} & \textbf{99.9} & 66.2 & \textbf{92.4} & \textbf{91.9} & 62.7 & \textbf{84.4} & \textbf{81.9} & \textbf{86.3}\\
				\bottomrule
				\addlinespace
			\end{tabular}
		}
		\caption{\label{cesi_tb:np_canonicalization}\small NP Canonicalization Results. CESI outperforms all other methods across datasets (Best in 7 out of 9 cases. \refsec{cesi_sec:np_results})}
	\end{small}
\end{table*}

\subsubsection{Evaluating Effectiveness of CESI in Open KB Canonicalization}
\label{cesi_sec:overall-perf}

\textbf{Noun Phrase Canonicalization:}
\label{cesi_sec:np_results}
Results for \NP{} canonicalization are summarized in \reftbl{cesi_tb:np_canonicalization}. Overall, we find that CESI performs well consistently across the datasets.
Morphological Normalization failed to give competitive performance in presence of homonymy.  
PPDB, in spite of being a vast reservoir of paraphrases, lacks information about real-world entities like people, places etc. Therefore, its performance remained weak throughout all datasets. Entity linking methods 
make use of contextual information from source text of each triple to link a NP to a KB entity. But their performance is limited because they are restricted by the entities in KB.

String similarity also gave decent performance in most cases but since they solely rely on surface form of NPs, they are bound to fail with NPs having dissimilar mentions.

Methods such as \gidf{}, \gstrsim{}, and \gattr{} performed poorly on \myData{}. Although, their performance is considerably better on the other two datasets. This is because of the fact that in contrast to Base and Ambiguous datasets, {\myData} has considerably large number of entities and comparatively fewer triples (\reftbl{cesi_tb:datasets}). \gidf{} token overlap is more likely to put two NPs together if they share an uncommon token, i.e., one with high IDF value. Hence, accuracy of the method relies heavily on the quality of document frequency estimates which may be quite misleading when we have smaller number of triples. Similar is the case with \gattr{} which decides similarity of NPs based on the set of shared attributes. Since, attributes for a NP is defined as a set of relation-NP pairs occurring with it across all triples, sparse data also results in poor performance for this method. 

GloVe captures semantics of NPs and unlike string similarity it doesn't rely on the surface form of NPs. Therefore, its performance has been substantial across all the datasets. HolE captures structural information from the given triples and uses it for learning embeddings. Through our experiments, we can see that solely structural information from KB is quite effective for NP canonicalization. {CESI} performs the best across the datasets in 7 out of the 9 settings, as it incorporates the strength of all the listed methods. The superior performance of {CESI} compared to HolE clearly indicates that the side information is indeed helpful for canonicalization task. Results of GloVe, HolE and {CESI} suggest that embeddings based method are much more  effective for Open KB canonicalization.

\begin{table}[!t]
	\centering
	\small 
	\begin{tabular}{ccccc}
		\toprule
		& Macro & Micro & Pairwise & Induced \\
		& Precision & Precision & Precision &  Relation \\
		&  &  &  &  Clusters \\
		\midrule		
		\addlinespace
		\multicolumn{5}{c}{\textbf{Base Dataset}} \\
		AMIE    & 42.8 & 63.6 & 43.0 & 7 \\
		CESI & \textbf{88.0} & \textbf{93.1} & \textbf{88.1} & \textbf{210} \\	
		\addlinespace
		\hline
		\addlinespace
		\multicolumn{5}{c}{\textbf{Ambiguous Dataset}} \\ 
		AMIE    & 55.8 & 64.6 & 23.4 & 46 \\
		CESI & \textbf{76.0} & \textbf{91.9} & \textbf{80.9} & \textbf{952}\\					
		\addlinespace
		\hline
		\addlinespace 
		\multicolumn{5}{c}{\textbf{\myData}} \\ 
		AMIE 	& 69.3 & 84.2 & 66.2 & 51 \\
		CESI & \textbf{77.3} &\textbf{87.8} & \textbf{72.6} & \textbf{2116} \\
		\bottomrule
		\addlinespace
	\end{tabular}
	\caption{\label{cesi_tb:rel_canonicalization}\small Relation canonicalization results. Compared to AMIE, CESI canonicalizes more number of \RP{}s at higher precision. Please see \refsec{cesi_sec:rel-eval} for details.}
\end{table}

\ \\
\noindent \textbf{Relation Phrase Canonicalization}
\label{cesi_sec:rel-eval}
Results for \RP{} canonicalization are presented in \reftbl{cesi_tb:rel_canonicalization}. For all experiments, in spite of using  quite low values for minimum support and confidence, AMIE was unable to induce any reasonable number of non-singleton clusters (e.g., only 51 clusters out of the 22K \RP{}s in the \myData{} dataset). For relation canonicalization experiments, AMIE was evaluated on gold NP canonicalized data as the algorithm requires NPs to be already canonicalized. CESI, on the other hand, was tested on all the datasets without making use of gold NP canonicalization information. 
 
Based on the results in \reftbl{cesi_tb:rel_canonicalization}, it is quite evident that AMIE induces too few relation clusters to be of value in practical settings. 
On the other hand, {CESI} consistently performs well across all the datasets and induces  significantly larger number of clusters. 

\subsubsection{Effect of Side Information in CESI}
\label{cesi_sec:ablation}

In this section, we evaluate the effect of various side information in CESI's performance. For this, we evaluated the performances of various  versions of CESI, each one of them obtained by ablating increasing amounts of side information from the full CESI model. Experimental results comparing these ablated versions on the \myData{} are presented in \reffig{cesi_fig:ablation}. From this figure, we observe that while macro performance benefits most from different forms of side information, micro and pairwise performance also show increased performance in the presence of various side information. This validates one of the central thesis of this chapter: side information, along with embeddings, can result in improved Open KB canonicalization.

\begin{table}[!t]
	\centering
	\small
	\begin{tabular}{lccc}
		\toprule
		& Macro F1 	& Micro F1 	& Pairwise F1 \\ 
		\midrule
		CESI 			& 81.7 & 87.6 & 81.5 	\\
		{CESI} w/o EL	& 81.3 & 87.3 & 80.7	\\
		\bottomrule
		\addlinespace
	\end{tabular}
	\caption{\label{cesi_tb:non-linked}\small CESI's performance in canonicalizing unlinked NPs, with and without Entity Linking (EL) side information, in the \myData{} dataset. We observe that CESI does not overfit to EL side information, and thereby helps prevent performance degradation in unlinked NP canonicalization (in fact it even helps a little). Please see \refsec{cesi_sec:unlinked-eval} for details.}
\end{table}

\subsubsection{Effect of Entity Linking Side Information on Unlinked NP}
\label{cesi_sec:unlinked-eval}

From experiments in \refsec{cesi_sec:ablation}, we find that Entity Linking (EL) side information (see \refsec{cesi_subsec:np-side}) is one of the most useful side information that CESI exploits. However, such side information is not available in case of unlinked NPs, i.e., NPs which were not linked by the entity linker. So, this naturally raises the following question: does CESI overfit to the EL side information and ignore the unlinked NPs, thereby resulting in poor canonicalization of such unlinked NPs?

In order to evaluate this question, we compared CESI's performance on unlinked NPs in the \myData{} dataset, with and without EL side information. We note that triples involving unlinked NPs constitute about 25\% of the entire dataset. Results are presented in \reftbl{cesi_tb:non-linked}. From this table, we observe that CESI doesn't overfit to EL side information, and it selectively uses such information when appropriate (i.e., for linked NPs). Because of this robust nature, presence of EL side information in CESI doesn't have an adverse effect on the unlinked NPs, in fact there is a small  gain in performance.

\begin{figure}[!t]
	\begin{minipage}{3.5in}
		\includegraphics[width=3.5in]{sections/cesi/images/ablation}
		\subcaption{\label{cesi_fig:ablation}\small Performance comparison of various side information-ablated versions of CESI for NP canonicalization in the \myData{} dataset. Overall, side information helps CESI improve performance. Please see \refsec{cesi_sec:ablation} for details.}
	\end{minipage}
	\begin{minipage}{3.1in}
		\includegraphics[width=3.1in]{sections/cesi/images/embed_big}
		\subcaption{\label{cesi_fig:embeddings}t-SNE visualization of NP and relation phrase (marked in '$<\cdots>$') embeddings learned by {CESI} for {\myData} dataset. We observe that CESI is able to induce non-trivial canonical clusters. 
	More details in Sec. \ref{cesi_sec:qualitative}.}	
	\end{minipage}
\end{figure}

\subsubsection{Qualitative Evaluation}
\label{cesi_sec:qualitative}

Figure \ref{cesi_fig:embeddings} shows some of the NP and relation phrase clusters detected by {CESI} in {\myData} dataset. These results highlight the efficacy of algorithm in canonicalizing non-trivial NPs and relation phrases. The figure shows t-SNE \cite{maaten2008visualizing} visualization of NP and relation phrase (marked in '$<\cdots>$') embeddings for a few examples. We can see that the learned embeddings are actually able to capture equivalence of NPs and relation phrases. The algorithm is able to correctly embed \textit{Prozac}, \textit{Sarafem} and \textit{Fluoxetine} together (different names of the same drug), despite their having completely different surface forms. 

Figure \ref{cesi_fig:embeddings} also highlights the failures of {CESI}. For example, \textit{Toyota} and \textit{Nissan} have been embedded together although the two being different companies. Another case is with \textit{Pablo} and \textit{Juan Pablo Angel}, which refer to different entities. The latter case can be avoided by keeping track of the source domain type information of each NP for disambiguation. In this if we know that \textit{Juan Pablo Angel} has come from \textit{SPORTS} domain, whereas \textit{Pablo} has come from a different domain then we can avoid putting them together. We tried using DMOZ \cite{OMV93V_2016} dataset, which provide mapping from URL domain to their categories, for handling such errors. But, because of poor coverage of URLs in DMOZ dataset, we couldn't get significant improvement in canonicalization results. We leave this as a future work.

\section{Conclusion}
\label{cesi_sec:conclusion}


Canonicalizing Open Knowledge Bases (KBs) is an important but underexplored problem. In this chapter, we proposed CESI, a novel method for canonicalizing Open KBs using learned embeddings and side information. 
CESI solves a joint objective to learn noun and relation phrase embeddings, while utilizing relevant side information in a principled manner. These learned embeddings are then clustered together to obtain canonicalized noun and relation phrase clusters. We also propose \myData{}, a new and larger dataset for Open KB canonicalization. Through extensive experiments on this and other real-world datasets, we demonstrate CESI's effectiveness over state-of-the-art baselines. CESI's source code and all data used in this work are publicly available.


\part{\partThree{}}
\chapter{Improving Semi-Supervised Learning through Confidence-based Graph Convolutional Networks}
\label{chap_confgcn}

\section{Introduction}
\label{confgcn_sec:intro}

In this chapter, we present our first enhancement of Graph Neural Networks for making them more effective at embedding heterogeneous real world graphs. 
Graphs are all around us, ranging from citation and social networks to knowledge graphs. Predicting properties of nodes in such graphs is often desirable. For example, given a citation network, we may want to predict the research area of an author. Making such predictions, especially in the semi-supervised setting, has been the focus of graph-based semi-supervised learning (SSL) \cite{subramanya2014graph}. In graph-based SSL, a small set of nodes are initially labeled. Starting with such supervision and while utilizing the rest of the graph structure, the initially unlabeled nodes are labeled. Conventionally, the graph structure has been  incorporated as an explicit regularizer which enforces a  smoothness constraint on the labels estimated on nodes  \cite{term1_just1,Belkin2006manifold,Weston2008deep}. 
Recently proposed Graph Convolutional Networks (GCN) \cite{Defferrard2016,Kipf2016} provide a framework to apply deep neural networks to graph-structured data. GCNs have been employed successfully for improving performance on tasks such as  semantic role labeling \cite{gcn_srl}, machine translation \cite{gcn_nmt}, relation extraction \cite{reside,gcn_re_stanford}, document dating \cite{neuraldater}, shape segmentation \cite{yi2016syncspeccnn}, and action recognition \cite{huang2017deep}. GCN formulations for graph-based SSL have also attained state-of-the-art performance  \cite{Kipf2016,Liao2018graph,gat}. In this chapter, we also focus on the task of graph-based SSL using GCNs. 

GCN iteratively estimates embedding of nodes in the graph by aggregating embeddings of neighborhood nodes, while backpropagating errors from a target loss function. 
Finally, the learned node embeddings are used to estimate label scores on the nodes. In addition to the label scores, it is desirable to also have confidence estimates associated with them. Such confidence scores may be used to determine how much to trust the label scores estimated on a given node. While methods to estimate label score confidence in non-deep graph-based SSL has been previously proposed \cite{Orbach2012}, confidence-based GCN is still unexplored. 

\begin{figure*}[t]
	\centering
	\includegraphics[scale=0.7]{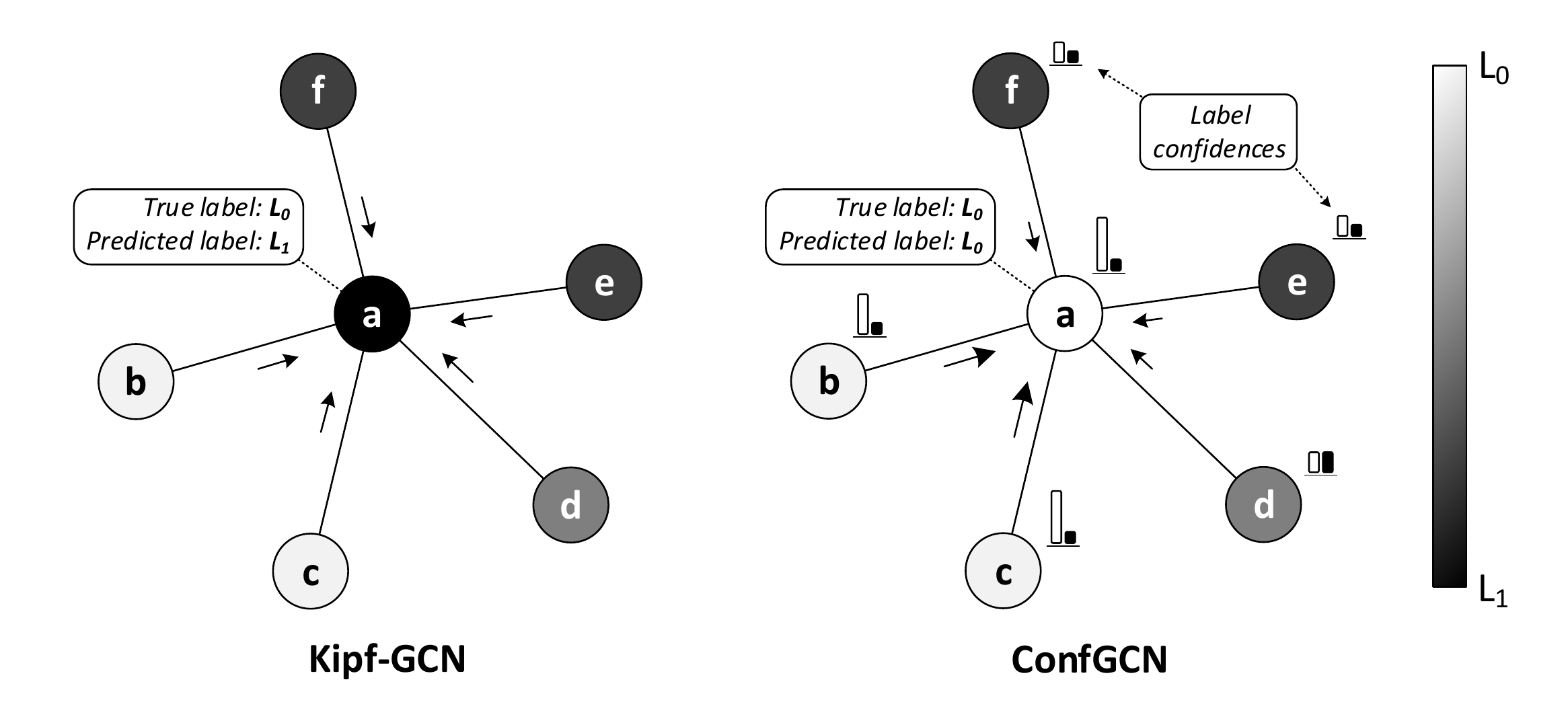}
	\caption{\label{confgcn_fig:motivation}\small Label prediction on node $a$ by Kipf-GCN and ConfGCN (this chapter). $L_0$ is $a$'s true label. Shade intensity of a node reflects the estimated score of label $L_1$ assigned to that node. 
	Since Kipf-GCN is not capable of estimating influence of one node on another, it is misled by the dominant label $L_1$ in node $a$'s neighborhood and thereby making the wrong assignment. ConfGCN, on the other hand, estimates confidences (shown by bars) over the label scores, and uses them to increase influence of nodes $b$ and $c$ to estimate the right label on $a$. Please see \refsec{confgcn_sec:intro} for details. 
	}
\end{figure*}

In order to fill this important gap, we propose ConfGCN, a GCN framework for graph-based SSL. ConfGCN jointly estimates label scores on nodes, along with confidences over them.  One of the added benefits of confidence over node's label scores is that they may be used to subdue irrelevant nodes in a node's neighborhood, thereby controlling the number of effective neighbors for each node. In other words, this enables \emph{anisotropic} behavior in GCNs. Let us explain this through the example shown in \reffig{confgcn_fig:motivation}. In this figure, while node $a$ has true label $L_0$ (white), it is incorrectly classified as $L_1$ (black) by Kipf-GCN \cite{Kipf2016}\footnote{In this chapter, unless otherwise stated, we refer to Kipf-GCN whenever we mention GCN.}. This is because Kipf-GCN suffers from limitations of its  neighborhood aggregation scheme \cite{Xu2018}. For example, Kipf-GCN has no constraints on the number of nodes that can influence the representation of a given target node. In a $k$-layer Kipf-GCN model, each node is influenced by all the nodes in its $k$-hop neighborhood. 
However, in real world graphs, nodes are often present in  \emph{heterogeneous} neighborhoods, i.e., a node is often surrounded by nodes of other labels. For example, in \reffig{confgcn_fig:motivation}, node $a$ is surrounded by three nodes ($d$, $e$, and $f$) which are predominantly labeled $L_1$, while two nodes ($b$ and $c$) are labeled $L_0$. Please note that all of these are estimated label scores during GCN learning. In this case, it is desirable that node $a$ is more influenced by nodes $b$ and $c$ than the other three nodes. However, since Kipf-GCN doesn't discriminate among the neighboring nodes, it is swayed by the majority and thereby estimating the wrong label $L_1$ for node $a$. 

ConfGCN is able to overcome this problem by estimating confidences on each node's label scores. In \reffig{confgcn_fig:motivation}, such estimated confidences are shown by bars, with white and black bars denoting confidences in scores of labels $L_0$ and $L_1$, respectively. ConfGCN uses these label confidences to subdue nodes $d, e, f$ since they have low confidence for their label $L_1$ (shorter black bars), whereas nodes $b$ and $c$ are highly confident about their labels being $L_0$ (taller white bars). This leads to higher influence of $b$ and $c$ during aggregation, and thereby  ConfGCN correctly predicting the true label of node $a$ as $L_0$ with high confidence. This clearly demonstrates the benefit of label confidences and their utility in estimating node influences. Graph Attention Networks (GAT) \cite{gat}, a recently proposed method also provides a mechanism to estimate  influences by allowing nodes to attend to their neighborhood. However, as we shall see in \refsec{confgcn_sec:experiments}, ConfGCN, through its use of label confidences, is considerably more effective.

Our contributions in this chapter are as follows.

\begin{itemize}
	\item We propose ConfGCN, a Graph Convolutional Network (GCN) framework for semi-supervised learning which models label distribution and their confidences for each node in the graph. To the best of our knowledge, this is the first confidence-enabled formulation of GCNs.
	\item ConfGCN utilize label confidences to estimate influence of one node on another in a label-specific manner during neighborhood aggregation of GCN learning.
	\item Through extensive evaluation on multiple real-world datasets, we demonstrate ConfGCN effectiveness over state-of-the-art baselines.
\end{itemize}

ConfGCN's source code and datasets used in the chapter are available at \url{http://github.com/malllabiisc/ConfGCN}.
\section{Related Work}
\label{confgcn_sec:related_work}
\textbf{Semi-Supervised learning (SSL) on graphs:} SSL on graphs is the problem of classifying nodes in a graph, where labels are available only for a small fraction of nodes. Conventionally, the graph structure is imposed by adding an explicit graph-based regularization term in the loss function \cite{term1_just1,Weston2008deep,Belkin2006manifold}.
Recently, implicit graph regularization via learned node representation has proven to be more effective. 
This can be done either sequentially or in an end to end fashion. Methods like DeepWalk \cite{deepwalk}, node2vec \cite{node2vec}, and LINE \cite{line} first learn graph representations via sampled random walk on the graph or breadth first search traversal and then use the learned representation for node classification. On the contrary, Planetoid \cite{Yang2016revisiting} learns node embedding by jointly predicting the class labels and the neighborhood context in the graph.
Recently, \citet{Kipf2016} employs Graph Convolutional Networks (GCNs) to learn node representations.

\textbf{Graph Convolutional Networks (GCNs):}
The generalization of Convolutional Neural Networks to non-euclidean domains is proposed by \citet{Bruna2013} which formulates the spectral and spatial construction of GCNs. This is later improved through an efficient localized filter approximation \cite{Defferrard2016}. \citet{Kipf2016} provide a first-order formulation of GCNs and show its effectiveness for SSL on graphs. \citet{gcn_srl} propose GCNs for directed graphs and provide a mechanism for edge-wise gating to discard noisy edges during aggregation. This is further improved by \citet{gat} which allows nodes to attend to their neighboring nodes, implicitly providing different weights to different nodes. \citet{Liao2018graph} propose Graph Partition Neural Network (GPNN), an extension of GCNs to learn node representations on large graphs. GPNN first partitions the graph into subgraphs and then alternates between locally and globally propagating information across subgraphs. Recently, Lovasz Convolutional Networks \citet{lovasz_paper} is proposed for incorporating global graph properties in GCNs. An extensive survey of GCNs and their applications can be found in \citet{Bronstein2017}.

\textbf{Confidence Based Methods:} The natural idea of incorporating confidence in predictions has been explored by \citet{Li2006} for the task of active learning. \citet{Lei2014} proposes a confidence based framework for classification problems, where the classifier consists of two regions in the predictor space, one for confident classifications and other for ambiguous ones. In representation learning, uncertainty (inverse of confidence) is first utilized for word embeddings by \citet{Vilnis2014}. 
\citet{Athiwaratkun2018} further extend this idea to learn hierarchical word representation through encapsulation of probability distributions. 
\citet{Orbach2012} propose TACO (Transduction Algorithm with COnfidence), the first graph based method which learns
label distribution along with its uncertainty for semi-supervised node classification. 
\citet{graph2gauss} embeds graph nodes as Gaussian distribution using ranking based framework which allows to capture uncertainty of representation. They update node embeddings to maintain neighborhood ordering, i.e. 1-hop neighbors are more similar to 2-hop neighbors and so on. Gaussian embeddings have been used for collaborative filtering \cite{DosSantos2017} and topic modelling \cite{Das2015} as well.

\section{Notation \& Problem Statement} 
\label{confgcn_sec:notation}

Let $\mathcal{G} = (\mathcal{V},\mathcal{E},\mathcal{X})$ be an undirected graph, where $\mathcal{V} = \mathcal{V}_l \cup \mathcal{V}_u$ is the union of labeled ($\mathcal{V}_l$) and unlabeled ($\mathcal{V}_u$) nodes in the graph with cardinalities $n_l $ and $n_u$, $\mathcal{E}$ is the set of edges and $\mathcal{X} \in \mathbb{R}^{(n_l + n_u) \times d}$ is the input node features.
The actual label of a node $v$ is denoted by a one-hot vector $Y_v \in \mathbb{R}^{m}$, where $m$ is the number of classes. Given $\mathcal{G}$ and seed labels $Y \in \mathbb{R}^{n_l \times m}$, the goal is to predict the labels of the unlabeled nodes. 
To incorporate confidence, we additionally estimate label distribution $\bm{\mu}_v \in \mathbb{R}^{m}$ and a diagonal co-variance matrix $\bm{\Sigma}_v \in \mathbb{R}^{m \times m},~\forall v \in \mathcal{V}$. Here, $\bm{\mu}_{v,i}$ denotes the score of label $i$ on node $v$, while $(\bm{\Sigma}_v)_{ii}$ denotes the variance in the estimation of $\bm{\mu}_{v,i}$. In other words, $(\bm{\Sigma}_{v}^{-1})_{ii}$ is ConfGCN's confidence in $\bm{\mu}_{v,i}$.

\section{Proposed Method: Confidence Based Graph \\ Convolutional Networks (ConfGCN)}
\label{confgcn_sec:details}

Following \cite{Orbach2012}, ConfGCN 
%
uses co-variance matrix based symmetric Mahalanobis distance for defining distance between two nodes in the graph. Formally, for any two given nodes $u$ and $v$, with label distributions $\bm{\mu}_u$ and $\bm{\mu}_v$  and co-variance matrices $\bm{\Sigma}_u$ and $\bm{\Sigma}_v$, distance between them is defined as follows.
\[
d_M(u,v) = (\bm{\mu}_u-\bm{\mu}_v)^T(\bm{\Sigma}_u^{-1}+\bm{\Sigma}_v^{-1})(\bm{\mu}_u-\bm{\mu}_v). 
\]
Characteristic of the above distance metric is that if either of $\bm{\Sigma}_u$ or $\bm{\Sigma}_v$ has large eigenvalues, then the distance will be low irrespective of the closeness of $\bm{\mu}_u$ and $\bm{\mu}_v$. On the other hand, if $\bm{\Sigma}_u$ and $\bm{\Sigma}_v$ both have low eigenvalues,  then it requires $\bm{\mu}_u$ and $\bm{\mu}_v$ to be close for their distance to be low. Given the above properties, we define $r_{uv}$, the influence score of node $u$ on its neighboring node $v$ during GCN aggregation, as follows.
\[
r_{uv} = \frac{1}{d_M(u,v)} .
\]
This influence score gives more relevance to neighboring nodes with highly confident similar label, while reducing importance of nodes with low confident label scores. This results in ConfGCN acquiring anisotropic capability during neighborhood aggregation. For a node $v$, ConfGCN's  equation for updating embedding at the $k$-th layer is thus defined as follows.

\begin{equation}
\bm{h}^{k+1}_{v} = f\left(\sum_{u \in \m{N}(v)}  r_{uv} \times \left(\bm{W}^{k} \bm{h}^{k}_{u} + \bm{b}^{k}\right)\right), \forall v \in \m{V} .
\end{equation}

%
The final node representation obtained from ConfGCN is used for predicting labels of the nodes in the graph as follows.

\[
\bm{\hat{Y}}_v = \mathrm{softmax}(\bm{W}^K \bm{h}^K_v + \bm{b}^K), ~\forall v \in \m{V}
\]

where, $K$ denotes the number of ConfGCN's layers. Finally, in order to learn label scores $\{\bm{\mu}_{v}\}$ and co-variance matrices $\{\bm{\Sigma}_v\}$ jointly with other parameters $\{\bm{W}^k, \bm{b}^k\}$, following \citet{Orbach2012}, we include the following two terms in ConfGCN's objective function.

%

\begin{table*}[!t]
	\centering
	\begin{tabular}{lcccccc}
		\toprule
		Dataset  & Nodes & Edges & Classes & Features & Label Mismatch & $\frac{|\mathcal{V}_l|}{|\mathcal{V}|}$ \\
		\midrule
		Cora 			  		& 2,708 		 & 5,429 		& 7  	 & 1,433 		&  0.002 & 0.052\\
		Cora-ML				 & 2,995		  & 8,416 		  & 7		& 2,879		  & 0.018 & 0.166\\
		Citeseer 		 		& 3,327 		& 4,372 		& 6 	& 3,703 	&	0.003 & 0.036 \\
		Pubmed 		  		 & 19,717 		  & 44,338 		& 3  	 & 500   	&	0.0  & 0.003\\
		\bottomrule
	\end{tabular}
	\caption{\label{confgcn_tbl:dataset_statistics}Details of the datasets used in the chapter. Please refer \refsec{confgcn_sec:datasets} for more details.}
\end{table*} 

For enforcing neighboring nodes to be close to each other, we include $L_{\text{smooth}}$ defined as

\[L_{\text{smooth}} = \sum_{(u,v) \in \m{E}} (\bm{\mu}_u- \bm{\mu}_v)^T(\bm{\Sigma}^{-1}_u + \bm{\Sigma}^{-1}_v)(\bm{\mu}_u - \bm{\mu}_v). \]

To impose the desirable property that the label distribution of nodes in $\m{V}_l$ should be close to their input label distribution, we incorporate $L_{\text{label}}$ defined as

\[L_{\text{label}}  = \sum_{v \in \m{V}_l} (\bm{\mu}_v - \bm{Y}_v)^T(\bm{\Sigma}^{-1}_v + \dfrac{1}{\gamma}\bm{I})(\bm{\mu}_v - \bm{Y}_v)  .\]

Here, for input labels, we assume a fixed uncertainty of $\frac{1}{\gamma}\bm{I} \in \mathbb{R}^{L \times L}$, where $\gamma > 0$. 
We also include the following regularization term, $L_{\text{reg}}$ to constraint the co-variance matrix to be finite and positive. 

\[L_{\text{reg}}  = \sum_{v \in \m{V}}\mathrm{Tr} \ \mathrm{max}(-\bm{\Sigma}_v, 0)  ,\]

This regularization term enforces soft positivity constraint on co-variance matrix. 
Additionally in ConfGCN, we include the $L_{\text{const}}$ in the objective, to push the label distribution ($\bm{\mu}$) close to the final model prediction ($\bm{\hat{Y}}$).

\[L_{\text{const}}  = \sum_{v \in \m{V}} (\bm{\mu}_v - \hat{\bm{Y}}_v)^T(\bm{\mu}_v - \hat{\bm{Y}}_v) .\]

Finally, we include the standard cross-entropy loss for semi-supervised multi-class classification over all the labeled nodes ($\mathcal{V}_l$).

\[L_{\text{cross}} = - \sum_{v \in \mathcal{V}_l} \sum_{j=1}^{m} \bm{Y}_{vj} \log(\bm{\hat{Y}}_{vj})  .\]

The final objective for optimization is the linear combination of the above defined terms. 


\begin{equation*} \label{confgcn_eqn:main_obj}
\small
\begin{aligned}
L(\theta) = 
& \hspace{0 mm} - \sum_{i \in \mathcal{V}_l} \sum_{j=1}^{L} \bm{Y}_{ij} \log( \bm{\hat{Y}}_{ij}) \\    
& + \hspace{1 mm} \lambda_{\text{1}} \sum_{(u,v) \in \m{E}} (\bm{\mu}_u- \bm{\mu}_v)^T(\bm{\Sigma}^{-1}_u + \bm{\Sigma}^{-1}_v)(\bm{\mu}_u - \bm{\mu}_v) \\
& + \hspace{1 mm} \lambda_{\text{2}} \hspace{1.5 mm} \sum_{u \in \m{V}_l} (\bm{\mu}_u - \bm{Y}_u)^T(\bm{\Sigma}^{-1}_u + \dfrac{1}{\gamma}\bm{I})(\bm{\mu}_u - \bm{Y}_u)  \\
& + \hspace{1 mm} \lambda_{\text{3}} \hspace{2 mm} \sum_{v \in \m{V}} (\bm{\mu}_u - \hat{\bm{Y}}_u)^T(\bm{\mu}_u - \hat{\bm{Y}}_u)\\
& + \hspace{1 mm} \lambda_{\text{4}} \hspace{2 mm} \sum_{v \in \m{V}}\mathrm{Tr} \ \mathrm{max}(-\bm{\Sigma}_v, 0)  \\
\end{aligned}
\end{equation*}

where, $\theta = \{\bm{W}^k, \bm{b}^k, \bm{\mu}_v, \bm{\Sigma}_v\}$ and $\lambda_i \in \mathbb{R}$, are the weights of the terms in the objective. We optimize $L(\theta)$ using stochastic gradient descent. We hypothesize that all the terms help in improving ConfGCN's performance and we validate this in \refsec{confgcn_sec:ablation_results}.

\section{Experiments}
\label{confgcn_sec:experiments}

\subsection{Experimental Setup}

\subsubsection{Datasets}
\label{confgcn_sec:datasets}

For evaluating the effectiveness of ConfGCN, we evaluate on several semi-supervised classification benchmarks. Following the experimental setup of \cite{Kipf2016,Liao2018graph}, we evaluate on Cora, Citeseer, and Pubmed datasets \cite{pubmed}. The dataset statistics is summarized in \reftbl{confgcn_tbl:dataset_statistics}. Label mismatch denotes the fraction of edges between nodes with different labels in the training data. The benchmark datasets commonly used for semi-supervised classification task have substantially low label mismatch rate. In order to examine models on datasets with more heterogeneous neighborhoods, 
we also evaluate on Cora-ML dataset \cite{graph2gauss}. 


All the four datasets are citation networks, where each document is represented using bag-of-words features  in the graph with undirected citation links between documents. The goal is to classify documents into one of the predefined classes. We use the data splits used by \cite{Yang2016revisiting} and follow similar setup for Cora-ML dataset. Following \cite{Kipf2016}, additional 500 labeled nodes are used for hyperparameter tuning.


\begin{table*}[t]
	\centering
	\begin{tabular}{lcccc}
		\toprule
		
		Method & Citeseer & Cora & Pubmed & Cora ML\\
		\midrule
		
		LP 	\cite{term1_just1}	& 45.3 		& 68.0 		& 63.0 		& -\\
		ManiReg \cite{Belkin2006manifold}	& 60.1 		& 59.5 		& 70.7 		& -\\
		SemiEmb \cite{Weston2008deep}		& 59.6 		& 59.0 		& 71.1 		& -\\
		Feat 	\cite{Yang2016revisiting}	& 57.2 		& 57.4 		& 69.8 		& -\\
		DeepWalk \cite{deepwalk}	& 43.2 		& 67.2 		& 65.3 		& -\\
		GGNN	\cite{Li2015gated}	& 68.1 		& 77.9 		& 77.2 		& -\\
		Planetoid \cite{Yang2016revisiting}	& 64.9 		& 75.7 		& 75.7 		& -\\
		Kipf-GCN \cite{Kipf2016}			& 69.4 $\pm$ 0.4 	& 80.9 $\pm$ 0.4 	& 76.8 $\pm$ 0.2 	& 85.7 $\pm$ 0.3\\
		G-GCN \cite{gcn_srl}				& 69.6 $\pm$ 0.5 		& 81.2 $\pm$ 0.4 		& 77.0 $\pm$ 0.3 		& 86.0 $\pm$ 0.2 \\
		GPNN \cite{Liao2018graph} 			& 68.1 $\pm$ 1.8		& 79.0 $\pm$ 1.7 		&  73.6 $\pm$ 0.5		& 69.4 $\pm$ 2.3 \\
		GAT \cite{gat} & 72.5 $\pm$ 0.7 		& \textbf{83.0 $\pm$ 0.7} 		& 79.0 $\pm$ 0.3 		& 83.0 $\pm$ 0.8\\
		\midrule
		ConfGCN (this work)	 & \textbf{72.7 $\pm$ 0.8} & 82.0 $\pm$ 0.3 & \textbf{79.5 $\pm$ 0.5} & \textbf{86.5 $\pm$ 0.3}\\
		\bottomrule
	\end{tabular}
	\caption{\label{confgcn_tbl:qualitative_results} Performance comparison of several methods for semi-supervised node classification on multiple benchmark datasets. ConfGCN performs consistently better across all the datasets. Baseline method performances on Citeseer, Cora and Pubmed datasets are taken from \citet{Liao2018graph,gat}. We consider only the top performing baseline methods on these datasets for evaluation on the Cora-ML dataset. Please refer \refsec{confgcn_sec:quantitative_results} for details.}
	
\end{table*}

\textbf{Hyperparameters:} We use the same data splits as described in \cite{Yang2016revisiting}, with a test set of 1000 labeled nodes for testing the prediction accuracy of ConfGCN and a validation set of 500 labeled nodes for optimizing the hyperparameters. The ranges of hyperparameters were adapted from previous literature \cite{Orbach2012,Kipf2016}. The model is trained using Adam \cite{adam_opt} with a learning rate of 0.01. The weight matrices along with ${\bf \mu}$  are initialized using Xavier initialization \cite{xavier_init} and ${\bf \Sigma}$ matrix is initialized with identity. To avoid numerical instability we model ${\bf \Sigma^{-1}}$ directly and compute ${\bf \Sigma}$ wherever required. Following \citet{Kipf2016}, we use two layers of GCN ($K$) for all the experiments in this chapter.

\subsubsection{Baselines}
\label{confgcn_sec:baselines}
For evaluating ConfGCN, we compare against the following baselines:
\begin{itemize}[]
	\item \textbf{Feat} \cite{Yang2016revisiting} takes only node features as input and ignores the graph structure.
	\item \textbf{ManiReg} \cite{Belkin2006manifold} is a framework for providing data-dependent geometric regularization.
	\item \textbf{SemiEmb} \cite{Weston2008deep} augments deep architectures with semi-supervised regularizers to improve their training.
	\item \textbf{LP} \cite{term1_just1} is an iterative iterative label propagation algorithm which propagates a nodes labels to its neighboring unlabeled nodes according to their proximity.
	\item \textbf{DeepWalk} \cite{deepwalk} learns node features by treating random walks in a graph as the equivalent of sentences.
	\item \textbf{Planetoid} \cite{Yang2016revisiting} provides a transductive and inductive framework for  jointly predicting class label and neighborhood context of a node in the graph. 
	\item \textbf{GCN} \cite{Kipf2016} is a variant of convolutional neural networks used for semi-supervised learning on graph-structured data.
	\item \textbf{G-GCN} \cite{gcn_srl} is a variant of GCN with edge-wise gating to discard noisy edges during aggregation.
	\item \textbf{GGNN} \cite{Li2015gated} is a generalization of RNN framework which can be used for graph-structured data.
	\item \textbf{GPNN} \cite{Liao2018graph} is a graph partition based algorithm which propagates information after partitioning large graphs into smaller subgraphs.
	\item \textbf{GAT} \cite{gat} is a graph attention based method which provides different weights to different nodes by allowing nodes to attend to their neighborhood.
\end{itemize}



\begin{figure*}[t!]
	\begin{minipage}{0.5\linewidth}
		\centering
		\includegraphics[width=\linewidth]{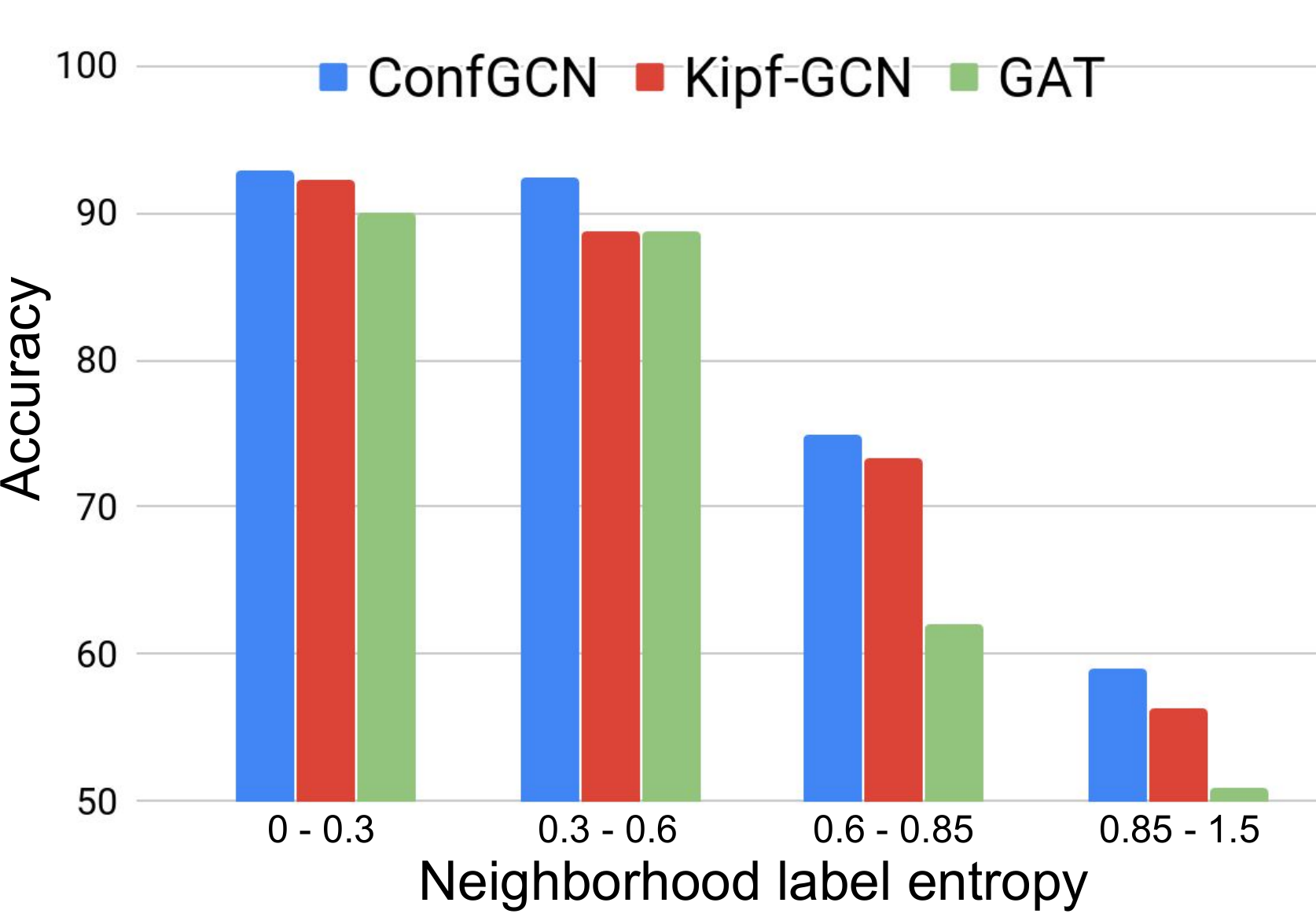}
		\subcaption{\label{confgcn_fig:entropy_plot}}
	\end{minipage} 
	\hfill
	\begin{minipage}{0.5\linewidth}
		\centering
		\includegraphics[width=\linewidth]{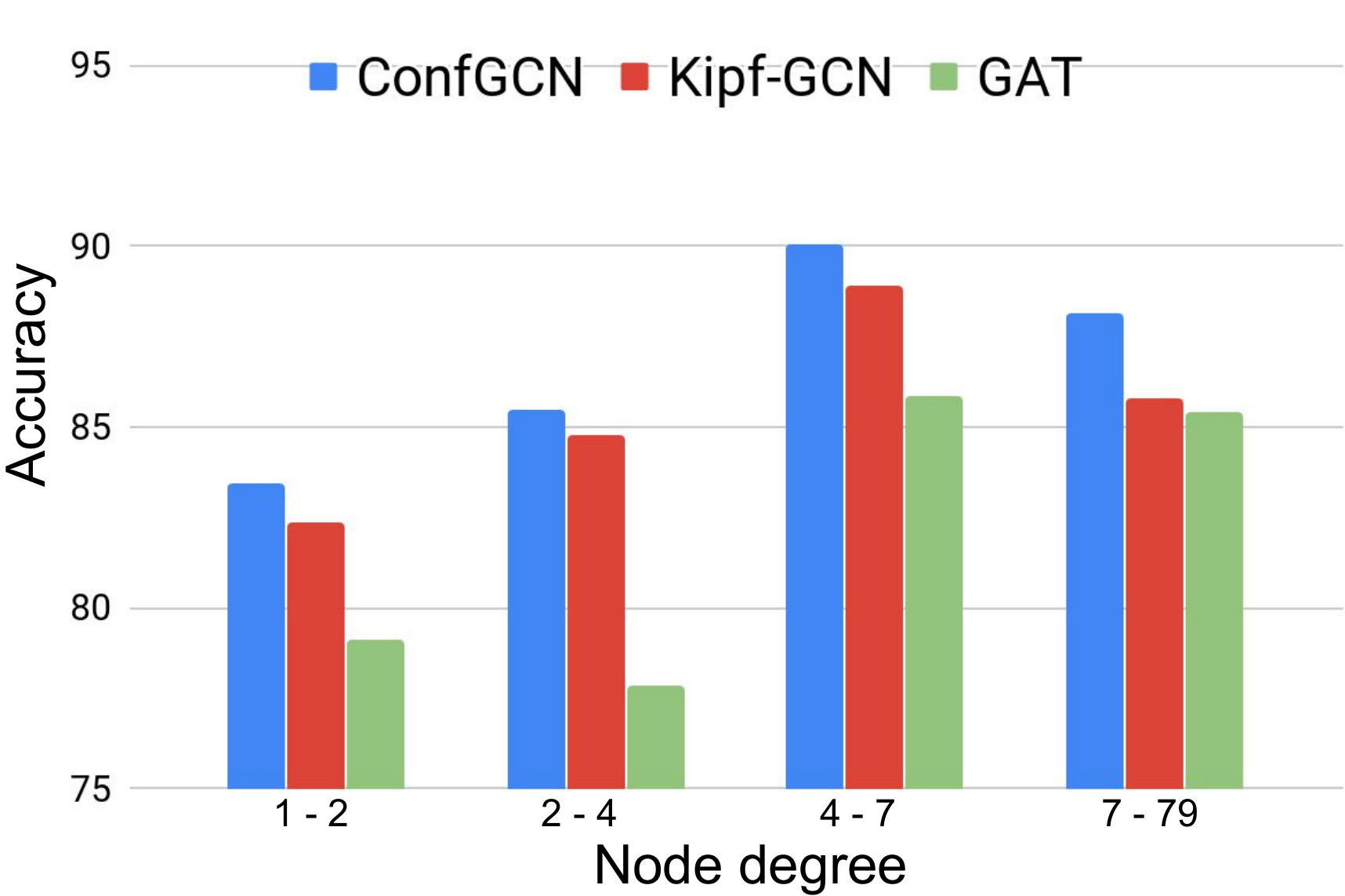}
		\subcaption{\label{confgcn_fig:degree_plot}}
	\end{minipage}
	\caption{Plots of node classification accuracy vs. (a) neighborhood label entropy and (b) node degree. On $x$-axis, we plot quartiles of (a) neighborhood label entropy and (b) degree, i.e., each bin has 25\% of the samples in sorted order. 
		Overall, we observe that ConfGCN performs  better than Kipf-GCN and GAT at all levels of node entropy and degree. 
		Please see \refsec{confgcn_sec:entropy} for details.}
\end{figure*}

\subsection{Results}
\label{confgcn_sec:results}

In this section, we attempt to answer the following questions:
\begin{itemize}[itemsep=2pt,topsep=4pt]
	\item[Q1.] How does ConfGCN compare against existing methods for the semi-supervised node classification task? (\refsec{confgcn_sec:quantitative_results})
	\item[Q2.] How do the performance of methods vary with increasing node degree and neighborhood label mismatch? (\refsec{confgcn_sec:entropy}) 
	\item[Q3.] How does increasing the number of layers effect ConfGCN's performance? (\refsec{confgcn_sec:layers_results})
	\item[Q4.] What is the effect of ablating different terms in ConfGCN's loss function? (\refsec{confgcn_sec:ablation_results})
	
\end{itemize}

\subsubsection{Node Classification}
\label{confgcn_sec:quantitative_results}

The evaluation results for semi-supervised node classification are summarized in \reftbl{confgcn_tbl:qualitative_results}. Results of all other baseline methods on Cora, Citeseer and Pubmed datasets are taken from \cite{Liao2018graph,gat} directly. For evaluation on the Cora-ML dataset, only top performing baselines from the other three datasets are considered. Overall, we find that ConfGCN outperforms all existing approaches consistently across all the datasets. 

This may be attributed to ConfGCN's ability to model nodes' label distribution along with the confidence scores which subdues the effect of noisy nodes during neighborhood aggregation. The lower performance of GAT \cite{gat} compared to Kipf-GCN on Cora-ML shows that computing attention based on the hidden representation of nodes is not much helpful in suppressing noisy neighborhood nodes. 
We also observe that the performance of GPNN \cite{Liao2018graph} suffers on the Cora-ML dataset. This is due to the fact that while propagating information between small subgraphs, the high label mismatch rate in Cora-ML (please see \reftbl{confgcn_tbl:dataset_statistics}) leads to wrong information propagation. Hence, during the global propagation step, this error is further magnified.

\subsubsection{Effect of Node Entropy and Degree on Performance}
\label{confgcn_sec:entropy}
In this section, we provide an analysis of the performance of Kipf-GCN, GAT and ConfGCN for node classification on the Cora-ML dataset which has higher label mismatch rate. We use neighborhood label entropy  to quantify label mismatch, which for a node $u$ is defined as follows.
\[
	\mathrm{Neighbor Label Entropy}(u) = - \sum_{l=1}^{L} p_{ul}  \log p_{ul}\]
\[
	\text{where,}\,\,\, p_{ul} = \frac{|\{v \in \mathcal{N}(u)~|~\mathrm{label}(v) = l \}|}{|\mathcal{N}(u)|}.
\]

Here, $\mathrm{label}(v)$ is the true label of node $v$. The results for neighborhood label entropy and node degree are summarized in Figures \ref{confgcn_fig:entropy_plot} and \ref{confgcn_fig:degree_plot}, respectively. On the x-axis of these figures, we plot quartiles of label entropy and degree, i.e., each bin has 25\% of the instances in sorted order. With increasing neighborhood label entropy, the node classification task  is expected to become more challenging. We indeed see this trend in \reffig{confgcn_fig:entropy_plot} where performances of all the methods degrade with increasing neighborhood label entropy. However, ConfGCN performs comparatively better than the existing state-of-art approaches at all levels of node entropy.

\begin{figure}[t]
	\begin{minipage}{3.25in}
		\includegraphics[width=3.25in]{./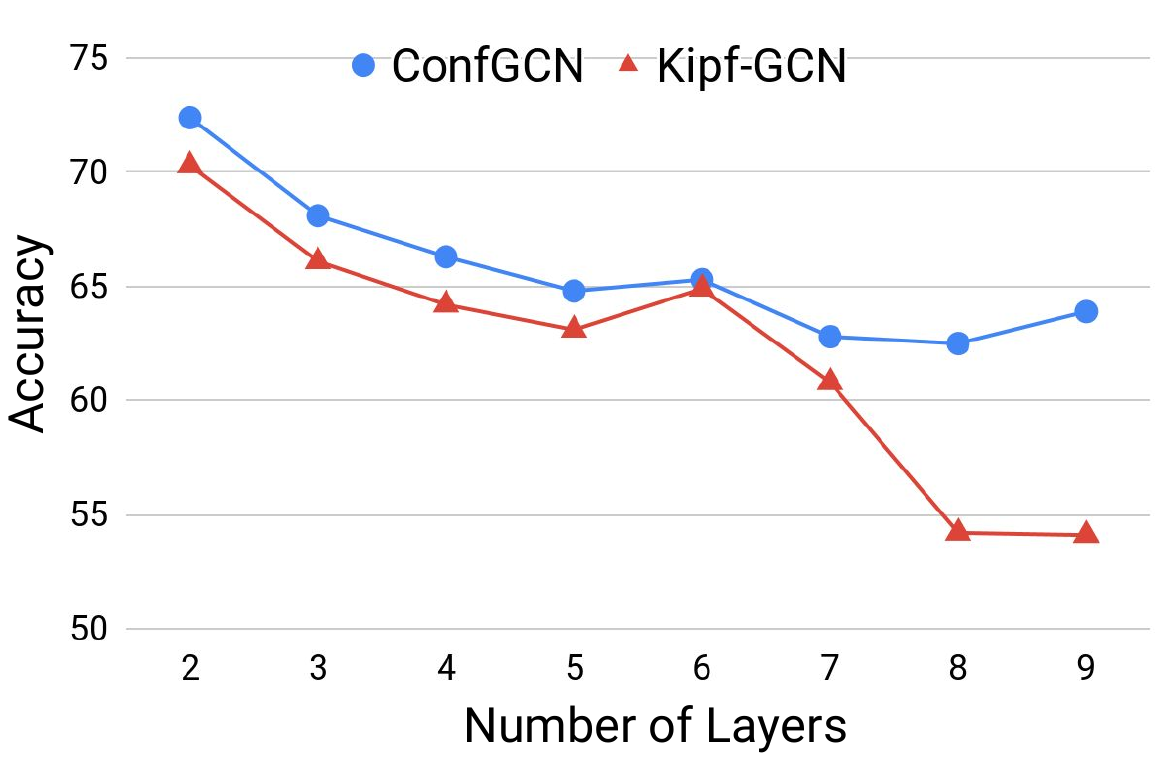}
		\subcaption{\label{confgcn_fig:layers} \small Evaluation of Kipf-GCN and ConfGCN on the citeseer dataset with increasing number of GCN layers. Overall, ConfGCN outperforms Kipf-GCN, and while both methods' performance degrade with increasing layers, ConfGCN's degradation is more gradual than Kipf-GCN's abrupt drop. Please see \refsec{confgcn_sec:layers_results} for details.}
	\end{minipage}
	\begin{minipage}{3.25in}
		\includegraphics[width=3.25in]{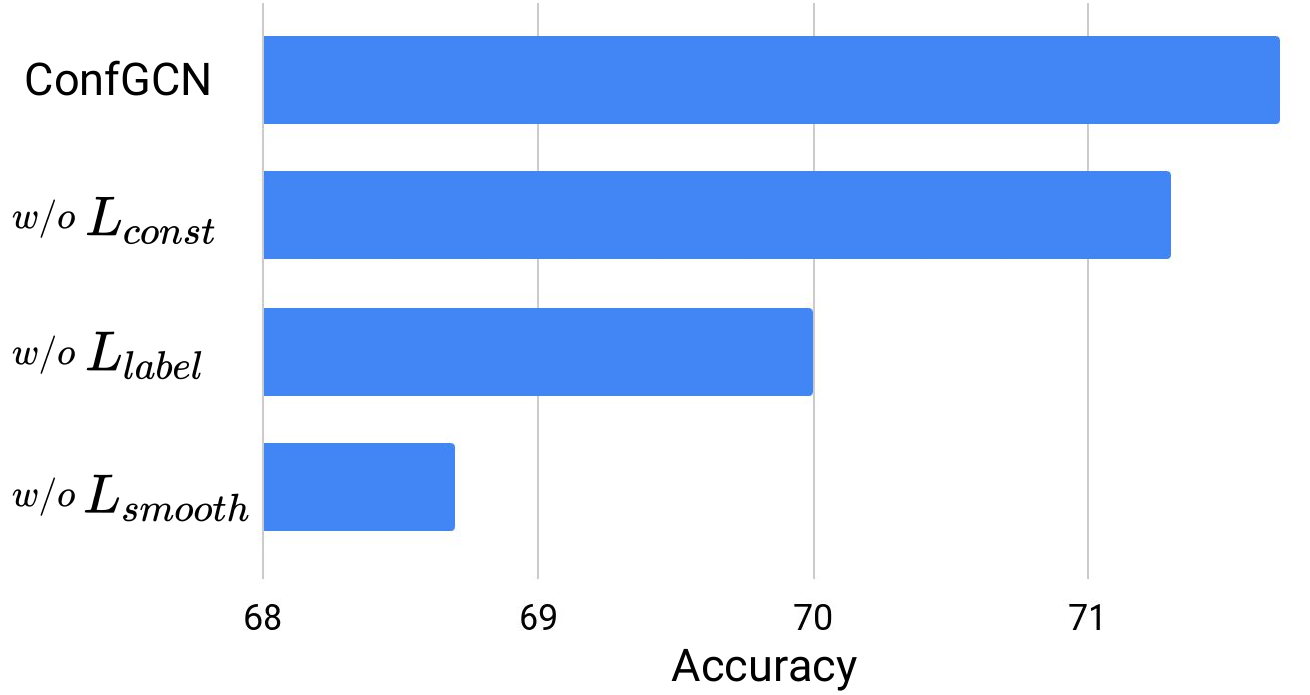}
		\subcaption{\label{confgcn_fig:ablation} \small Performance comparison of different ablated version of ConfGCN on the citeseer dataset. These results justify inclusion of the different terms in ConfGCN's objective function. Please see \refsec{confgcn_sec:ablation_results} for details.}
	\end{minipage}
\end{figure}

In case of node degree also (\reffig{confgcn_fig:degree_plot}), we find that ConfGCN performs better than Kipf-GCN and GAT at all quartiles of node degrees. Classifying sparsely connected nodes (first and second bins) is challenging as very little information is present in the node neighborhood. Performance improves with availability of moderate number of neighboring nodes (third bin), but further increase in degree (fourth bin) results in introduction of many potentially noisy neighbors, thereby affecting performance of all the methods. For higher degree nodes, ConfGCN gives an improvement of around 3\% over GAT and Kipf-GCN. This shows that ConfGCN, through its use of label confidences, is able to give higher influence score to relevant nodes in the neighborhood during aggregation while reducing importance of the noisy ones.



\subsubsection{Effect of Increasing Convolutional Layers}
\label{confgcn_sec:layers_results}

Recently, \citet{Xu2018} highlighted an unusual behavior of Kipf-GCN where its performance degrades significantly with increasing number of layers.  
This is because of increase in the number of influencing nodes with increasing layers, resulting in ``averaging out" of information during aggregation.
For comparison, we evaluate the performance of Kipf-GCN and ConfGCN on citeseer dataset with increasing number of convolutional layers. The results are summarized in \reffig{confgcn_fig:layers}. We observe that Kipf-GCN's performance degrades drastically with increasing number of layers,  whereas ConfGCN's decrease in performance is more gradual. This shows that confidence based GCN helps in alleviating this problem. We also note that ConfGCN outperforms Kipf-GCN at all layer levels.

\subsubsection{Ablation Results}
\label{confgcn_sec:ablation_results}
In this section, we evaluate the different ablated version of ConfGCN by cumulatively eliminating terms from its objective function as defined in \refsec{confgcn_sec:details}. The results on citeseer dataset are summarized in \reffig{confgcn_fig:ablation}. Overall, we find that each term ConfGCN's loss function (\refeqn{confgcn_eqn:main_obj}) helps in improving its performance and the method performs best when all the terms are included. 


\section{Conclusion}
\label{confgcn_sec:conclusion}

In this chapter, we present ConfGCN, a confidence based Graph Convolutional Network which estimates label scores along with their confidences jointly in a GCN-based setting. In ConfGCN, the influence of one node on another during aggregation is determined using the estimated confidences and label scores, thus inducing anisotropic behavior to GCN. We demonstrate the effectiveness of ConfGCN against state-of-the-art  methods for the semi-supervised node classification task and analyze its performance in different settings. We make ConfGCN's source code available. 

\chapter{Composition-based Multi-Relational Graph Convolutional Networks for Relational Graphs}
\label{chap_compgcn}

\section{Introduction}
\label{compgcn_sec:introduction}


In this chapter, we address another important limitation of Graph Convolutional models. Most of the existing research on GCNs \citep{Kipf2016,graphsage,gat} have focused on learning representations of nodes in simple undirected graphs. A more general and pervasive class of graphs are multi-relational graphs\footnote{In this chapter, multi-relational graphs refer to graphs with edges that have labels and directions.}. A notable example of such graphs is knowledge graphs. Most of the existing GCN based approaches for handling relational graphs \citep{gcn_srl,r_gcn} suffer from over-parameterization and are limited to learning only node representations.
Hence, such methods are not directly applicable for tasks that require relation embedding vectors such as link prediction.
Initial attempts at learning representations for relations in graphs \citep{dual_primal_gcn,graph2seq} have shown some performance gains on tasks like node classification and neural machine translation.

\begin{figure*}[t]	
	\centering	
	\includegraphics[width=\textwidth]{./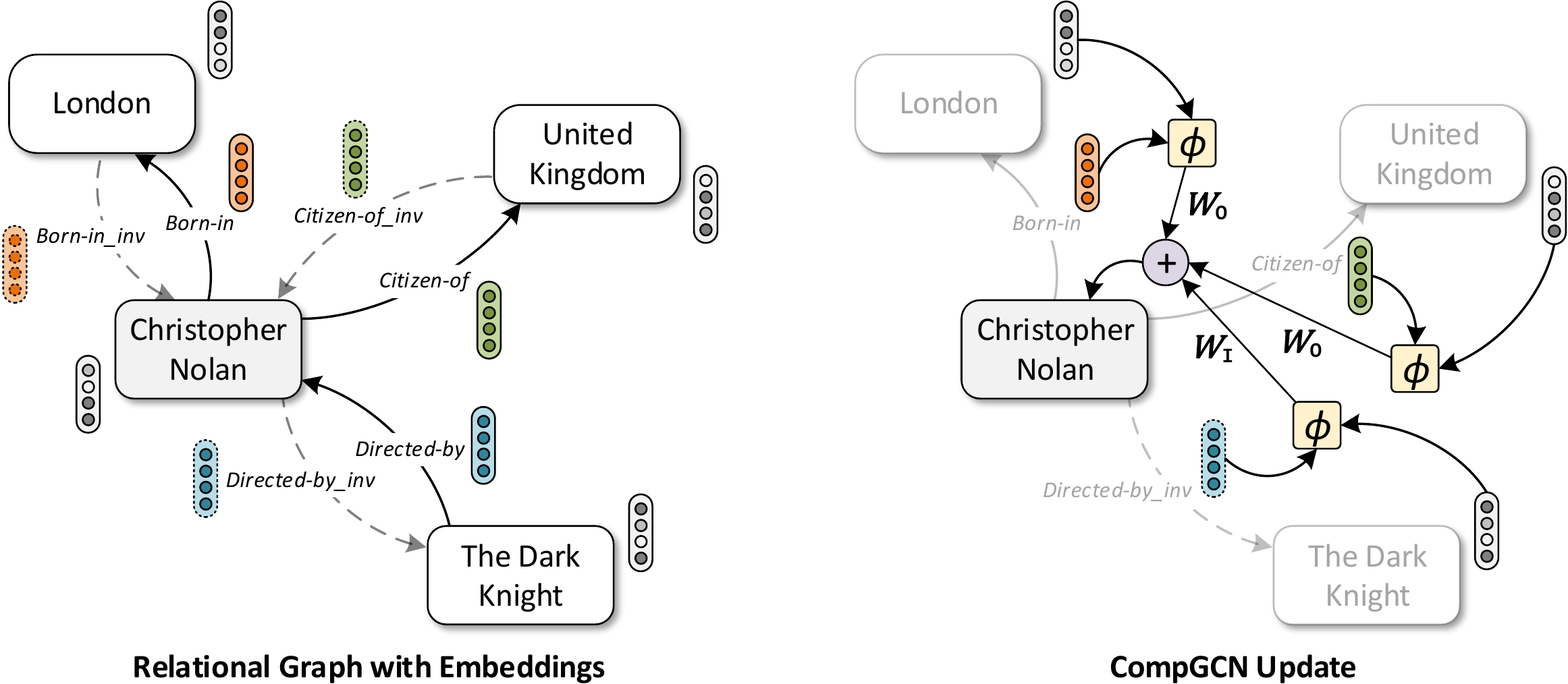}	
	\caption{\label{compgcn_fig:method_overview} Overview of \textsc{CompGCN}. Given node and relation embeddings, \textsc{CompGCN} performs a composition operation $\phi(\cdot)$ over each edge in the neighborhood of a central node (e.g. \textit{Christopher Nolan} above). The composed embeddings are then convolved with specific filters $\bm{W}_O$ and $\bm{W}_I$ for original and inverse relations respectively. We omit self-loop in the diagram for clarity. The message from all the neighbors are then aggregated to get an updated embedding of the central node. Also, the relation embeddings are transformed using a separate weight matrix. Please refer to Section \ref{compgcn_sec:details} for details.
	}	
\end{figure*}

There has been extensive research on embedding Knowledge Graphs (KG) \citep{survey2016nickel,survey2017} where representations of both nodes and relations are jointly learned. These methods are restricted to learning embeddings using link prediction objective. Even though GCNs can learn from task-specific objectives such as classification, their application has been largely restricted to non-relational graph setting. 
Thus, there is a need for a framework which can utilize KG embedding techniques for learning task-specific node and relation embeddings.  In this chapter, we propose \textsc{CompGCN}, a novel GCN framework for multi-relational graphs which systematically leverages entity-relation composition operations from knowledge graph embedding techniques.
\textsc{CompGCN} addresses the shortcomings of previously proposed GCN models by jointly learning vector representations for both nodes and relations in the graph. An overview of \textsc{CompGCN} is presented in Figure \ref{compgcn_fig:method_overview}.
The contributions of our work can be summarized as follows:

\begin{enumerate}[itemsep=2pt,parsep=0pt,partopsep=0pt,leftmargin=*,topsep=0.2pt]
	\item We propose \textsc{CompGCN}, a novel framework for incorporating multi-relational information in Graph Convolutional Networks which leverages a variety of composition operations from knowledge graph embedding techniques to jointly embed both nodes and relations in a graph.
	\item We demonstrate that \textsc{CompGCN} framework generalizes several existing multi-relational GCN methods (Proposition \ref{compgcn_prop:reduction}) and also scales with the increase in number of relations in the graph (Section \ref{compgcn_sec:results_basis}). 
	\item Through extensive experiments on tasks such as node classification, link prediction, and graph classification, we demonstrate the effectiveness of our proposed method.
\end{enumerate} 
\section{Related Work}
\label{compgcn_sec:related_work}

\textbf{Graph Convolutional Networks:} GCNs generalize Convolutional Neural Networks (CNNs) to non-Euclidean data. GCNs were first introduced by \cite{Bruna2013} and later made scalable through efficient localized filters in the spectral domain \citep{Defferrard2016}. A first-order approximation of GCNs using Chebyshev polynomials has been proposed by \cite{Kipf2016}. Recently, several of its extensions have also been formulated \citep{graphsage,gat,gin}. Most of the existing GCN methods follow \textit{Message Passing Neural Networks} (MPNN) framework \citep{mpnn} for node aggregation. 
Our proposed method can be seen as an instantiation of the MPNN framework. However, it is specialized for relational graphs.



\textbf{GCNs for Multi-Relational Graph:} An extension of GCNs for relational graphs is proposed by \cite{gcn_srl}. However, they only consider direction-specific filters and ignore relations due to over-parameterization. \cite{r_gcn} address this shortcoming by proposing basis and block-diagonal decomposition of relation specific filters. \textit{Weighted Graph Convolutional Network}  \citep{sacn} utilizes learnable relational specific scalar weights during GCN aggregation. While these methods show performance gains on node classification and link prediction, they are limited to embedding only the nodes of the graph. 
Contemporary to our work, \cite{vrgcn} have also proposed an extension of GCNs for embedding both nodes and relations in multi-relational graphs. However, our proposed method is a more generic framework which can leverage any KG composition operator. We compare against their method in Section \ref{compgcn_sec:results_link}.


\textbf{Knowledge Graph Embedding:} 
Knowledge graph (KG) embedding is a widely studied field \citep{survey2016nickel, survey2017} with application in tasks like link prediction and question answering \citep{kg_question_answering}. Most of KG embedding approaches define a score function and train node and relation embeddings such that valid triples are assigned a higher score than the invalid ones. Based on the type of score function, KG embedding method are classified as translational \citep{transe,transh}, semantic matching based \citep{distmult,hole} and neural network based \citep{ntn_kg,conve}. In our work, we evaluate the performance of \textsc{CompGCN} on link prediction with methods of all three types.

\section{\textsc{CompGCN} Details}
\label{compgcn_sec:details}

In this section, we provide a detailed description of our proposed method, \textsc{CompGCN}. The overall architecture is shown in Figure \ref{compgcn_fig:method_overview}. We represent a multi-relational graph by $\m{G}=(\m{V}, \m{R}, \m{E}, \bm{\m{X}},  \bm{\m{Z}})$ as defined in Section \ref{sec:directed_gcn} where $\bm{\m{Z}} \in \real{|\m{R}| \times d_0}$ denotes the initial relation features. Our model is motivated by the first-order approximation of GCNs using Chebyshev polynomials \citep{Kipf2016}. Following \citet{gcn_srl}, we also allow the information in a directed edge to flow along both directions. Hence, we extend $\m{E}$ and $\m{R}$ with corresponding inverse edges and relations, i.e., 
\begin{equation*}
\m{E'} = \m{E} \cup \{(v,u,r^{-1})~|~ (u,v,r) \in \m{E}\} \cup \{(u, u, \top)~|~u \in \m{V})\} ,
\end{equation*}
and $\m{R}' = \m{R} \cup \m{R}_{inv} \cup \{\top\}$, where $\m{R}_{inv} =\{ r^{-1} \hspace{2pt} | \hspace{2pt} r \in \m{R} \}$ denotes the inverse relations and $\top$ indicates the self loop.

\subsection{Relation-based Composition}
\label{compgcn_sec:details_relation}
Unlike most of the existing methods which embed only nodes in the graph, \textsc{CompGCN} learns a $d$-dimensional representation $\bm{h}_r \in \real{d}, \forall r \in \m{R}$ along with node embeddings $\bm{h}_v \in \real{d}, \forall v \in \m{V}$. Representing relations as vectors alleviates the problem of over-parameterization while applying GCNs on relational graphs. Further, it allows \textsc{CompGCN} to exploit any available relation features $(\bmm{Z})$ as initial representations. To incorporate relation embeddings into the GCN formulation, we leverage the entity-relation composition operations used in Knowledge Graph embedding approaches \citep{transe,survey2016nickel}, which are of the form
\[
\bm{e}_o = \phi(\bm{e}_s, \bm{e}_r).
\]
Here, $\phi:\mathbb{R}^{d} \times \mathbb{R}^{d} \to \mathbb{R}^{d}$ is a composition operator, $s$, $r$, and $o$ denote subject, relation and object in the knowledge graph and $\bm{e}_{(\cdot)} \in \real{d}$ denotes their corresponding embeddings. In this chapter, we restrict ourselves to non-parameterized operations like subtraction \citep{transe}, multiplication \citep{distmult} and circular-correlation \citep{hole}. However, \textsc{CompGCN} can be extended to parameterized operations like Neural Tensor Networks (NTN) \citep{ntn_kg} and ConvE \citep{conve}. We defer their analysis as future work.

As we show in Section \ref{compgcn_sec:results}, the choice of composition operation is important in deciding the quality of the learned embeddings. Hence, superior composition operations for Knowledge Graphs developed in future can be adopted to improve \textsc{CompGCN}'s performance further.

\subsection{\textsc{CompGCN} Update Equation}

The GCN update equation (Eq. \ref{eqn:kipf_main})) defined in Section \ref{sec:undirected_gcn} can be re-written as
\[
\bm{h}_{v} = f \Bigg(\sum_{ (u,r) \in \mathcal{N}(v)}  \bm{W}_{r} \bm{h}_{u} \Bigg),
\]
where $\m{N}(v)$ is a set of immediate neighbors of $v$ for its outgoing edges. Since this formulation suffers from over-parameterization, in \textsc{CompGCN} we perform composition ($\phi$) of a neighboring node $u$ with respect to its relation $r$ as defined above. This allows our model to be relation aware while being linear ($\mathcal{O}(|\m{R}|d)$) in the number of feature dimensions. Moreover, for treating original, inverse, and self edges differently, we define separate filters for each of them. The update equation of \textsc{CompGCN} is given as:
\begin{equation}
\label{compgcn_eq:node_update}
\bm{h}_{v} = f \Bigg(\sum_{ (u,r) \in \mathcal{N}(v)} \bm{W}_{\lambda(r)} \phi(\bm{x}_{u}, \bm{z}_r) \Bigg),
\end{equation}
where $\bm{x}_u, \bm{z}_r$ denotes initial features for node $u$ and relation $r$ respectively, $\bm{h}_{v}$ denotes the updated representation of node $v$, and $\bm{W}_{\lambda(r)} \in \real{d_1 \times d_0}$ is a relation-type specific parameter.  In \textsc{CompGCN}, we use direction specific weights, i.e., $\lambda({r}) = \mathrm{dir}(r)$, given as:
\begin{equation}
\label{compgcn_eqn:weight_def}
\bm{W}_{\mathrm{dir}(r)} =
\begin{cases} 
\bm{W}_O, & r \in \m{R}\\
\bm{W}_I, & r \in \m{R}_{inv}\\
\bm{W}_S, & r = \top  \ \text{\textit{(self-loop)}}
\end{cases}
\end{equation}

Further, in \textsc{CompGCN}, after the node embedding update defined in Eq. \ref{compgcn_eq:node_update}, the relation embeddings are also transformed as follows:
\begin{equation}
\label{compgcn_eq:rel_share}
\bm{h}_r = \bm{W}_{\mathrm{rel}} \bm{z}_r   ,
\end{equation}
where $\bm{W}_{\mathrm{rel}} \in \real{d_1 \times d_0}$ is a learnable transformation matrix which projects the relations to the same embedding space as nodes and allows them to be utilized in the next \textsc{CompGCN} layer. 

To ensure that \textsc{CompGCN} scales with the increasing number of relations, we use a variant of the basis formulations proposed in \citet{r_gcn}. Instead of independently defining an embedding for each relation, they are expressed as a linear combination of a set of basis vectors. Formally, let $ \{\bm{v}_1, \bm{v}_2, ..., \bm{v}_{\m{B}} \}$ be a set of learnable basis vectors. Then, initial relation representation is given as:
\[
\bm{z}_r = \sum_{b=1}^{\mathcal{B}} \alpha_{{}_{br}} \bm{v}_b.
\]
Here, $\alpha_{{}_{br}} \in \real{}$ is relation and basis specific learnable scalar weight. Note that this is different from the formulation in \citet{r_gcn}, where a separate set of basis matrices is defined for each GCN layer. In \textsc{CompGCN}, basis vectors are defined only for the first layer, and the later layers share the relations through transformations according to Equation \ref{compgcn_eq:rel_share}.

We can extend the formulation of Equation \ref{compgcn_eq:node_update} to the case where we have $k$-stacked \textsc{CompGCN} layers. Let $\bm{h}_v^{k+1}$ denote the representation of a node $v$ obtained after $k$ layers which is defined as
\begin{equation}
\label{compgcn_eqn:main_upd}
\bm{h}_{v}^{k+1} = f \Bigg(\sum_{ (u,r) \in \mathcal{N}(v)} \bm{W}_{\lambda(r)}^k \phi(\bm{h}_{u}^k, \bm{h}_r^k) \Bigg) .
\end{equation}
Similarly, let $\bm{h}_r^{k+1}$ denote the representation of a relation $r$ after $k$ layers. Then,
\[
\bm{h}_r^{k+1} = \bm{W}_{\mathrm{rel}}^k \  \bm{h}_r^k.
\]
Here, $\bm{h}_v^0$ and $\bm{h}_r^0$ are the initial node ($\bm{x}_v$) and relation ($\bm{z}_r$) features respectively.

\begin{proposition}
\label{compgcn_prop:reduction}
\textsc{CompGCN} generalizes the following Graph Convolutional based methods: \textbf{Kipf-GCN} \citep{Kipf2016}, \textbf{Relational GCN} \citep{r_gcn}, \textbf{Directed GCN} \citep{gcn_srl}, and \textbf{Weighted GCN} \citep{sacn}. 
\end{proposition}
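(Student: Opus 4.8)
The plan is to prove the proposition by a direct reduction argument: for each of the four special cases I will exhibit a concrete choice of the composition operator $\phi(\cdot)$, the relation-type grouping $\lambda(r)$, and the relation representations $\bm{h}_r$ that collapses the general \textsc{CompGCN} update (Equation \ref{compgcn_eqn:main_upd}) into the published update rule of that method. Since \textsc{CompGCN}'s only relation-dependent ingredients are the composition $\phi$ and the weight-sharing map $\lambda$, controlling these two knobs (together with the initial relation features $\bm{z}_r$) suffices to recover each baseline.

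The key observation enabling all four reductions is that the non-parameterized composition operators admit an \emph{identity element}: for multiplication the all-ones relation embedding gives $\phi(\bm{h}_u, \bm{1}) = \bm{h}_u$, and for subtraction the zero relation embedding gives $\phi(\bm{h}_u, \bm{0}) = \bm{h}_u$. Fixing the relation embeddings to such an element makes $\phi$ transparent, so that \textsc{CompGCN} aggregates the raw neighbour embeddings $\bm{h}_u$ exactly as in a relation-unaware GCN. I would then dispatch the four cases as follows. (i) \textbf{Kipf-GCN}: take $\lambda(r)$ constant so that a single filter $\bm{W}$ is shared across all edges; Equation \ref{compgcn_eqn:main_upd} then becomes $\bm{h}_v = f(\sum_{u} \bm{W}\bm{h}_u)$, matching Equation \ref{eqn:kipf_main}. (ii) \textbf{Relational GCN}: set $\lambda(r) = r$ so the weights become relation-specific, recovering $\bm{h}_v = f(\sum_{(u,r)} \bm{W}_r \bm{h}_u)$, with \textsc{CompGCN}'s basis formulation specializing to the basis decomposition of \citet{r_gcn}. (iii) \textbf{Directed GCN}: retain \textsc{CompGCN}'s default $\lambda(r)=\mathrm{dir}(r)$ from Equation \ref{compgcn_eqn:weight_def}, yielding the direction-specific aggregation of Equation \ref{eqn:gcn_directed}. (iv) \textbf{Weighted GCN}: choose a scalar composition $\phi(\bm{h}_u, \bm{h}_r) = h_r\,\bm{h}_u$ with a one-dimensional learnable relation scalar $h_r$ and a single shared $\bm{W}$, giving $\bm{h}_v = f(\sum_{(u,r)} h_r\,\bm{W}\bm{h}_u)$, which is precisely the weighted aggregation of \citep{sacn}.

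The step that requires the most care is ensuring each reduction is \emph{faithful} rather than merely superficially similar. In particular, for the three baselines that embed only nodes I must verify that the relation-update step (Equation \ref{compgcn_eq:rel_share}) can be neutralized without side effects: this is handled by pinning $\bm{W}_{\mathrm{rel}}$ so that the relation embedding stays at its chosen identity element across layers, ensuring the relation channel never feeds back into the node aggregation. I would also confirm that the extension to $k$ stacked layers preserves each reduction layer-by-layer, and that the basis-vector parameterization of $\bm{z}_r$ (used only in the first layer) does not obstruct the Relational-GCN reduction. I do not anticipate any essential difficulty beyond this bookkeeping, since the argument is structural and each case amounts to a substitution into Equation \ref{compgcn_eqn:main_upd} followed by a term-by-term comparison with the target update rule.
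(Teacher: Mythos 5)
Your proposal is correct and takes essentially the same route as the paper: the published proof is exactly a case-by-case instantiation of Equation \ref{compgcn_eqn:main_upd}, choosing the weight-sharing and composition per Table \ref{compgcn_tbl:reduction} (shared $\bm{W}$ with $\phi(\bm{h}_u,\bm{h}_r)=\bm{h}_u$ for Kipf-GCN, $\bm{W}_r$ for Relational-GCN, $\bm{W}_{\mathrm{dir}(r)}$ for Directed-GCN, and $\alpha_r\bm{h}_u$ with shared $\bm{W}$ for Weighted-GCN). The only difference is cosmetic: where the paper lets $\phi$ simply project onto its first argument, you achieve the same transparency via identity elements of the actual operators ($\bm{0}$ for subtraction, $\bm{1}$ for multiplication) and pin $\bm{W}_{\mathrm{rel}}$ so these persist across layers---valid extra bookkeeping that the paper leaves implicit.
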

\begin{proof}
	For Kipf-GCN, this can be trivially obtained by making weights $(\bm{W}_{\lambda(r)})$ and composition function $(\phi)$ relation agnostic in Equation \ref{compgcn_eqn:main_upd}, i.e., $\bm{W}_{\lambda(r)} = \bm{W}$ and $\phi(\bm{h}_{u}, \bm{h}_r) = \bm{h}_u$. Similar reductions can be obtained for other methods as shown in Table \ref{compgcn_tbl:reduction}.
\end{proof}

\section{Experiments}
\subsection{Experimental Setup}


%

\begin{table}[t]
	\centering
	\small
	\begin{tabular}{llc}
		\toprule
		  \multicolumn{1}{c}{\textbf{Methods}} & $\bm{W}^k_{\lambda(r)}$ & $\phi(\bm{h}^k_{u}, \bm{h}^k_r)$ \\ 
		\midrule
		Kipf-GCN \citep{Kipf2016} & $\bm{W}^k$ & $\bm{h}^k_u$ \\ 
	    Relational-GCN  \citep{r_gcn} & $\bm{W}^k_r$ & $\bm{h}^k_u$ \\ 
	    Directed-GCN \citep{gcn_srl} & $\bm{W}^k_{\mathrm{dir}(r)}$ & $\bm{h}^k_u$ \\ 
	    Weighted-GCN  \citep{sacn} & $\bm{W}^k$ & $\alpha^k_{r} \bm{h}^k_u$\\ 
		\bottomrule
	\end{tabular}
	
	\caption{\label{compgcn_tbl:reduction}Reduction of \textsc{CompGCN} to several existing Graph Convolutional methods. Here, $\alpha^k_r$ is a relation specific scalar, $\bm{W}^k_r$ denotes a separate weight for each relation, and $\bm{W}^k_{\mathrm{dir}(r)}$ is as defined in Equation \ref{compgcn_eqn:weight_def}. Please refer to Proposition \ref{compgcn_prop:reduction} for more details.}
\end{table}


\subsubsection{Evaluation tasks}
\label{compgcn_sec:exp_tasks}
In our experiments, we evaluate \textsc{CompGCN} on the below-mentioned tasks.

\begin{itemize}[itemsep=2pt,parsep=0pt,partopsep=0pt,leftmargin=*,topsep=2pt]

\item  \textbf{Link Prediction} is the task of inferring missing facts based on the known facts in Knowledge Graphs. In our experiments, we utilize FB15k-237 \citep{toutanova} and WN18RR \citep{conve} datasets for evaluation. Following \cite{transe}, we use filtered setting for evaluation and report Mean Reciprocal Rank (MRR), Mean Rank (MR) and Hits@N.
	
\item  \textbf{Node Classification} is the task of predicting the labels of nodes in a graph-based on node features and their connections. Similar to \cite{r_gcn}, we evaluate \textsc{CompGCN} on MUTAG (Node) and AM \citep{rdf2vec} datasets.  

\item \textbf{Graph Classification}, where, given a set of graphs and their corresponding labels, the goal is to learn a representation for each graph which is fed to a classifier for prediction. We evaluate on 2 bioinformatics dataset: MUTAG (Graph) and PTC \citep{graph_datasets}.  

\end{itemize} 
A summary statistics of the datasets used is provided in Table \ref{compgcn_table:rgcn_data}.
%

\begin{table}[t!]
	\centering
	\small
	\begin{tabular}{lcccccc}
		\toprule
		&  \multicolumn{2}{c}{\bf Link Prediction} & \multicolumn{2}{c}{\bf Node Classification} & \multicolumn{2}{c}{\bf Graph Classification}\\ 
		\cmidrule(r){2-3} \cmidrule(r){4-5} \cmidrule(r){6-7} 
		& \multicolumn{1}{c}{FB15k-237} & \multicolumn{1}{c}{WN18RR} & \multicolumn{1}{c}{MUTAG (Node)} &  \multicolumn{1}{c}{AM} & \multicolumn{1}{c}{MUTAG (Graph)} & \multicolumn{1}{c}{PTC}\\
		\midrule
		Graphs   & 1 & 1 & 1 & 1 & 188 & 344 \\
		Entities   & 14,541 & 40,943 & 23,644 & 1,666,764 & 17.9 (Avg) & 25.5 (Avg)\\
		Edges 	  & 310,116 & 93,003 & 74,227 & 5,988,321 & 39.6 (Avg) & 29.5 (Avg)\\
		Relations & 237 & 11 & 23 & 133 & 4 & 4 \\
		Classes   & - & - & 2 & 11 & 2 & 2\\
		\bottomrule
	\end{tabular}
	
	\caption{\label{compgcn_table:rgcn_data}The details of the datasets used for node classification, link prediction, and graph classification tasks. Please refer to Section \ref{compgcn_sec:exp_tasks} for more details.}
\end{table}

\subsubsection{Baselines}
\label{compgcn_sec:exp_baselines}

Across all tasks, we compare against the following GCN methods for relational graphs: (1) Relational-GCN (\textbf{R-GCN}) \citep{r_gcn} which uses relation-specific weight matrices that are defined as a linear combinations of a set of basis matrices. (2) Directed-GCN (\textbf{D-GCN}) \citep{gcn_srl} has separate weight matrices for incoming edges, outgoing edges, and self-loops. It also has relation-specific biases. (3) Weighted-GCN (\textbf{W-GCN}) \citep{sacn} assigns a learnable scalar weight to each relation and multiplies an incoming "message" by this weight. Apart from this, we also compare with several task-specific baselines mentioned below.

\textbf{Link prediction:} For evaluating \textsc{CompGCN}, we compare against several non-neural and neural baselines: TransE \cite{transe}, DistMult \citep{distmult}, ComplEx \citep{complex}, R-GCN \citep{r_gcn}, KBGAN \citep{kbgan}, ConvE \citep{conve}, ConvKB \citep{convkb}, SACN \citep{sacn}, HypER \citep{hyper}, RotatE \citep{rotate}, ConvR \citep{convr}, and VR-GCN \citep{vrgcn}.

\textbf{Node and Graph Classification:} For node classification, following \cite{r_gcn}, we compare with Feat \citep{feat}, WL \citep{wl}, and RDF2Vec \citep{rdf2vec}. Finally, for graph classification, we evaluate against \textsc{PachySAN} \citep{pachysan}, Deep Graph CNN (DGCNN)  \citep{dgcnn}, and Graph Isomorphism Network (GIN) \citep{gin}.

\begin{table*}[t]
	\centering
	\begin{small}
		\resizebox{\textwidth}{!}{
			\begin{tabular}{lccccccccccc}
				\toprule
				& \multicolumn{5}{c}{\textbf{FB15k-237}} && \multicolumn{5}{c}{\textbf{WN18RR}}\\ 
				\cmidrule(r){2-6}  \cmidrule(r){8-12} 
				& MRR & MR &H@10 &  H@3 & H@1 && MRR & MR & H@10 & H@3 & H@1 \\
				\midrule
				TransE \citep{transe}  & .294 & 357 & .465 & - & - && .226 & 3384 & .501 & - & - \\
				DistMult \citep{distmult}	& .241 & 254 & .419 & .263 & .155 && .43 & 5110 & .49  & .44 & .39 \\
				ComplEx	\citep{complex}		& .247 & 339 & .428 & .275 & .158 && .44  & 5261 & .51  & .46 & .41 \\
				R-GCN \citep{r_gcn}		& .248 & -   & .417 & & .151 && -    & -    & -    & & -  \\
				KBGAN \citep{kbgan}		& .278 & -   & .458 & & -    && .214 & -    & .472 & - & -\\
				ConvE \citep{conve}		& .325 & 244 & .501 & .356 & .237 &&  .43 & 4187 & .52  & .44 & .40  \\
				ConvKB \citep{convkb} & .243 & 311 & .421 & .371 & .155 && .249 & 3324 & .524 & .417 & .057 \\
				SACN \citep{sacn} 		& .35  & -   & .54  & .39 & .26  && .47  & -    & .54 & .48 & .43 \\
				HypER \citep{hyper} 	& .341 & 250 & .520 & .376 & .252 && .465 & 5798 & .522 & .477 & .436 \\
				RotatE \citep{rotate}		& .338 & \textbf{177} & .533 & .375 & .241 && .476 & \textbf{3340} & \textbf{.571} & .492 & .428 \\
				ConvR \citep{convr} 			& .350 & - & .528 & .385 & .261 && .475 & - & .537 & .489 & \textbf{.443} \\
				VR-GCN \citep{vrgcn} 	& .248 & - & .432 & .272 & .159 && - & - & - & - & - \\
				
				\midrule 
				\textsc{CompGCN} (Proposed Method)		& \textbf{.355} & 197 & \textbf{.535} & \textbf{.390} & \textbf{.264} &&  \textbf{.479} & 3533 & .546  & \textbf{.494} & \textbf{.443}  \\
				\bottomrule
				\addlinespace
			\end{tabular}
		}
		\caption{\label{compgcn_tbl:link_pred} \small Link prediction performance of \textsc{CompGCN} and several recent models on FB15k-237 and WN18RR datasets. The results of all the baseline methods are taken directly from the previous papers. We find that \textsc{CompGCN} outperforms all the existing methods on $4$ out of $5$ metrics on FB15k-237 and $3$ out of $5$ metrics on WN18RR. Please refer to Section \ref{compgcn_sec:results_link} for more details. } 
	\end{small}
\end{table*}

\subsection{Results}
\label{compgcn_sec:results}


%
%

In this section, we attempt to answer the following questions.
\begin{itemize}[itemsep=1pt,topsep=2pt,parsep=0pt,partopsep=0pt]
	\item[Q1.] How does \textsc{CompGCN} perform on link prediction compared to existing methods? (\ref{compgcn_sec:results_link})
	\item[Q2.] What is the effect of using different GCN encoders and choice of the compositional operator in \textsc{CompGCN} on link prediction performance? (\ref{compgcn_sec:results_link})
	\item[Q3.] Does \textsc{CompGCN} scale with the number of relations in the graph? (\ref{compgcn_sec:results_basis})
	\item[Q4.] How does \textsc{CompGCN} perform on node and graph classification tasks? (\ref{compgcn_sec:res_node_nmt})
\end{itemize}

\subsubsection{Performance Comparison on Link Prediction}
\label{compgcn_sec:results_link}

In this section, we evaluate the performance of \textsc{CompGCN} and the baseline methods listed in Section \ref{compgcn_sec:exp_baselines} on link prediction task. The results on \datafbn{} and \datawnn{} datasets are presented in Table \ref{compgcn_tbl:link_pred}. The scores of baseline methods are taken directly from the previous papers \citep{rotate,kbgan,sacn,hyper,convr,vrgcn}. However, for ConvKB, we generate the results using the corrected evaluation code\footnote{https://github.com/KnowledgeBaseCompleter/eval-ConvKB}. Overall, we find that \textsc{CompGCN} outperforms all the existing methods in $4$ out of $5$ metrics on \datafbn{} and in $3$ out of $5$ metrics on \datawnn{} dataset. We note that the best performing baseline RotatE uses rotation operation in complex domain. The same operation can be utilized in a complex variant of our proposed method to improve its performance further. We defer this as future work.


\begin{table*}[!t]
	\centering
	\resizebox{\columnwidth}{!}{
		
		\begin{tabular}{m{13em}ccccccccccc}
			\toprule
			
			\multirow{1}{*}{\textbf{Scoring Function (=X)}} $\bm{\rightarrow}$ & \multicolumn{3}{c}{\textbf{TransE}} && \multicolumn{3}{c}{\textbf{DistMult}} && \multicolumn{3}{c}{\textbf{ConvE}} \\ 
			\cmidrule(r){2-4}  \cmidrule(r){6-8} \cmidrule(r){10-12}
			\textbf{Methods} $\bm{\downarrow}$ & MRR & MR & H@10 && MRR & MR & H@10 && MRR & MR & H@10 \\
			\midrule
			X	& 0.294	& 357	& 0.465	&& 0.241	& 354	& 0.419	&& 0.325	& 244	& 0.501 \\
			X + D-GCN & 0.299	& 351 & 0.469	&& 0.321	& 225 & 0.497	&&  0.344 & 200 & 0.524\\	
			X + R-GCN & 0.281	& 325	& 0.443	&& 0.324	& 230	& 0.499	&& 0.342	& 197	& 0.524 \\
			X + W-GCN & 0.267	& 1520	& 0.444	&& 0.324	& 229	& 0.504	&& 0.344	& 201	& 0.525 \\
			\midrule
			X + \textsc{CompGCN} (Sub)	& 0.335	& \textbf{194}	& 0.514	&& 0.336	& 231	& 0.513	&& 0.352	& 199	& 0.530 \\
			X + \textsc{CompGCN} (Mult)	& \textbf{0.337}	& 233	& 0.515	&& \textbf{0.338	}& \textbf{200}	& \textbf{0.518	}&& 0.353	& 216	& 0.532 \\
			X + \textsc{CompGCN} (Corr)	& 0.336	& 214	& \textbf{0.518}	&& 0.335	& 227	& 0.514	&& \boxed{\textbf{0.355}} & 197	& \boxed{\textbf{0.535}} \\
			\midrule
			X + \textsc{CompGCN} ($\m{B} = 50$) & 0.330 & 203 & 0.502 && 0.333 & 210 & 0.512 && 0.350 & \boxed{\textbf{193}} & 0.530 \\
			\bottomrule
		\end{tabular}
	}
	\caption{\label{compgcn_tbl:link_pred_results} \small Performance on link prediction task evaluated on FB15k-237 dataset. X + M (Y) denotes that method M is used for obtaining entity (and relation) embeddings with X as the scoring function. In the case of \textsc{CompGCN}, Y denotes the composition operator used. $\m{B}$ indicates the number of relational basis vectors used. Overall, we find that \textsc{CompGCN} outperforms all the existing methods across different scoring functions. ConvE + \textsc{CompGCN} (Corr) gives the best performance across all settings (highlighted using \boxed{\cdot}). Please refer to Section \ref{compgcn_sec:results_link} for more details.}   
	
\end{table*}

\begin{figure*}[t]
	\centering
	\begin{minipage}{.485\textwidth}
		\centering
		\includegraphics[width=1.\linewidth]{./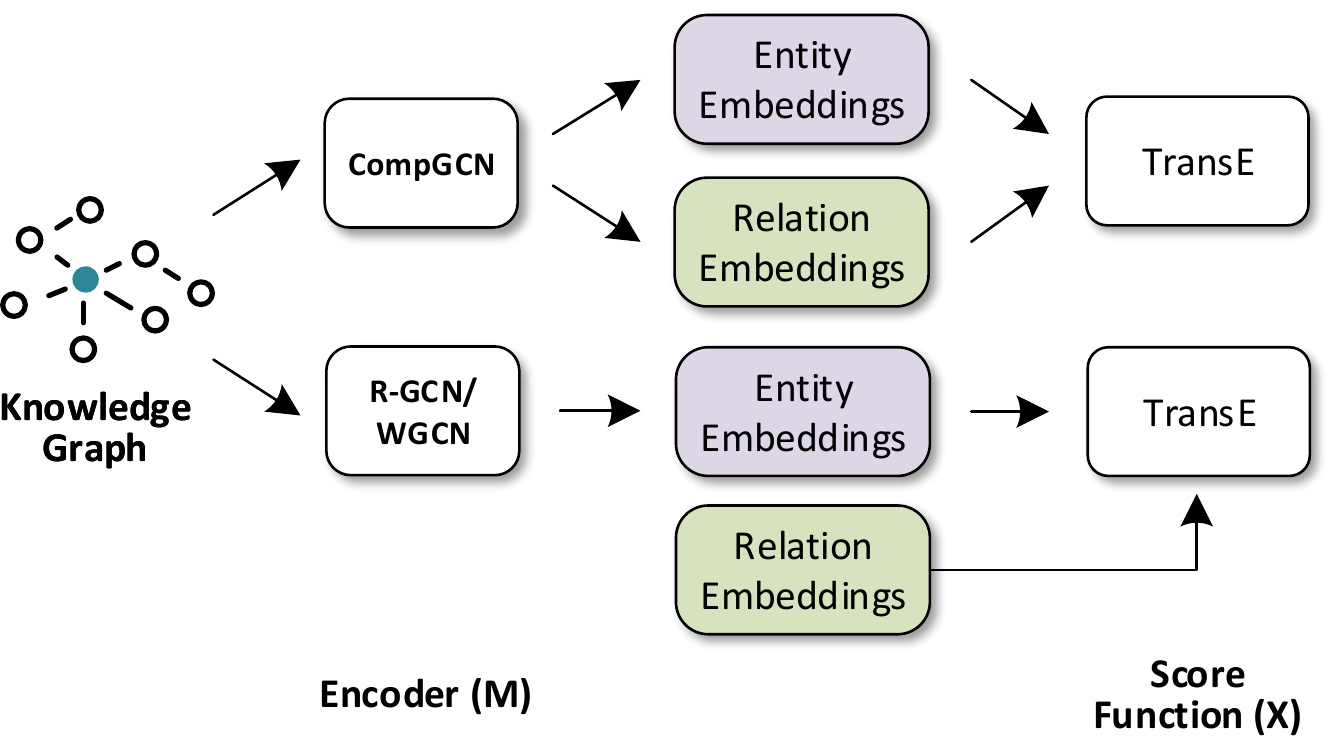}
		\caption{\label{compgcn_fig:link_pred} \small Knowledge Graph link prediction with \textsc{CompGCN} and other methods. \textsc{CompGCN} generates both entity and relation embedding as opposed to just entity embeddings for other models. For more details, please refer to Section \ref{compgcn_sec:results_gcn_encoder}}
	\end{minipage} \quad
	\begin{minipage}{0.47\textwidth}
		\centering
		\includegraphics[width=.99\linewidth]{./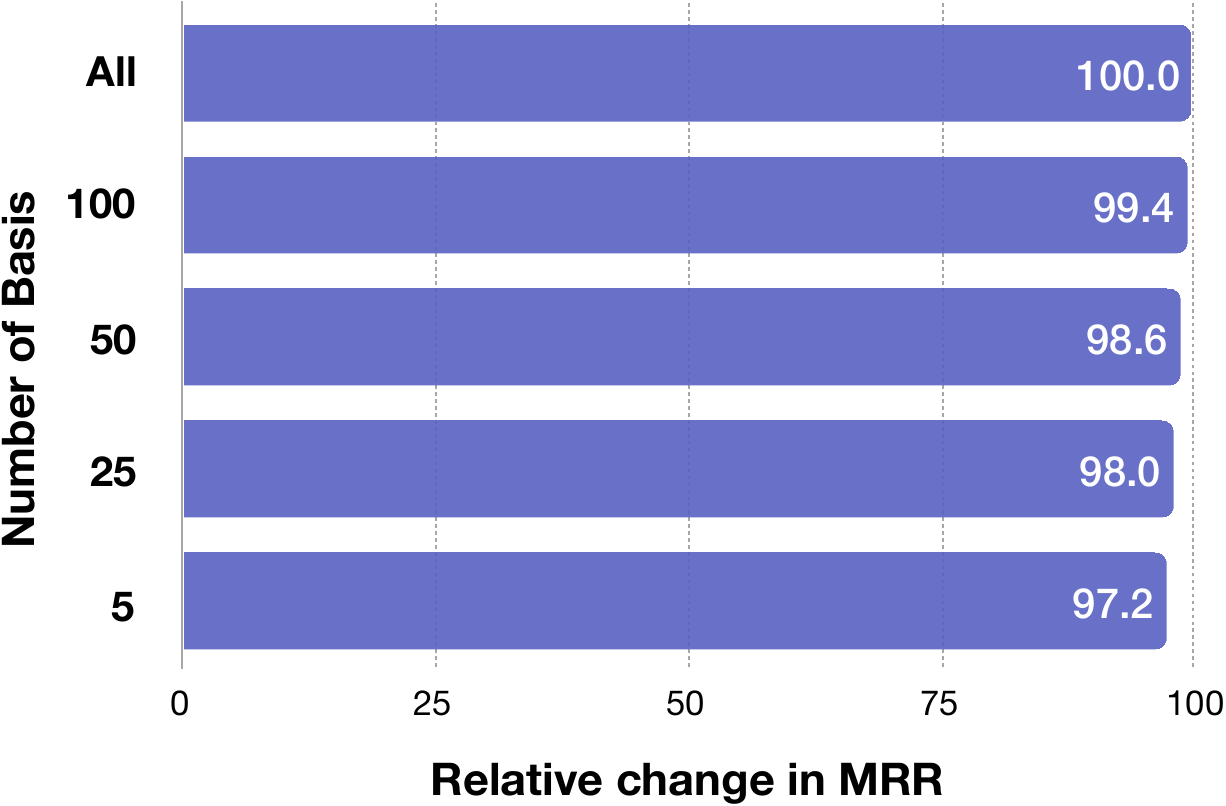}
		\caption{\label{compgcn_fig:basis_plot} \small Performance of \textsc{CompGCN} with different number of relation basis vectors on link prediction task. We report the relative change in MRR on FB15k-237 dataset. Overall, \textsc{CompGCN} gives comparable performance even with limited parameters. Refer to Section \ref{compgcn_sec:results_basis} for details.}
	\end{minipage} 
\end{figure*}

\subsubsection{Comparison of Different GCN Encoders on Link Prediction Performance}
\label{compgcn_sec:results_gcn_encoder}

Next, we evaluate the effect of using different GCN methods as an encoder along with a representative score function (shown in Figure \ref{compgcn_fig:link_pred}) from each category: TransE (translational), DistMult (semantic-based), and ConvE (neural network-based).  In our results, \textbf{X + M (Y)}  denotes that method \textbf{M} is used for obtaining entity embeddings (and relation embeddings in the case of \textsc{CompGCN}) with \textbf{X} as the score function as depicted in Figure \ref{compgcn_fig:link_pred}. \textbf{Y} denotes the composition operator in the case of \textsc{CompGCN}. We evaluate \textsc{CompGCN} on three non-parametric composition operators inspired from TransE \citep{transe}, DistMult \citep{distmult}, and HolE \citep{hole} defined as
\begin{itemize}[itemsep=1pt,topsep=1pt,parsep=0pt,partopsep=0pt,leftmargin=5.5mm]
	\item \textbf{Subtraction (Sub):} $\phi(\bm{e}_s, \bm{e}_r) = \bm{e}_s - \bm{e}_r.$
	\item \textbf{Multiplication (Mult):} $\phi(\bm{e}_s, \bm{e}_r) = \bm{e}_s * \bm{e}_r.$
	\item \textbf{Circular-correlation (Corr):} $\phi(\bm{e}_s, \bm{e}_r) \text{=} \bm{e}_s \star \bm{e}_r$
\end{itemize}

The overall results are summarized in Table \ref{compgcn_tbl:link_pred_results}. Similar to \citet{r_gcn}, we find that utilizing Graph Convolutional based method as encoder gives a substantial improvement in performance for most types of score functions. We observe that although all the baseline GCN methods lead to some degradation with TransE score function, no such behavior is observed for \textsc{CompGCN}. On average, \textsc{CompGCN} obtains around $6$\%, $4$\% and $3$\% relative increase in MRR with TransE, DistMult, and ConvE objective respectively compared to the best performing baseline. The superior performance of \textsc{CompGCN} can be attributed to the fact that it learns both entity and relation embeddings jointly thus providing more expressive power in learned representations. Overall, we find that \textsc{CompGCN} with ConvE (highlighted using \boxed{\cdot}) is the best performing method for link prediction.

\noindent \textbf{Effect of composition Operator:} 
The results on link prediction with different composition operators are presented in Table \ref{compgcn_tbl:link_pred_results}. 
We find that with DistMult score function, multiplication operator (Mult) gives the best performance while with ConvE, circular-correlation surpasses all other operators. Overall, we observe that more complex operators like circular-correlation outperform or perform comparably to simpler operators such as subtraction.

\subsubsection{Parameter Efficiency of \textsc{CompGCN}} 
\label{compgcn_sec:results_basis}

In this section, we analyze the performance of \textsc{CompGCN} on changing the number of relation basis vectors ($\m{B}$) as defined in Section \ref{compgcn_sec:details}. For this, we evaluate the best performing model for link prediction (ConvE + \textsc{CompGCN} (Corr)) with a variable number of basis vectors. The results are summarized in Figure \ref{compgcn_fig:basis_plot}. We find that our model performance improves with the increasing number of basis vectors. We note that with $\m{B}=100$, the performance of the model becomes comparable to the case where all relations have their individual embeddings. 
In Table \ref{compgcn_tbl:link_pred_results}, we report the results for the best performing model across all score function with $\m{B}$ set to $50$. We note that the parameter-efficient variant also gives a comparable performance and outperforms the baselines in all settings. This demonstrates that \textsc{CompGCN} is scalable with the increasing number of relations and thus can be utilized for larger graphs effectively.

\subsubsection{Evaluation on Node and Graph Classification}
\label{compgcn_sec:res_node_nmt}

In this section, we evaluate \textsc{CompGCN} on node and graph classification tasks on datasets as described in Section \ref{compgcn_sec:exp_tasks}. The experimental results are presented in Table \ref{compgcn_tbl:nodeclass_results}. For node classification task, we report accuracy on test split provided by \cite{node_class_splits}, whereas for graph classification, following \cite{graph_datasets} and \cite{gin}, we report the average and standard deviation of validation accuracies across the 10 folds cross-validation.  Overall, we find that \textsc{CompGCN} outperforms all the baseline methods on node classification and gives a comparable performance on graph classification task. This demonstrates the effectiveness of incorporating relations using \textsc{CompGCN} over the existing GCN based models. On node classification, compared to the best performing baseline, we obtain an average improvement of $3$\% across both datasets while on graph classification, we obtain an improvement of $3$\% on PCT dataset.

\begin{table}[!t]
	\centering
	\small
	\resizebox{\textwidth}{!}{
		\begin{tabular}{lcc}
			\toprule
			& \textbf{MUTAG (Node)} & \textbf{AM} \\
			\midrule
			Feat$^*$		& 77.9 & 66.7 \\ 
			WL$^*$				& 80.9 & 87.4 \\
			RDF2Vec$^*$	  & 67.2 & 88.3 \\
			R-GCN$^*$		& 73.2	&89.3 \\
			SynGCN	& 74.8 $\pm$ 5.5	& 86.2 $\pm$ 1.9	\\
			WGCN	& 77.9 $\pm$ 3.2	& 90.2 $\pm$ 0.9	\\
			\midrule
			\textsc{CompGCN}	& \textbf{85.3 $\pm$ 1.2}	& \textbf{90.6 $\pm$ 0.2}	\\
			\bottomrule
			\addlinespace
		\end{tabular}
		\quad
		\begin{tabular}{lcc}
			\toprule
			& \textbf{MUTAG (Graph)} & \textbf{PCT} \\
			\midrule
			\sc{PachySAN}$^\dagger$			& \textbf{92.6 $\pm$ 4.2} & 60.0 $\pm$ 4.8 \\ 
			DGCNN$^\dagger$				& 85.8 & 58.6 \\
			GIN$^\dagger$	  	& 89.4 $\pm$ 4.7 & 64.6 $\pm$ 7.0 \\
			R-GCN & 82.3 $\pm$ 9.2	& 67.8 $\pm$ 13.2 \\
			SynGCN & 79.3 $\pm$ 10.3	& 69.4 $\pm$ 11.5 \\
			WGCN & 78.9 $\pm$ 12.0	& 67.3 $\pm$ 12.0 \\
			\midrule
			\textsc{CompGCN}	& 89.0 $\pm$ 11.1	& \textbf{71.6 $\pm$ 12.0}	\\
			\bottomrule
			\addlinespace
		\end{tabular}
	}
	\caption{\label{compgcn_tbl:nodeclass_results} \small Performance comparison on node classification (\textbf{Left}) and graph classification (\textbf{Right}) tasks.  $*$ and $\dagger$ indicate that results are directly taken from \cite{r_gcn} and \cite{gin} respectively. Overall, we find that \textsc{CompGCN} either outperforms or performs comparably compared to the existing methods. Please refer to Section \ref{compgcn_sec:res_node_nmt} for more details.}  \vspace{-5mm}
\end{table}



\section{Conclusion}
\label{compgcn_sec:conclusion}

In this chapter, we proposed \textsc{CompGCN}, a novel Graph Convolutional based framework for multi-relational graphs which leverages a variety of composition operators from Knowledge Graph embedding techniques to jointly embed nodes and relations in a  graph. Our method generalizes several existing multi-relational GCN methods. Moreover, our method alleviates the problem of over-parameterization by sharing relation embeddings across layers and using basis decomposition. Through extensive experiments on knowledge graph link prediction, node classification, and graph classification tasks, we showed the effectiveness of \textsc{CompGCN} over existing GCN methods and demonstrated its scalability with increasing number of relations.

\chapter{Conclusion and Future Work}
\label{chap:conclusion}

The first part of the thesis explored two different ways of addressing the sparsity problem in Knowledge Graphs (KG). We begin with alleviating it through canonicalization, which involves identifying and merging identical nodes in a given KG. For this, we proposed CESI (Canonicalization using Embeddings and Side Information), a novel method for canonicalizing Open KBs using learned embeddings and side information.  CESI solves a joint objective to learn noun and relation phrase embeddings while utilizing relevant side information in a principled manner. These learned embeddings are then clustered together to obtain canonicalized noun and relation phrase clusters. The second approach is Relation Extraction which involves using unstructured text for extracting facts to densify KGs. We propose RESIDE, a novel neural network-based model which makes principled use of relevant side information, such as entity type and relation alias, from Knowledge Base, for improving distantly supervised relation extraction. RESIDE employs Graph Convolution Networks for encoding syntactic information of sentences and is robust to limited side information. 
 Through experimental results, we demonstrated the effectiveness of all the proposed methods on several benchmark datasets.

In the second part of the thesis, we showed the effectiveness of utilizing recently proposed Graph Convolutional Networks (GCNs) for exploiting several graph structures in NLP. We demonstrated their effectiveness on two important problems: Document Timestamping and learning word embeddings. For the first problem, we proposed NeuralDater, a GCN based method for document dating which exploits syntactic and temporal structures in the document in a principled way. To the best of our knowledge, this is the first application of deep learning techniques for the problem of document dating. For the second problem, we proposed SynGCN, a graph convolution-based approach which utilizes syntactic context for learning word representations. SynGCN overcomes the problem of vocabulary explosion and outperforms state-of-the-art word embedding approaches on several intrinsic and extrinsic tasks. We also propose SemGCN, a framework for jointly incorporating diverse semantic information in pre-trained word embeddings. The combination of SynGCN and SemGCN gives the best overall performance. 

Finally, in the third part of the thesis, we addressed two significant limitations of existing Graph Convolutional Network-based methods. First, we address the issue of noisy representation of hub nodes in GCNs because of neighborhood aggregation scheme which puts no constraint on the influence neighborhood of a node. For this, we present ConfGCN, confidence based Graph Convolutional Network, which estimates label scores along with their confidences jointly in a GCN based setting. In ConfGCN, the influence of one node on another during aggregation is determined using the estimated confidences and label scores, thus inducing anisotropic behavior to GCN. Apart from this also extend existing GCN models for multi-relational graphs, which are a more pervasive class of graphs for modeling data. We propose CompGCN, a novel Graph Convolutional based framework for multi-relational graphs which leverages a variety of composition operators from Knowledge Graph embedding techniques to embed nodes and relations in a graph jointly. Our method generalizes several existing multi-relational GCN methods. Moreover, our method alleviates the problem of over-parameterization by sharing relation embeddings across layers and using basis decomposition. Through extensive experiments on several tasks, we demonstrated the effectiveness of our proposed solutions. 

\textbf{Future Works:} An exciting future direction for addressing sparsity in Knowledge Graphs is to utilize contextualized embedding methods such as ELMo \cite{elmo_paper} and BERT \cite{bert} instead of GloVe for obtaining the representation of noun and relation phrases in Open KG canonicalization. Contextualized embedding approaches have been shown to give superior performance than standard word2vec based embeddings for a variety of tasks. However, utilizing them for canonicalization has not been explored so far. Another future work includes extending our proposed model RESIDE to utilize more types of side information. KGs are a vast storehouse of facts, among which a lot of them can be utilized for further improving RE. 
For further enhancing performance on document timestamping problem, one can explore utilizing knowledge graphs which contain information about world events. Exploiting world knowledge for the task is also more close to how a human would approach the problem. In Chapter \ref{chap_wordgcn}, we explored extending word2vec using Graph Convolutional Networks. However, recently, contextualized embedding methods have been shown to be much more effective. Thus, one can use similar ideas to extend models such as BERT using GCNs for utilizing syntactic information for learning better representation. Finally, existing GCN methods suffer from several other limitations, which have been highlighted by \citet{Xu2018} in his work.


\bibliographystyle{plainnat}
\bibliography{references.bib}
\end{document}